\documentclass{article} 

\usepackage{iclr2027_conference,times}

\usepackage[T1]{fontenc}
\usepackage[utf8]{inputenc}
\usepackage{amsmath, amssymb, amsthm, mathtools}
\usepackage{bm}
\usepackage{booktabs}
\usepackage{tabularx}
\usepackage{graphicx}
\usepackage{enumitem}
\usepackage{xcolor}
\usepackage{aliascnt}
\usepackage{hyperref}
\usepackage{url}
\usepackage[nameinlink,capitalize,noabbrev]{cleveref}

\graphicspath{%
  {./}%
  {./figures/}%
  {./qualitative_images/}%
  {./qualitative_images/pickscore/}%
  {./qualitative_images/geneval/}%
}

\hypersetup{
  colorlinks=true,
  linkcolor=blue!60!black,
  urlcolor=blue!60!black,
  citecolor=blue!60!black,
}

\numberwithin{equation}{section}

\newaliascnt{proposition}{theorem}
\newtheorem{proposition}[proposition]{Proposition}
\aliascntresetthe{proposition}

\newaliascnt{lemma}{theorem}

\aliascntresetthe{lemma}

\newaliascnt{corollary}{theorem}
\newtheorem{corollary}[corollary]{Corollary}
\aliascntresetthe{corollary}

\theoremstyle{definition}
\newaliascnt{definition}{theorem}

\aliascntresetthe{definition}

\newaliascnt{assumption}{theorem}
\newtheorem{assumption}[assumption]{Assumption}
\aliascntresetthe{assumption}

\newaliascnt{remark}{theorem}

\aliascntresetthe{remark}

\crefname{theorem}{Theorem}{Theorems}
\Crefname{theorem}{Theorem}{Theorems}
\crefname{proposition}{Proposition}{Propositions}
\Crefname{proposition}{Proposition}{Propositions}
\crefname{lemma}{Lemma}{Lemmas}
\Crefname{lemma}{Lemma}{Lemmas}
\crefname{corollary}{Corollary}{Corollaries}
\Crefname{corollary}{Corollary}{Corollaries}
\crefname{definition}{Definition}{Definitions}
\Crefname{definition}{Definition}{Definitions}
\crefname{assumption}{Assumption}{Assumptions}
\Crefname{assumption}{Assumption}{Assumptions}
\crefname{remark}{Remark}{Remarks}
\Crefname{remark}{Remark}{Remarks}

\newcommand{\E}{\mathbb{E}}

\newcommand{\dd}{\mathrm{d}}
\newcommand{\old}{\mathrm{old}}
\newcommand{\data}{\mathrm{data}}
\newcommand{\clip}{\mathrm{clip}}

\newcommand{\Var}{\mathrm{Var}}
\newcommand{\med}{\mathrm{median}}
\newcommand{\norm}[1]{\left\lVert #1\right\rVert}
\newcommand{\inner}[2]{\left\langle #1, #2 \right\rangle}

\title{$\lambda$-Controlled GRPO: Turning Flow-Matching Ratio Instability into a Budgeted Resource}

\author{
Yufeng Wang \thanks{These authors contributed equally to this work.}\\
Stony Brook University \\
\And
Parivesh Priye $^*$\thanks{Corresponding Author.} \\
Georgia Institute of Technology \\
\And
Meeshawn Marathe\\
Rivian and Volkswagen Group Technologies \\
\And
Ramit Pahwa \\
Rivian and Volkswagen Group Technologies \\}

\iclrfinalcopy
\makeatletter
\g@addto@macro\@maketitle{\lhead{}}
\makeatother

\begin{document}
\maketitle

\begin{abstract}
Reinforcement learning is increasingly used to align image generators with reward signals, and Flow-GRPO recently extended this paradigm to flow-matching models by treating the denoising sampler as a stochastic policy that can be optimized from reward feedback. Training in this setting is unstable in a way specific to multi-step denoising: the policy update changes systematically across denoising steps, with importance ratios drifting below one, becoming increasingly dispersed, clipping at different rates, and leaving fewer usable samples late in training. Prior work treats these effects as separate failure modes and addresses each with a hand-tuned stabilizer. We show instead that they arise from a single per-step quantity, which we call \emph{path variance}. This quantity is determined exactly by the sampler's Gaussian transition kernel and can be estimated cheaply during training. This reframes instability as a resource that can be measured and budgeted rather than a collection of symptoms to repair. Our method, $\lambda$-Controlled GRPO, calibrates importance-ratio behavior from this predicted law rather than from noisy empirical statistics, and allocates gradient effort across denoising steps according to their predicted cost. The two scales governing the update are fixed by standard policy choices rather than introduced as free tuning parameters. On a text-to-image model under two reward settings, rendering difficult target text scored by optical character recognition and matching human preferences scored by a preference model, $\lambda$-Controlled GRPO improves both text accuracy and preference reward over the strongest empirical stabilizer. It also keeps late-step path variance within its intended budget, precisely where the baseline systematically overshoots. The result is a Flow-GRPO update calibrated by its own transition law rather than stabilized after instability appears.
\end{abstract}

\section{Introduction}
\label{sec:intro}

Reinforcement learning has become a standard tool for aligning generative models with reward signals, and has recently extended from language models to the flow-matching and rectified-flow models used for image generation. The key step, Flow-GRPO, converts a deterministic flow-matching sampler into a stochastic policy whose denoising trajectory has a tractable per-step likelihood, enabling policy-gradient updates from the group relative policy optimization (GRPO) family. This makes online reward optimization possible for image generators, but introduces a stability problem specific to multi-step denoising: the policy importance ratio, which measures how much more or less likely the new policy is to take a sampled step than the old one, behaves very differently across denoising timesteps. Recent work reports that ratios shift below one at some steps, become highly dispersed at others, clip asymmetrically in the surrogate objective, and eventually leave few effective samples late in training. These effects have largely been treated as separate empirical symptoms and addressed with hand-designed stabilizers, such as timestep-wise normalization of observed ratios and manually chosen gradient reweighting. What is missing is a common explanation for why these pathologies emerge and which quantity an update should control. We identify such a quantity directly from the finite-grid Gaussian transition kernel used by the sampler. A single per-step scalar, which we call the \emph{path variance},
$$
\lambda_k = \left\lVert \sigma_k^{-1}(b_\theta-b_\old)(X_k,t_k)\right\rVert^2\Delta t_k,
$$
determines the conditional mean and variance of the step log-ratio, and therefore predicts its drift, dispersion, left shift, clipping imbalance, and loss of usable samples within one law. The same scalar is not only diagnostic. It defines the natural cost of a policy update at each denoising step and therefore provides a principled quantity for allocating gradient effort across the trajectory.

A practical subtlety arises because real implementations typically store a mean-reduced log-probability rather than the full coordinate sum. This reduction splits the path-variance effect into two related quantities: a centering term that governs the mean shift of the log-ratio and a variance term that governs its spread. Keeping these roles separate is essential for the theoretical prediction to match the statistics measured during training.

This paper makes four contributions. First, we derive the finite-grid importance-ratio law and its mean-reduced corollary, obtaining an exact per-step prediction for the log-ratio statistics from the path variance. Second, we give a cheap online estimator of this quantity using values already produced by the sampler. Third, we turn the prediction into an algorithm, $\lambda$-Controlled GRPO, which replaces empirical ratio normalization with analytic calibration and allocates gradient effort according to predicted path-variance cost, with both governing scales determined by standard policy choices rather than introduced as free hyperparameters. Fourth, across two reward regimes for a text-to-image model, we show that the analytic update improves task quality over the strongest empirical stabilizer while keeping late-step path-variance spend near its intended budget. Throughout, we first verify that the estimated scalar predicts the measured log-ratio statistics before modifying the optimizer, so the resulting intervention is grounded in a confirmed transition-law prediction rather than an assumed training heuristic.

\section{Related work}
\label{sec:related-work}

\textbf{Reinforcement learning for flow-matching models.} Reinforcement learning has become a standard tool for aligning generative models with rewards, beginning in language with reward models learned from human preferences~\citep{christiano2017,instructgpt} and policy or preference optimization methods including trust-region and proximal policy optimization~\citep{trpo,ppo}, group relative policy optimization (GRPO)~\citep{grpo,deepseekr1}, and direct preference optimization~\citep{dpo}. The same principle was extended to image diffusion models by treating denoising as a sequential decision process optimized with policy gradients~\citep{ddpo,dpok}, differentiating through the sampler under differentiable rewards~\citep{draft,alignprop,reno}, and adapting preference optimization to diffusion models~\citep{diffusiondpo}. Flow-GRPO~\citep{flowgrpo} and DanceGRPO~\citep{dancegrpo} bring GRPO to flow-matching image generators by converting the deterministic sampler into a stochastic policy through an ODE-to-SDE construction with Gaussian transition likelihoods and online updates, making sampled denoising trajectories admit tractable per-step importance ratios. This is the policy class studied here, and it raises a central question: how much policy movement can each denoising step absorb before its likelihood ratio becomes unstable? The closest empirical antecedent is GRPO-Guard~\citep{grpoguard}, which reports the same ratio pathologies studied here, including implicit over-optimization, ratios shifted below one, timestep inconsistency, and clipping imbalance, and stabilizes training through empirical ratio normalization and regulated clipping. That work establishes the failure mode, but its correction is not derived from the transition kernel. It recenters measured ratios after instability appears rather than identifying a quantity that predicts their mean shift and variance before the update, or determines how much gradient a timestep should receive per unit cost. Our finite-grid analysis (\cref{sec:finite-grid-law}) supplies that quantity: conditioned on the current state, the step log-ratio is Gaussian with mean $-\lambda_k/2$ and variance $\lambda_k$. A single $\lambda_k$ therefore determines the typical left shift, ratio dispersion, timestep inconsistency, and clipping imbalance together. Empirical normalization and gradient reweighting can thus be viewed as symptom-level corrections to a quantity that is predictable from the transition law itself.

\textbf{Timestep, sampling, and surrogate-ratio variants.} A rapidly growing literature improves Flow-GRPO from complementary directions. Several methods modify where, when, or how denoising trajectories are sampled or weighted. TempFlow-GRPO~\citep{tempflowgrpo} emphasizes timestep allocation, DenseGRPO~\citep{densegrpo} densifies reward feedback across denoising steps, Smart-GRPO~\citep{smartgrpo} searches over noise perturbations, BranchGRPO~\citep{branchgrpo} introduces structured branching, MixGRPO~\citep{mixgrpo} changes the ODE-to-SDE mixture, CPS~\citep{cps} studies coefficient-preserving sampling, Pref-GRPO~\citep{prefgrpo} replaces pointwise rewards with pairwise preferences to improve optimization stability, OP-GRPO~\citep{opgrpo} targets replay and off-policy efficiency, and Flow-Factory~\citep{flowfactory} organizes the broader design space. These works demonstrate that sampling and credit assignment strongly affect optimization, but none derives a per-step likelihood-ratio resource from the Gaussian transition kernel and uses that quantity to calibrate both ratio statistics and gradient allocation. A $\lambda$ budget is therefore largely complementary to these approaches. FPO~\citep{fpo} takes a different route by replacing exact likelihood computation with a flow-matching surrogate ratio, which is useful when exact transition likelihoods are unavailable. In Flow-GRPO, however, the Euler--Maruyama Gaussian transition is explicit, allowing us to analyze the exact finite-grid ratio and use its path-variance scalar as a predictive optimization budget. Prior work therefore provides the policy class, empirical stabilizers, sampling strategies, and surrogate objectives; our contribution is to connect these phenomena through a single analytically tractable quantity that both predicts per-step ratio behavior and determines how gradient effort should be allocated.

\section{Preliminaries and methods}
\label{sec:prelim-method}

\subsection{Preliminaries}
\label{sec:background}

\textit{Flow matching.} A rectified-flow model~\citep{rectified,flowmatching,stochinterp}
transports data to noise along the linear interpolant
\begin{equation}
  x_t=(1-t)x_0+t x_1,
  \qquad x_0\sim p_{\data},\quad x_1\sim\mathcal N(0,I),\quad t\in[0,1],
  \label{eq:interpolant}
\end{equation}
and learns a velocity field $v_\theta(x,t)$ that predicts $x_1-x_0$. Deterministic
generation then solves the ordinary differential equation
$\dd X_t=v_\theta(X_t,t)\,\dd t$. Rectified flow and closely related diffusion
models~\citep{ddpm,songsde} underlie modern text-to-image generators such as
SD3~\citep{sd3} and FLUX~\citep{flux}, whose transformer backbones follow the
diffusion-transformer line~\citep{dit,sit}.

\textit{Stochastic policy.} GRPO requires stochastic trajectories with tractable
transition densities. Flow-GRPO~\citep{flowgrpo} therefore replaces the deterministic
ODE with a marginal-preserving stochastic differential equation,
$\dd X_t=b_\theta(X_t,t)\,\dd t+\sigma_t\,\dd W_t$, whose drift $b_\theta$ is fixed
through Fokker--Planck matching to the flow-matching score and whose diffusion
$\sigma_t$ follows a prescribed schedule, both derived in \cref{app:prelim}. Training
discretizes this process on a grid $0=t_0<\dots<t_K=1$ with step sizes
$\Delta t_k$, yielding the Euler--Maruyama Gaussian transition kernel
\begin{equation}
  K_\theta(x_{k+1}\mid x_k) =
  \mathcal N\!\bigl(
    x_k+b_\theta(x_k,t_k)\Delta t_k,\;
    \sigma_k\sigma_k^\top\Delta t_k
  \bigr),
  \label{eq:em-kernel}
\end{equation}
which is the central object of our analysis.

\textit{GRPO objective.} For a prompt $c$, a group of $G$ trajectories is sampled
from the old policy and evaluated by a reward model, producing group-relative
advantages $\widehat A^i$~\citep{grpo}, with rewards standardized within each group.
Each sampled transition $x_k^i\to x_{k+1}^i$ carries the per-step importance ratio
\begin{equation}
  r_k^i(\theta) =
  \frac{K_\theta(x_{k+1}^i\mid x_k^i,c)}
       {K_{\old}(x_{k+1}^i\mid x_k^i,c)},
  \label{eq:per-step-ratio}
\end{equation}
and the clipped surrogate~\citep{ppo} optimizes
$\min\!\bigl(r_k^i\widehat A^i,\ \clip(r_k^i,1-\epsilon,1+\epsilon)\widehat A^i\bigr)$
with clipping radius $\epsilon$. The next subsection characterizes how $r_k$ varies
across denoising timesteps.

\subsection{The finite-grid path-ratio law}
\label{sec:finite-grid-law}

The behavior of $r_k$ is determined exactly by the transition kernel under one
assumption satisfied by the fixed-diffusion schedules used in Flow-GRPO.

\begin{assumption}[Fixed diffusion during the policy update]
\label{ass:fixed-diffusion}
At each timestep, the new and old transition kernels share the same diffusion
$\sigma_k\sigma_k^\top$ and differ only in their drift.
\end{assumption}

A direct Gaussian calculation (\cref{app:proofs}) then yields the one-step
log-ratio in closed form.

\begin{proposition}[One-step Gaussian ratio identity]
\label{prop:one-step-ratio}
Let $h_k=\sigma_k^{-1}(b_\theta-b_\old)(X_k,t_k)$ and
$\Delta W_k\sim\mathcal N(0,\Delta t_k I)$. Then
\begin{equation}
  \xi_k \coloneqq
  \log\frac{K_\theta(X_{k+1}\mid X_k)}{K_\old(X_{k+1}\mid X_k)}
  = \inner{h_k}{\Delta W_k}-\tfrac12\norm{h_k}^2\Delta t_k.
  \label{eq:one-step-log-ratio}
\end{equation}
\end{proposition}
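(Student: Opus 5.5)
The plan is a direct Gaussian log-density computation, carried out under \cref{ass:fixed-diffusion} and with the sample $X_{k+1}$ drawn from the old kernel.

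\textbf{Setup.} Write $\Sigma_k=\sigma_k\sigma_k^\top\Delta t_k$ for the shared covariance. Take $\sigma_k$ invertible, which is implicit in the definition of $h_k$. Abbreviate $b_\theta=b_\theta(X_k,t_k)$ and $b_\old=b_\old(X_k,t_k)$. Sampling under the old policy means
\[
X_{k+1}=X_k+b_\old\Delta t_k+\sigma_k\Delta W_k .
\]

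\textbf{Log-ratio of the two kernels.} Because the covariances agree, the normalizing constants and log-determinants cancel. Setting $z=X_{k+1}-X_k$, only the quadratic forms remain:
\[
\xi_k=-\tfrac12(z-b_\theta\Delta t_k)^\top\Sigma_k^{-1}(z-b_\theta\Delta t_k)+\tfrac12(z-b_\old\Delta t_k)^\top\Sigma_k^{-1}(z-b_\old\Delta t_k).
\]

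\textbf{Substitution.} Substitute $z-b_\old\Delta t_k=\sigma_k\Delta W_k$ and
\[
z-b_\theta\Delta t_k=\sigma_k\Delta W_k-(b_\theta-b_\old)\Delta t_k=\sigma_k\bigl(\Delta W_k-h_k\Delta t_k\bigr),
\]
where the last equality uses $h_k=\sigma_k^{-1}(b_\theta-b_\old)$. Then use
\[
\sigma_k^\top\Sigma_k^{-1}\sigma_k=\Delta t_k^{-1}I .
\]
This reduces both quadratic forms to Euclidean norms scaled by $\Delta t_k^{-1}$:
\[
\xi_k=-\frac{1}{2\Delta t_k}\norm{\Delta W_k-h_k\Delta t_k}^2+\frac{1}{2\Delta t_k}\norm{\Delta W_k}^2 .
\]
Expanding the first square, the $\norm{\Delta W_k}^2$ terms cancel. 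What remains is the cross term $\inner{h_k}{\Delta W_k}$ and the term $-\tfrac12\norm{h_k}^2\Delta t_k$, which is \eqref{eq:one-step-log-ratio}.

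\textbf{Main obstacle.} The only delicate point is bookkeeping: $\Delta W_k$ must be the noise realized under the old kernel. If the sample came from the new kernel, the drift term would flip sign to $+\tfrac12\norm{h_k}^2\Delta t_k$. I will state explicitly that trajectories are generated by $K_\old$, as in the GRPO objective.

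A secondary point arises if $\sigma_k$ is not square but only $\sigma_k\sigma_k^\top$ is invertible. In that case I would replace $\sigma_k^{-1}$ by a symmetric square root of $\sigma_k\sigma_k^\top$ and reparametrize the noise accordingly. The scalar-diffusion schedules of Flow-GRPO make this moot.
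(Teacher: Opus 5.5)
Your proposal is correct and follows the paper's proof: cancel the determinant terms of the two equal-covariance Gaussians, expand the difference of quadratic forms, and substitute $X_{k+1}-X_k-b_\old\Delta t_k=\sigma_k\Delta W_k$, so the sample is drawn from the old kernel. You also state two steps the paper leaves implicit, namely the old-kernel sampling convention and the identity $\sigma_k^\top(\sigma_k\sigma_k^\top\Delta t_k)^{-1}\sigma_k=\Delta t_k^{-1}I$.
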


The squared coefficient defines the quantity that drives the paper, the
\emph{per-step path variance}
\begin{equation}
  \lambda_k
  \coloneqq
  \norm{h_k}^2\Delta t_k
  =
  \bigl\lVert
  \sigma_k^{-1}(b_\theta-b_\old)(X_k,t_k)
  \bigr\rVert^2\Delta t_k,
  \label{eq:lambda-def}
\end{equation}
which measures the squared policy displacement at timestep $k$ in units of the
transition noise, equivalently the local Mahalanobis distance between the new and old
drifts. This single scalar determines the complete conditional ratio law.

\begin{proposition}[Conditional lognormal ratio law]
\label{prop:lognormal}
Conditioned on $\mathcal F_{t_k}$, the one-step log-ratio is Gaussian,
\begin{equation}
  \xi_k\mid\mathcal F_{t_k}
  \sim
  \mathcal N\!\left(-\tfrac{\lambda_k}{2},\lambda_k\right),
  \label{eq:xi-normal}
\end{equation}
so, with $r_k=\exp(\xi_k)$,
\begin{equation}
  \E[r_k\mid\mathcal F_{t_k}]=1,
  \quad
  \med(r_k\mid\mathcal F_{t_k})=e^{-\lambda_k/2},
  \quad
  \Var(r_k\mid\mathcal F_{t_k})=e^{\lambda_k}-1.
  \label{eq:ratio-moments}
\end{equation}
\end{proposition}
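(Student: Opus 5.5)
The plan is to start from the closed form of \cref{prop:one-step-ratio} and exploit the measurability structure of the Euler--Maruyama step. Conditioned on $\mathcal F_{t_k}$, the state $X_k$ is known, so $h_k=\sigma_k^{-1}(b_\theta-b_\old)(X_k,t_k)$ is a fixed vector and $\lambda_k=\norm{h_k}^2\Delta t_k$ is a fixed scalar. The increment $\Delta W_k$ is independent of $\mathcal F_{t_k}$, with law $\mathcal N(0,\Delta t_k I)$. Here $\Delta W_k$ means the increment driving the \emph{old} kernel, since the samples $X_{k+1}$ are drawn from $K_\old$.

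I will check this convention first, because it is the one genuinely delicate point. The sign $-\tfrac12\norm{h_k}^2\Delta t_k$ in \cref{eq:one-step-log-ratio} is exactly what arises when we write $X_{k+1}=X_k+b_\old\Delta t_k+\sigma_k\Delta W_k$ and expand the two quadratic forms. Under \cref{ass:fixed-diffusion} the covariances agree, so the normalizing constants cancel, and the difference of exponents is
\[
\frac{1}{2\Delta t_k}\Bigl(\norm{\sigma_k^{-1}\sigma_k\Delta W_k}^2-\norm{\Delta W_k-h_k\Delta t_k}^2\Bigr)
=\inner{h_k}{\Delta W_k}-\tfrac12\norm{h_k}^2\Delta t_k .
\]
This confirms that the expectations below must be taken under the old policy, consistent with $r_k$ being an importance ratio over samples from $K_\old$.

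Given that, the conditional law follows directly. The term $\inner{h_k}{\Delta W_k}$ is a fixed linear functional of a centered Gaussian vector. It is therefore Gaussian with mean $0$ and variance $\norm{h_k}^2\Delta t_k=\lambda_k$. Adding the $\mathcal F_{t_k}$-measurable constant $-\lambda_k/2$ gives \cref{eq:xi-normal}.

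The ratio moments then come from lognormal calculus, using the Gaussian moment generating function $\E[e^{s\xi}]=\exp(s\mu+s^2v/2)$ with $\mu=-\lambda_k/2$ and $v=\lambda_k$.
\begin{itemize}
\item Taking $s=1$ gives $\E[r_k\mid\mathcal F_{t_k}]=e^{-\lambda_k/2+\lambda_k/2}=1$.
\item Taking $s=2$ gives $\E[r_k^2\mid\mathcal F_{t_k}]=e^{-\lambda_k+2\lambda_k}=e^{\lambda_k}$, so $\Var(r_k\mid\mathcal F_{t_k})=e^{\lambda_k}-1$.
\item Since $\exp$ is strictly increasing, the median of $r_k$ is the exponential of the median of $\xi_k$. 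A Gaussian's median equals its mean, so $\med(r_k\mid\mathcal F_{t_k})=e^{-\lambda_k/2}$.
\end{itemize}

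The degenerate case $\lambda_k=0$ must be stated separately. There $\xi_k\equiv0$ is a point mass, read as $\mathcal N(0,0)$, and all three formulas remain valid. Beyond that and the sampling-measure convention, there is no real obstacle; the rest is routine Gaussian computation.
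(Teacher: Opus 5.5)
Your proposal is correct and follows the paper's proof. Both condition on $\mathcal F_{t_k}$ so that $h_k$ is fixed and $\inner{h_k}{\Delta W_k}\sim\mathcal N(0,\lambda_k)$, then read off the ratio moments from lognormal calculus; your moment-generating-function step at $s=1,2$ is the same computation as the paper's $\Var(e^Z)=e^{2\mu+s^2}(e^{s^2}-1)$, and your remarks that sampling is under $K_\old$ and on the degenerate case $\lambda_k=0$ make explicit what the paper leaves implicit.
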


A single $\lambda_k$ therefore controls the negative log-ratio drift, the
left-shifted typical ratio, the ratio variance, the imbalance between the two PPO
clipping tails, and the loss of effective samples. Closed-form clipping-tail
expressions are given in \cref{app:proofs}. Empirical stabilizers from prior work
can therefore be viewed as correcting several symptoms of one analytically
predictable quantity.

One implementation detail is essential. Practical Flow-GRPO code stores a
mean-reduced log-probability rather than the full coordinate sum. Let
$a_{k,m}=(\mu_{\theta,k,m}-\mu_{\old,k,m})/s_{k,m}$ denote the normalized mean shift
for coordinate $m$. The mean-reduced log-ratio is
\begin{equation}
  \bar\xi_k =
  \frac1D\sum_{m=1}^D
  \Bigl(a_{k,m}\epsilon_{k,m}-\tfrac12 a_{k,m}^2\Bigr),
  \qquad
  \epsilon_{k,m}\stackrel{\rm iid}{\sim}\mathcal N(0,1),
  \label{eq:mean-reduced-logratio}
\end{equation}
and the reduction separates the path-variance effect into a centering scale and a
variance scale,
\begin{equation}
  \lambda_k^{\rm center} =
  \frac1D\sum_m a_{k,m}^2,
  \qquad
  \lambda_k^{\rm var} =
  \frac1{D^2}\sum_m a_{k,m}^2.
  \label{eq:lambda-center-var}
\end{equation}

\begin{proposition}[Reduced-log-prob ratio law]
\label{prop:reduced-logprob}
For the mean-reduced log-ratio in
\cref{eq:mean-reduced-logratio}, conditioned on $\mathcal F_{t_k}$,
$
  \E[\bar\xi_k]
  = -\tfrac12\lambda_k^{\rm center},
  \qquad
  \Var(\bar\xi_k)
  = \lambda_k^{\rm var}.
$
\end{proposition}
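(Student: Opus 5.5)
The plan is to treat \cref{eq:mean-reduced-logratio} as a sum of $D$ independent terms and compute its first two conditional moments term by term. The first step is to justify what is being conditioned on. The coefficients $a_{k,m}=(\mu_{\theta,k,m}-\mu_{\old,k,m})/s_{k,m}$ depend only on the state $X_k$, the timestep, and the fixed parameters $\theta$ and $\theta_\old$. Under \cref{ass:fixed-diffusion} the scales $s_{k,m}$ are shared by both kernels, so the $a_{k,m}$ are $\mathcal F_{t_k}$-measurable. The standardized noises $\epsilon_{k,m}=\Delta W_{k,m}/\sqrt{\Delta t_k}$ are i.i.d.\ $\mathcal N(0,1)$ and independent of $\mathcal F_{t_k}$, because the Euler--Maruyama increment is drawn fresh at step $k$.

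One point should be stated explicitly. The per-coordinate form in \cref{eq:mean-reduced-logratio} is the coordinatewise analogue of \cref{prop:one-step-ratio}. With a diagonal diffusion, the per-coordinate Gaussian log-density difference evaluated at the sampled $x_{k+1}$ is exactly $a_{k,m}\epsilon_{k,m}-\tfrac12 a_{k,m}^2$. This follows from the same completing-the-square calculation. The case I would take as given is the one where the trajectory was sampled from the old kernel, so that $\epsilon_{k,m}$ is the old-policy standardized noise.

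Next I would compute the mean. Conditional on $\mathcal F_{t_k}$, each $a_{k,m}$ is a constant and $\E[\epsilon_{k,m}]=0$. Linearity of conditional expectation then gives
\[
\E[\bar\xi_k\mid\mathcal F_{t_k}]=\frac1D\sum_m\Bigl(0-\tfrac12 a_{k,m}^2\Bigr)=-\tfrac12\lambda_k^{\rm center},
\]
which matches the definition in \cref{eq:lambda-center-var}.

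For the variance, the deterministic offset $-\tfrac12 a_{k,m}^2$ drops out, leaving
\[
\Var\Bigl(\tfrac1D\sum_m a_{k,m}\epsilon_{k,m}\Bigm|\mathcal F_{t_k}\Bigr).
\]
The $\epsilon_{k,m}$ are independent with unit variance, so all cross-covariances vanish. The variance therefore equals $\tfrac1{D^2}\sum_m a_{k,m}^2=\lambda_k^{\rm var}$.

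Conditionally, $\bar\xi_k$ is in fact Gaussian with exactly these moments, since it is a linear combination of independent Gaussians plus a constant. I would remark on this to connect the result with \cref{prop:lognormal}. Setting $D=1$, or using the full sum instead of the mean, recovers $\mathcal N(-\lambda_k/2,\lambda_k)$ with $\lambda_k=\sum_m a_{k,m}^2$.

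The main obstacle is not the moment algebra, which is routine. It is the bookkeeping showing that the coefficients are predictable and the noise is independent. This requires \cref{ass:fixed-diffusion}, so that the normalizations $s_{k,m}$ coincide across the two policies and no log-determinant term appears. It also requires a diagonal covariance $\sigma_k\sigma_k^\top\Delta t_k$ with entries $s_{k,m}^2$, so that the coordinates decouple. For a general non-diagonal $\sigma_k$, I would first whiten by $\sigma_k^{-1}$ and define the $a_{k,m}$ in the whitened coordinates. The same computation then goes through, with $\sum_m a_{k,m}^2=\lambda_k$.
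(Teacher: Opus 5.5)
Your proposal is correct and follows the paper's argument: the conditional mean comes from linearity with $\E[\epsilon_{k,m}]=0$, and the conditional variance comes from independence and unit variance of the $\epsilon_{k,m}$, giving $-\tfrac12\lambda_k^{\rm center}$ and $\lambda_k^{\rm var}$ exactly as in the paper's proof. Your extra bookkeeping (the $a_{k,m}$ being $\mathcal F_{t_k}$-measurable, the fresh noise being independent, and the diagonal-versus-whitened remark) makes explicit what the paper leaves implicit, without changing the route.
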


The two scales differ by the latent dimension. This explains why the predicted
mean-shift law remains pronounced while the raw variance curve can appear nearly flat,
and why centering and variance must be treated separately in the optimizer.

\subsection{$\lambda$-Controlled GRPO}
\label{sec:method}

Our method estimates the path variance online and treats it as a resource to budget
across denoising timesteps. It has two analytic components, and the constants governing
both are determined by standard policy choices rather than tuned as free
hyperparameters. Full derivations are given in
\cref{sec:lambdanorm-t,app:prelim}.

\textit{Estimating the scalar.}
\label{sec:estimate-lambda}
The transition means
$\mu_{\theta,k}=X_k+b_\theta\Delta t_k$ and $\mu_{\old,k}$ are already available
during sampling, so the reduction-matched estimators add negligible cost:
\begin{equation}
  \widehat\lambda_{i,k}^{\rm center}
  =
  \frac1D\sum_{m=1}^D
  \Bigl(
  \frac{\mu_{\theta,k,m}^i-\mu_{\old,k,m}^i}{s_{k,m}}
  \Bigr)^2,
  \qquad
  \widehat\lambda_{i,k}^{\rm var}
  =
  \frac1{D^2}\sum_{m=1}^D
  \Bigl(
  \frac{\mu_{\theta,k,m}^i-\mu_{\old,k,m}^i}{s_{k,m}}
  \Bigr)^2,
  \label{eq:lambda-estimator-reduced-mean-shift}
\end{equation}
where $s_{k,m}=\sigma_{k,m}\sqrt{\Delta t_k}$.
\Cref{app:prelim} gives the endpoint scaling of $\lambda_k$ under the Flow-GRPO
schedule.

\textit{LambdaNorm-T: analytic ratio calibration.}
\label{sec:lambda-gradnorm-main}
Because exponentiating the mean-reduced log-ratio does not preserve the mean-one
property of the full likelihood ratio, we calibrate it from its predicted law instead
of reusing it directly. We center and standardize
$y_{i,k}=\bar\xi_{i,k}$ using the predicted moments and then rescale to a target
variance $\nu$:
\begin{equation}
  z_{i,k}
  = \frac{
  y_{i,k}+\tfrac12\widehat\lambda_{i,k}^{\rm center}
  }{
  \sqrt{\widehat\lambda_{i,k}^{\rm var}+\varepsilon_\lambda}
  },
  \qquad
  \widetilde r_{i,k}
  = \exp\!\bigl(
  \sqrt{\nu}\,z_{i,k}-\tfrac12\nu
  \bigr).
  \label{eq:main-lambdanorm-ratio}
\end{equation}
If $z_{i,k}$ is standard normal, then
$\E[\widetilde r_{i,k}]\approx1$, so $\widetilde r_{i,k}$ becomes a calibrated,
mean-one surrogate ratio. The scale $\nu$ is not introduced as an independent tuning
parameter. PPO already specifies a clipping radius $\epsilon$, so we set
$\nu=\epsilon^2$.

\textit{Damp-only $\lambda$-gradient weighting.}
We next allocate gradient effort according to predicted path-variance cost. A timestep
is damped only when its predicted center exceeds a budget $\tau$:
\begin{equation}
  w_k^\lambda = \min\!\Bigl(1,\;
  \sqrt{\tau/(\widehat\lambda_k^{\rm center}+\varepsilon_\lambda)}\Bigr),
  \qquad
  \mathcal L=-\,\E_{i,k}\!\left[w_k^\lambda\min\!\Bigl(\widetilde r_{i,k}\widehat A_i,\,\clip(\widetilde r_{i,k},1-\epsilon,1+\epsilon)\widehat A_i\Bigr)\right],
  \label{eq:damp-only-lambda-weight}
\end{equation}
where $\widehat\lambda_k^{\rm center}$ is the batch mean of the per-sample estimates
in \cref{eq:lambda-estimator-reduced-mean-shift}, and all estimates are detached so
timesteps already within budget are never amplified. The budget is determined by a
target retained effective-sample fraction $q$~\citep{ess} over the late timesteps
$\mathcal K_{\rm late}$. Under the lognormal path-weight heuristic,
$
  \tau = -\frac{\log q}{|\mathcal K_{\rm late}|}
$
(\cref{sec:lambdanorm-t}). We report realized spend using the mean late-step path
variance
\begin{equation}
  \Lambda_{\rm late} =
  \frac{1}{|\mathcal K_{\rm late}|}
  \sum_{k\in\mathcal K_{\rm late}}
  \widehat\lambda_k^{\rm center},
  \label{eq:Lambda-late}
\end{equation}
which the budget $\tau$ is intended to control. The complete intervention is therefore
specified by two interpretable policy choices, the PPO clipping radius $\epsilon$ and
the retained fraction $q$. We use $\epsilon=0.2$ and $q=0.95$ for OCR, and the
tighter setting $\epsilon=0.1$ and $q=0.99$ for the denser PickScore reward.

\section{Experimental findings}
\label{sec:experimental-findings}

The experiments answer three questions in sequence. The tiny-SD3 diagnostics first test whether the finite-grid law predicts the log-ratio statistics observed during training. The primal-dual experiments then test whether path variance is not only descriptive but also controllable. Finally, the SD3.5 experiments evaluate whether the analytic method, LambdaNorm-T with damp-only $\lambda$-gradient weighting, improves task quality over the empirical stabilizer under matched budgets. The primary held-out comparison is reported in \cref{tab:sd35-hard-ocr-val-selected}.

\subsection{Tiny-SD3 law audit}

The tiny-SD3 audit tests whether the finite-grid path-variance law appears in a working Flow-GRPO implementation. We train a small public SD3 debug pipeline with a compressibility reward and average every statistic over three independent runs from fresh random initializations (configuration in \cref{sec:appendix-config}). Here, $\log r_k$ denotes the mean-reduced log-ratio $\bar\xi_k$ from \cref{eq:mean-reduced-logratio}, which is the quantity stored by the implementation. In the baseline run, the drift-based estimator $\widehat\lambda_k^{\rm center}$ closely tracks the negative log-ratio drift, as predicted by the reduced law $\widehat\lambda_k^{\rm center}\approx-2\,\widehat\E[\log r_k]$. At the final timestep, for example, $\widehat\lambda_7^{\rm center}=0.0054$ predicts a drift of $-0.0027$, exactly matching the measured $-0.0027$. The corresponding $\widehat\lambda_k^{\rm var}$ also lies on the correct scale for the observed variance (\cref{fig:phase0-baseline}). This agreement provides the first direct evidence that the derived law predicts statistics measurable during training.

\begin{figure}[htbp]
  \centering
  \includegraphics[width=0.96\textwidth]{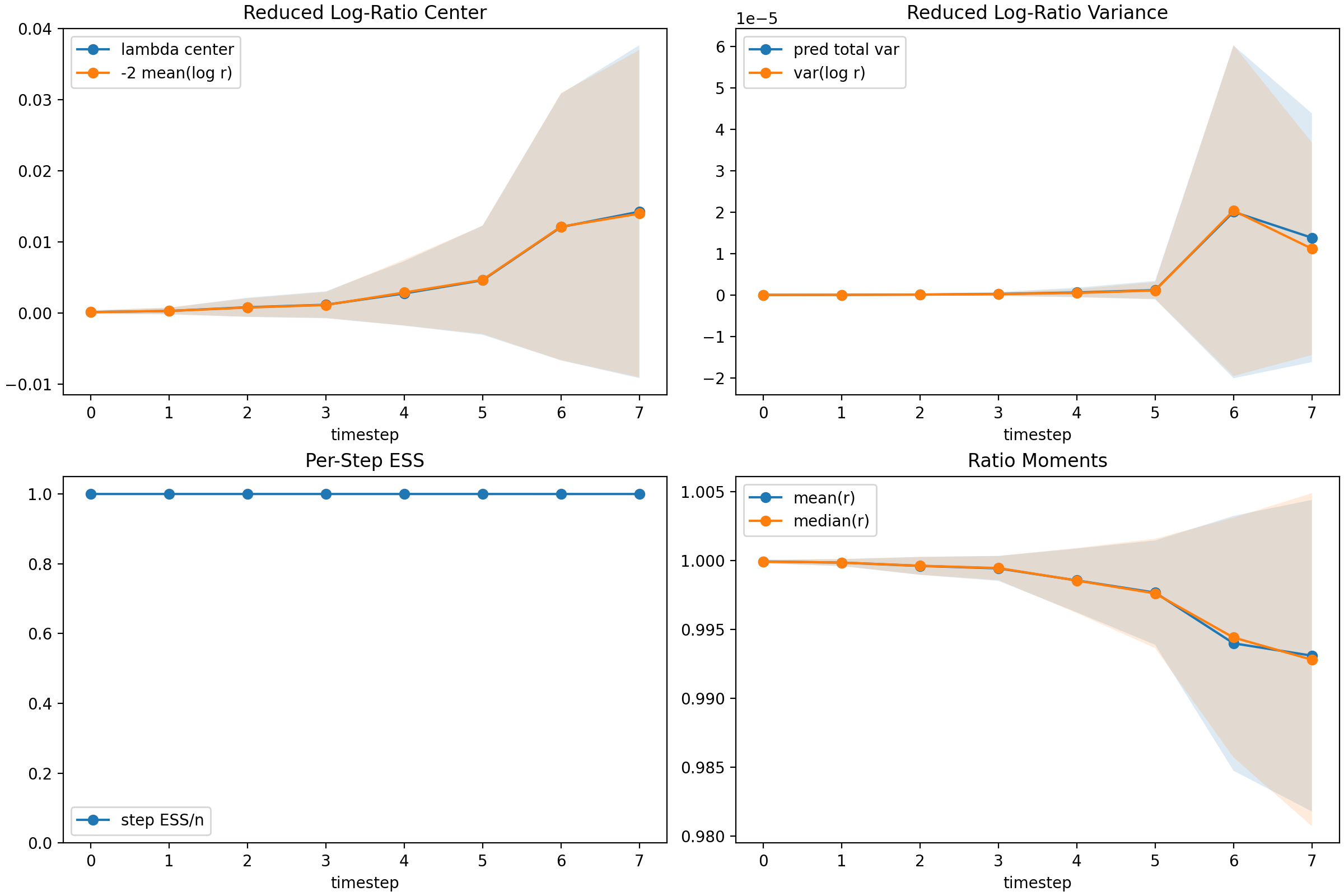}
  \caption{Baseline diagnostics from the tiny-SD3 law audit. The key comparisons are
  between $\widehat\lambda_k^{\rm center}$ and
  $-2\widehat{\E}[\log r_k]$, and between
  $\widehat\lambda_k^{\rm var}$ and $\widehat\Var(\log r_k)$ under the
  reduced-log-prob convention.}
  \label{fig:phase0-baseline}
\end{figure}

We next add $\lambda$-$z$ clipping to test whether controlling the predicted quantity also controls the measured pathology. Averaged over three seeds and the final $50$ updates, it reduces the late-step $\widehat\lambda_7^{\rm center}$ from $0.0054$ to $1.0\times10^{-4}$ and the log-ratio drift from $-0.0027$ to $-4\times10^{-5}$, while leaving reward essentially unchanged (\cref{fig:phase0-baseline-zclip}; full statistics in \cref{tab:tiny-sd3-zclip}). Because this experiment uses a small debug model whose effective sample size remains near one, it is a law audit rather than a performance result. Its purpose is to establish that $\widehat\lambda_k^{\rm center}$ and $\widehat\lambda_k^{\rm var}$ act as accurate online predictors of the reduced log-ratio center and variance, and that directly reducing $\lambda$ suppresses the corresponding pathology.

\begin{figure}[htbp]
  \centering
  \includegraphics[width=0.96\textwidth]{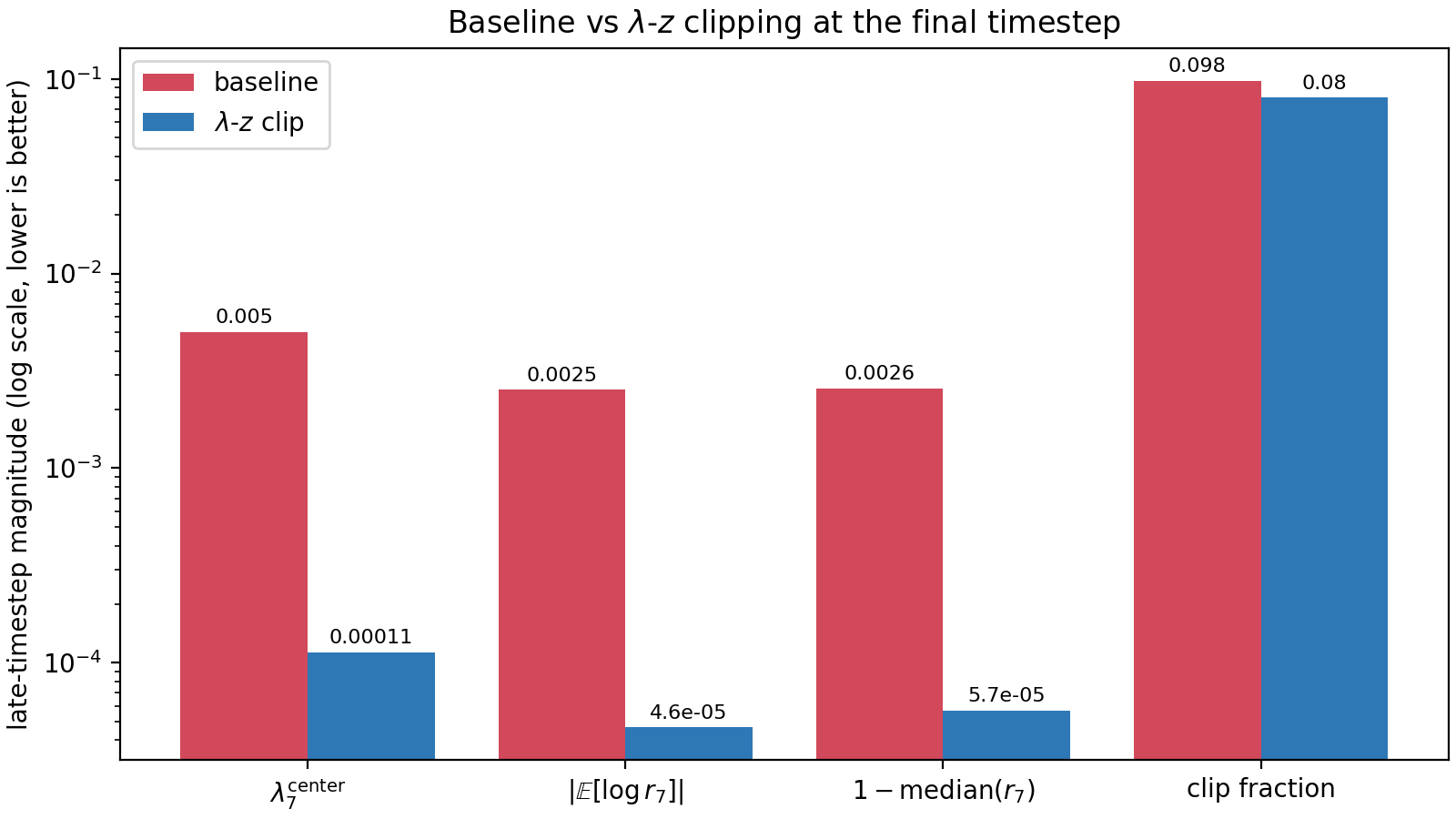}
  \caption{Baseline versus $\lambda$-$z$ clipping in the tiny-SD3 law audit.
  $\lambda$-$z$ clipping sharply reduces the late-step
  $\widehat\lambda_k^{\rm center}$, $\widehat\lambda_k^{\rm var}$, log-ratio mean
  drift, reduced log-ratio variance, and typical-ratio depression.}
  \label{fig:phase0-baseline-zclip}
\end{figure}

\begin{table}[htbp]
  \centering
  \small
  \begin{tabular}{@{}lrr@{}}
  \toprule
  Metric at timestep 7 & Baseline & $\lambda$-$z$ clip \\
  \midrule
  $\widehat\lambda_7^{\rm center}$ & $0.005425$ & $0.000100$ \\
  $\widehat\lambda_7^{\rm var}$ & $3.31{\times}10^{-7}$ & $6.12{\times}10^{-9}$ \\
  $\widehat{\E}[\log r_7]$ & $-0.002693$ & $-0.000041$ \\
  $\widehat\Var(\log r_7)$ & $7.32{\times}10^{-7}$ & $8.83{\times}10^{-9}$ \\
  Median ratio & $0.997264$ & $0.999949$ \\
  Per-step ESS / $n$ & $0.9999993$ & $0.99999996$ \\
  Path ESS / $n$ & $0.999996$ & $1.000000$ \\
  Clip fraction & $0.1169$ & $0.0730$ \\
  Reward average & $-0.008114$ & $-0.008238$ \\
  \bottomrule
  \end{tabular}
  \caption{Tiny-SD3 law audit at the final timestep. Statistics are averaged over
  three seeds and the final $50$ updates. $\lambda$-$z$ clipping reduces predicted
  path variance, log-ratio drift and variance, and typical-ratio depression while
  leaving reward essentially unchanged.}
  \label{tab:tiny-sd3-zclip}
\end{table}

\subsection{Tiny-SD3 stress result: $\lambda$ is controllable}
\label{sec:stress-pilot-dual}

We next test whether $\lambda$ is actionable rather than merely descriptive. As a control ablation, we impose a primal-dual $\lambda$ budget (\cref{sec:candidate-extensions}). Across three paired tiny-SD3 stress seeds, the controller improves the compressibility reward on every seed while reducing the average final-step $\widehat\lambda_7^{\rm center}$ from $0.998$ to $0.166$. This result establishes that the derived scalar can steer optimization. The main method below removes the dual machinery and instead uses the analytic prediction directly through LambdaNorm-T and damp-only weighting.

\subsection{SD3.5 hard-OCR results}
\label{sec:real-sd35-pilots}

Our main real-model experiment fine-tunes SD3.5 Medium with low-rank adaptation (LoRA) on the hard-OCR task. We use the same OCR dataset and reward definition as the public Flow-GRPO and GRPO-Guard configurations, but with smaller update statistics, namely one GPU, smaller groups, fewer batches per epoch, and shorter training. Checkpoints are saved at fixed training-step intervals and selected using a held-out protocol. A random $256$ OCR test prompts are used only for checkpoint selection, while the remaining $762$ prompts are reserved for final evaluation. Alongside the Flow-GRPO-style OCR reward, we report character error rate (CER), defined as the fraction of characters that must be edited to recover the target, together with exact-match and substring-match rates. The latter require the complete target or a contiguous portion of it to appear in the generated image. PickScore~\citep{pickscore} and CLIP score~\citep{clipscore} serve as optional image-quality guardrails. The primal-dual controller from \cref{sec:candidate-extensions} is included only as a baseline, denoted ``Dual,'' alongside empirical RatioNorm.

Under this protocol, empirical RatioNorm selects checkpoint $40$, before the late-training pathology emerges, while the analytic method and primal-dual baseline select checkpoint $80$. On the held-out $762$-prompt test split, LambdaNorm-T with damp-only $\lambda$ weighting outperforms both baselines on OCR reward, CER, exact match, and substring match (\cref{tab:sd35-hard-ocr-val-selected}).

\begin{table}[h]
  \centering
  \begin{tabular}{@{}lrrrrrr@{}}
    \toprule
    Method
      & Ckpt.
      & OCR $\uparrow$
      & CER $\downarrow$
      & Exact $\uparrow$
      & Substr. $\uparrow$
      & $\Lambda_{\rm late}$ $\downarrow$ \\
    \midrule
    RatioNorm
      & $40$ & $0.5632$ & $0.4368$
      & $12.99\%$ & $19.29\%$ & $0.0$ \\
    Dual
      & $80$ & $0.5501$ & $0.4499$
      & $12.07\%$ & $16.80\%$ & $0.000217$ \\
    $\lambda$-Controlled GRPO
      & $80$ & $\mathbf{0.5831}$ & $\mathbf{0.4169}$
      & $\mathbf{14.44\%}$ & $\mathbf{22.83\%}$ & $0.000332$ \\
    \bottomrule
  \end{tabular}
  \caption{SD3.5 hard-OCR held-out results on $762$ test prompts. The analytic
  LambdaNorm-T plus damp-only $\lambda$-gradient method improves over the
  early-stopped empirical RatioNorm checkpoint by $0.0199$ OCR reward,
  $1.44$ exact-match points, and $3.54$ substring-match points, while keeping
  realized late-step spend $\Lambda_{\rm late}$ far below the target budget
  $\tau=-\log(0.95)/5\approx0.01026$.}
  \label{tab:sd35-hard-ocr-val-selected}
\end{table}

On the $256$-prompt validation split used for checkpoint selection, the same method,
LambdaNorm-T with damp-only weighting at $\epsilon=0.2$ and $q=0.95$, improves the
stricter transcript-success metrics much more strongly than the mean OCR reward
(\cref{tab:sd35-hard-ocr-lambda-controlled-val}).

\begin{table}[htbp]
  \centering
  \begin{tabular}{@{}lrrrrrr@{}}
    \toprule
    Method
      & Ckpt.
      & OCR $\uparrow$
      & CER $\downarrow$
      & Exact $\uparrow$
      & Substr. $\uparrow$
      & $\Lambda_{\rm late}$ $\downarrow$ \\
    \midrule
    RatioNorm
      & $40$ & $\mathbf{0.5744}$ & $\mathbf{0.4256}$
      & $10.94\%$ & $14.84\%$ & $\mathbf{0.0}$ \\
    Dual
      & $80$ & $0.5643$ & $0.4357$
      & $8.59\%$ & $14.06\%$ & $0.000217$ \\
    $\lambda$-Controlled GRPO
      & $80$ & $0.5657$ & $0.4343$
      & $\mathbf{15.23\%}$ & $\mathbf{23.83\%}$ & $0.000332$ \\
    \bottomrule
  \end{tabular}
  \caption{Random $256$-prompt SD3.5 hard-OCR validation comparison. The analytic
  $\lambda$-controlled method substantially improves exact-match and substring-match
  rates over empirical RatioNorm and the primal-dual ablation while keeping predicted
  late-step path-variance spend near zero. Relative to RatioNorm, exact match improves
  by approximately $39\%$ and substring match by approximately $61\%$.}
  \label{tab:sd35-hard-ocr-lambda-controlled-val}
\end{table}

The SD3.5 hard-OCR experiment provides the strongest evidence for the main algorithmic claim. Replacing empirical RatioNorm with the finite-grid Gaussian calibration of LambdaNorm-T and shaping gradient allocation using the predicted $\lambda$ cost in \cref{eq:damp-only-lambda-weight} improves the metrics that directly test whether the requested text appears in the image. On the held-out $762$-prompt test split, $\lambda$-Controlled GRPO gains $+0.0199$ OCR reward, $+1.44$ exact-match points, and $+3.54$ substring-match points over RatioNorm. On the $256$-prompt validation split, the strict-metric gains are larger still, approximately $39\%$ for exact match and $61\%$ for substring match. The law audit already established that $\widehat\lambda_k^{\rm center}$ predicts reduced log-ratio drift. Here, using that same prediction to calibrate ratios and allocate gradient effort translates into improved task performance, while the primal-dual runs serve only as a controllability ablation.

\subsection{SD3.5 PickScore results}
\label{sec:real-sd35-pickscore}

To test whether the benefit is specific to OCR, we repeat the protocol with PickScore~\citep{pickscore}, a human-preference reward model in the same broad family as ImageReward~\citep{imagereward} and HPSv2~\citep{hpsv2}. Because this reward is denser, we use the tighter operating point $\epsilon=0.1$ and $q=0.99$, giving $\nu=0.01$ and $\tau\approx2.0\times10^{-3}$. These values are fixed before training. We compare against empirical RatioNorm on the identical training pipeline. Every saved checkpoint at steps $40$, $60$, and $80$ is evaluated on a random $256$-prompt validation split. Each method's best checkpoint is selected by mean PickScore and then evaluated once on the disjoint $762$-prompt test complement.

\begin{table}[htbp]
  \centering
  \small
  \begin{tabular}{@{}lrrrl@{}}
    \toprule
    Method
      & Ckpt.
      & Mean PickScore $\uparrow$
      & Std.
      & Prompt-level comparison \\
    \midrule
    RatioNorm
      & $40$ & $0.8283$ & $0.057$ & --- \\
    $\lambda$-Controlled GRPO
      & $80$ & $\mathbf{0.8358}$ & $0.056$ & wins $467/762$ ($61.3\%$) \\
    \midrule
    \multicolumn{5}{l}{%
      Paired $\Delta$ (Ours $-$ RatioNorm): $+0.0075$,
      $95\%$ bootstrap CI $[+0.0059,\,+0.0092]$, $t\!\approx\!9.1$%
    } \\
    \bottomrule
  \end{tabular}
  \caption{Disjoint $762$-prompt SD3.5 PickScore test split. Representative
  checkpoints are selected using \cref{tab:sd35-pickscore-val} and evaluated once
  on the held-out complement. The paired $95\%$ bootstrap confidence interval excludes
  zero, and $\lambda$-Controlled GRPO beats RatioNorm on $61.3\%$ of prompts with no
  ties. RatioNorm selects checkpoint $40$ on validation; its later checkpoints perform
  worse and are therefore not evaluated on the test split.}
  \label{tab:sd35-pickscore-test}
\end{table}

The PickScore experiment reproduces the same optimization mechanism observed in OCR. Empirical RatioNorm has no explicit path-variance budget, so its late-step $\widehat\lambda^{\rm center}$ rises silently to approximately $3.4\tau$ by step $80$, while mean PickScore falls from $0.8325$ at step $40$ to $0.8126$ at step $80$. By contrast, $\lambda$-Controlled GRPO keeps late-step path variance near $\tau$ and improves monotonically over training. On the held-out test split, the paired bootstrap separates the $+0.0075$ gain from zero across $762$ prompts, and the analytic method wins on a clear majority of prompt-level comparisons.

\subsection{Generalization to a second backbone: FLUX}
\label{sec:flux}

To test whether the analytic method transfers to another flow-matching backbone, we additionally use a broader campaign which trains both SD3.5 and FLUX.1-dev on three tasks, PickScore, hard-OCR, and the GenEval compositional-generation benchmark~\citep{geneval}, and compares $\lambda$-Controlled GRPO against empirical RatioNorm and an additional untuned Flow-GRPO baseline. The held-out FLUX results are summarized in \cref{tab:flux}, and matched held-out GenEval image comparisons for the SD3.5 runs of this campaign are shown in \cref{sec:qualitative-gallery-geneval}.

\begin{table}[htbp]
  \centering
  \small
  \begin{tabular}{@{}lrrr@{\hskip 2em}rrr@{}}
    \toprule
    & \multicolumn{3}{c}{Primary metric $\uparrow$}
    & \multicolumn{3}{c}{Training clip fraction $\downarrow$} \\
    \cmidrule(lr){2-4}\cmidrule(l){5-7}
    FLUX task
      & Flow-GRPO & RatioNorm & $\lambda$-Ctrl
      & Flow-GRPO & RatioNorm & $\lambda$-Ctrl \\
    \midrule
    PickScore  & $\mathbf{0.8430}$ & $0.8383$ & $0.8429$
               & $0.961$ & $0.888$ & $\mathbf{0.089}$ \\
    OCR reward & $0.6434$ & $0.6453$ & $\mathbf{0.6606}$
               & $0.965$ & $0.964$ & $\mathbf{0.060}$ \\
    GenEval    & $0.6545$ & $\mathbf{0.6569}$ & $0.6538$
               & $0.990$ & $0.915$ & $\mathbf{0.100}$ \\
    \bottomrule
  \end{tabular}
  \caption{Held-out FLUX.1-dev results from the broader single-seed $8$-GPU A6000
  reproduction campaign. We compare $\lambda$-Controlled GRPO with untuned Flow-GRPO
  and empirical RatioNorm on PickScore ($762$ paired prompts), hard-OCR ($762$ prompts),
  and GenEval ($2{,}148$ images). The primary metric is mean PickScore, mean OCR reward,
  or mean GenEval composite score. Training clip fraction is averaged over the $50$
  updates preceding the selected checkpoint. The analytic method improves over RatioNorm
  on PickScore and OCR and reduces clipping by approximately an order of magnitude on all
  three tasks; on GenEval, the primary metrics are statistically indistinguishable.}
  \label{tab:flux}
\end{table}

The FLUX experiments extend the SD3.5 findings to a second backbone, although the task-level improvements are not uniform. On FLUX PickScore, $\lambda$-Controlled GRPO essentially ties the untuned Flow-GRPO baseline, with a difference of $-0.0001$ and a $95\%$ confidence interval of $[-0.0013,+0.0011]$, while outperforming empirical RatioNorm by $+0.0046$, with interval $[+0.0036,+0.0057]$ and a $63.4\%$ prompt-level win rate. On FLUX hard-OCR, the analytic method achieves the best observed reward and strict metrics, but the confidence intervals cross zero under this single-seed evaluation, so the improvement remains directional rather than resolved. On GenEval, the three methods are statistically indistinguishable.

The most stable finding across backbones is mechanistic. On all three FLUX tasks, the analytic approach lowers the required training clip fraction by about one order of magnitude, mirroring the stabilization pattern previously seen on SD3.5. This value represents the calibrated surrogate ratio that our method optimizes, not a directly comparable raw ratio computed identically across algorithms. Accordingly, we view FLUX as directional support for cross-backbone generalization, most clearly for the predicted clipping behavior rather than as proof of consistent task-level superiority.

\section{Conclusion and limitations}
\label{sec:conclusion}

In this paper we asked whether the timestep-dependent ratio instability observed in Flow-GRPO has a single underlying cause that can be controlled rather than merely patched. It does. The finite-grid transition kernel shows that negative log-ratio drift, timestep-varying variance, clipping imbalance, and the loss of usable samples are all governed by one per-step scalar, the path variance $\lambda_k$, which fixes the conditional mean and variance of the step log-ratio exactly. Empirical stabilizers from prior work can therefore be interpreted as symptom-level corrections to this quantity. The more direct intervention is to control the cause itself: calibrate ratios from the predicted law and allocate gradient effort according to predicted path-variance cost, with the governing scales set by the PPO clipping radius and a target sample-retention level rather than introduced as free tuning parameters. The central insight is that path variance behaves as a resource already measured by the sampler. It can be estimated online, budgeted across timesteps, and spent through the optimizer, turning Flow-GRPO from an empirically stabilized procedure into one calibrated by its own transition law.

These conclusions hold within a defined operating range. The theory is cleanest when the compared policies share the same diffusion coefficient; policy-dependent diffusion, reversed-time conventions, or modified diffusion schedules require a corresponding finite-grid analysis. The Gaussian theory also predicts the ideal transition law, but does not guarantee that its estimated path variance will exactly match realized statistics once approximate scores, finite discretization, and implementation details enter. This is why we verify the predicted law empirically before using it to control the optimizer. The evidence is similarly bounded. The primary SD3.5 comparisons use a single-GPU pilot protocol, while a broader single-seed campaign shows that task-level gains are not uniform: they are strongest for human-preference optimization, favorable but unresolved for text rendering, and neutral on GenEval. These boundaries define the regime in which the present conclusions should be read. Within that regime, path variance is directly measured and explicitly budgeted, providing a concrete account of when and why the resulting Flow-GRPO updates remain controlled.

\subsection*{Reproducibility statement}

The theoretical claims are stated with their assumptions in
\cref{sec:finite-grid-law} and proved in full in \cref{app:proofs}. The estimator of
the path-variance scalar and the two policy-derived scales that fix the method are
specified in \cref{sec:estimate-lambda,sec:method}, and the exact experimental
configuration, including the models, timestep counts, optimization schedules,
reward definitions, and the per-regime operating points, is collected in
\cref{sec:appendix-config}. Code to reproduce the training and evaluation is
provided as supplementary material.

\subsection*{Ethics statement}

This work studies the optimization stability of reinforcement learning for
text-to-image models and does not involve human subjects or the release of new
data. The image generators and reward models used are existing public artifacts,
and the reward-optimization procedure inherits their known limitations, including
the potential to amplify biases present in the underlying models. We see no
additional ethical concerns specific to the analysis presented here.

\subsection*{AI use statement}

A large language model assisted the authors in writing and polishing the language of the manuscript; the authors verified that all claims, proofs, mathematical formulations, and reported values are valid, checked them against the implementation and results, and take full responsibility for the manuscript.

\bibliographystyle{iclr2027_conference}
\bibliography{iclr2027_conference}

@article{flowgrpo,
  title = {{Flow-GRPO}: Training Flow Matching Models via Online {RL}},
  author = {Liu, Jie and Liu, Gongye and Liang, Jiajun and Li, Yangguang and Liu, Jiaheng and Wang, Xintao and Wan, Pengfei and Zhang, Di and Ouyang, Wanli},
  journal = {arXiv preprint arXiv:2505.05470},
  year = {2025}
}

@article{grpo,
  title = {{DeepSeekMath}: Pushing the Limits of Mathematical Reasoning in Open Language Models},
  author = {Shao, Zhihong and Wang, Peiyi and Zhu, Qihao and Xu, Runxin and Song, Junxiao and Bi, Xiao and Zhang, Haowei and Zhang, Mingchuan and Li, Y. K. and Wu, Y. and Guo, Daya},
  journal = {arXiv preprint arXiv:2402.03300},
  year = {2024}
}

@inproceedings{rectified,
  title = {Flow Straight and Fast: Learning to Generate and Transfer Data with Rectified Flow},
  author = {Liu, Xingchao and Gong, Chengyue and Liu, Qiang},
  booktitle = {International Conference on Learning Representations ({ICLR})},
  year = {2023},
  eprint = {2209.03003},
  archiveprefix = {arXiv}
}

@article{grpoguard,
  title = {{GRPO-Guard}: Mitigating Implicit Over-Optimization in Flow Matching via Regulated Clipping},
  author = {Wang, Jing and Liang, Jiajun and Liu, Jie and Liu, Henglin and Liu, Gongye and Zheng, Jun and Pang, Wanyuan and Ma, Ao and Xie, Zhenyu and Wang, Xintao and Wang, Meng and Wan, Pengfei and Liang, Xiaodan},
  journal = {arXiv preprint arXiv:2510.22319},
  year = {2025}
}

@article{opgrpo,
  title = {{OP-GRPO}: Efficient Off-Policy {GRPO} for Flow-Matching Models},
  author = {Zhang, Liyu and Li, Kehan and Han, Tingrui and Zhao, Tao and Sheng, Yuxuan and He, Shibo and Li, Chao},
  journal = {arXiv preprint arXiv:2604.04142},
  year = {2026}
}

@article{tempflowgrpo,
  title = {{TempFlow-GRPO}: When Timing Matters for {GRPO} in Flow Models},
  author = {He, Xiaoxuan and Fu, Siming and Zhao, Yuke and Li, Wanli and Yang, Jian and Yin, Dacheng and Rao, Fengyun and Zhang, Bo},
  journal = {arXiv preprint arXiv:2508.04324},
  year = {2025}
}

@article{densegrpo,
  title = {{DenseGRPO}: From Sparse to Dense Reward for Flow Matching Model Alignment},
  author = {Deng, Haoyou and Yan, Keyu and Mao, Chaojie and Wang, Xiang and Liu, Yu and Gao, Changxin and Sang, Nong},
  journal = {arXiv preprint arXiv:2601.20218},
  year = {2026}
}

@article{smartgrpo,
  title = {{Smart-GRPO}: Smartly Sampling Noise for Efficient {RL} of Flow-Matching Models},
  author = {Yu, Benjamin and Liu, Jackie and Cui, Justin},
  journal = {arXiv preprint arXiv:2510.02654},
  year = {2025}
}

@article{mixgrpo,
  title = {{MixGRPO}: Unlocking Flow-based {GRPO} Efficiency with Mixed {ODE-SDE}},
  author = {Li, Junzhe and Cui, Yutao and Huang, Tao and Ma, Yinping and Fan, Chun and Cheng, Yiming and Yang, Miles and Zhong, Zhao and Bo, Liefeng},
  journal = {arXiv preprint arXiv:2507.21802},
  year = {2025}
}

@article{branchgrpo,
  title = {{BranchGRPO}: Stable and Efficient {GRPO} with Structured Branching in Diffusion Models},
  author = {Li, Yuming and Wang, Yikai and Zhu, Yuying and Zhao, Zhongyu and Lu, Ming and She, Qi and Zhang, Shanghang},
  journal = {arXiv preprint arXiv:2509.06040},
  year = {2025}
}

@article{cps,
  title = {Coefficients-Preserving Sampling for Reinforcement Learning with Flow Matching},
  author = {Wang, Feng and Yu, Zihao},
  journal = {arXiv preprint arXiv:2509.05952},
  year = {2025}
}

@article{fpo,
  title = {Flow Matching Policy Gradients},
  author = {McAllister, David and Ge, Songwei and Yi, Brent and Kim, Chung Min and Weber, Ethan and Choi, Hongsuk and Feng, Haiwen and Kanazawa, Angjoo},
  journal = {arXiv preprint arXiv:2507.21053},
  year = {2025}
}

@article{flowfactory,
  title = {{Flow-Factory}: A Unified Framework for Reinforcement Learning in Flow-Matching Models},
  author = {Ping, Bowen and Jia, Chengyou and Luo, Minnan and Qian, Hangwei and Tsang, Ivor},
  journal = {arXiv preprint arXiv:2602.12529},
  year = {2026}
}

@inproceedings{ddpm,
  title = {Denoising Diffusion Probabilistic Models},
  author = {Ho, Jonathan and Jain, Ajay and Abbeel, Pieter},
  booktitle = {Advances in Neural Information Processing Systems (NeurIPS)},
  year = {2020},
  eprint = {2006.11239},
  archiveprefix = {arXiv}
}

@inproceedings{songsde,
  title = {Score-Based Generative Modeling through Stochastic Differential Equations},
  author = {Song, Yang and Sohl-Dickstein, Jascha and Kingma, Diederik P. and Kumar, Abhishek and Ermon, Stefano and Poole, Ben},
  booktitle = {International Conference on Learning Representations (ICLR)},
  year = {2021},
  eprint = {2011.13456},
  archiveprefix = {arXiv}
}

@inproceedings{flowmatching,
  title = {Flow Matching for Generative Modeling},
  author = {Lipman, Yaron and Chen, Ricky T. Q. and Ben-Hamu, Heli and Nickel, Maximilian and Le, Matt},
  booktitle = {International Conference on Learning Representations (ICLR)},
  year = {2023},
  eprint = {2210.02747},
  archiveprefix = {arXiv}
}

@article{stochinterp,
  title = {Stochastic Interpolants: A Unifying Framework for Flows and Diffusions},
  author = {Albergo, Michael S. and Boffi, Nicholas M. and Vanden-Eijnden, Eric},
  journal = {Journal of Machine Learning Research (JMLR)},
  volume = {26},
  year = {2025},
  note = {arXiv:2303.08797}
}

@inproceedings{sd3,
  title = {Scaling Rectified Flow Transformers for High-Resolution Image Synthesis},
  author = {Esser, Patrick and Kulal, Sumith and Blattmann, Andreas and Entezari, Rahim and M{\"u}ller, Jonas and Saini, Harry and Levi, Yam and Lorenz, Dominik and Sauer, Axel and Boesel, Frederic and others},
  booktitle = {International Conference on Machine Learning (ICML)},
  year = {2024},
  eprint = {2403.03206},
  archiveprefix = {arXiv}
}

@inproceedings{dit,
  title = {Scalable Diffusion Models with Transformers},
  author = {Peebles, William and Xie, Saining},
  booktitle = {IEEE/CVF International Conference on Computer Vision (ICCV)},
  year = {2023},
  doi = {10.1109/ICCV51070.2023.00387}
}

@inproceedings{sit,
  title = {{SiT}: Exploring Flow and Diffusion-based Generative Models with Scalable Interpolant Transformers},
  author = {Ma, Nanye and Goldstein, Mark and Albergo, Michael S. and Boffi, Nicholas M. and Vanden-Eijnden, Eric and Xie, Saining},
  booktitle = {European Conference on Computer Vision (ECCV)},
  year = {2024},
  eprint = {2401.08740},
  archiveprefix = {arXiv}
}

@misc{flux,
  title         = {FLUX.1 Kontext: Flow Matching for In-Context Image Generation and Editing in Latent Space},
  author        = {Black Forest Labs and Stephen Batifol and Andreas Blattmann and Frederic Boesel and Saksham Consul and Cyril Diagne and Tim Dockhorn and Jack English and Zion English and Patrick Esser and Sumith Kulal and Kyle Lacey and Yam Levi and Cheng Li and Dominik Lorenz and Jonas Müller and Dustin Podell and Robin Rombach and Harry Saini and Axel Sauer and Luke Smith},
  year          = {2025},
  eprint        = {2506.15742},
  archiveprefix = {arXiv},
  primaryclass  = {cs.GR},
  url           = {https://arxiv.org/abs/2506.15742}
}

@article{ppo,
  title = {Proximal Policy Optimization Algorithms},
  author = {Schulman, John and Wolski, Filip and Dhariwal, Prafulla and Radford, Alec and Klimov, Oleg},
  journal = {arXiv preprint arXiv:1707.06347},
  year = {2017}
}

@inproceedings{trpo,
  title = {Trust Region Policy Optimization},
  author = {Schulman, John and Levine, Sergey and Moritz, Philipp and Jordan, Michael I. and Abbeel, Pieter},
  booktitle = {International Conference on Machine Learning (ICML)},
  year = {2015},
  eprint = {1502.05477},
  archiveprefix = {arXiv}
}

@inproceedings{instructgpt,
  title = {Training Language Models to Follow Instructions with Human Feedback},
  author = {Ouyang, Long and Wu, Jeffrey and Jiang, Xu and Almeida, Diogo and Wainwright, Carroll and Mishkin, Pamela and Zhang, Chong and Agarwal, Sandhini and others},
  booktitle = {Advances in Neural Information Processing Systems (NeurIPS)},
  year = {2022},
  doi = {10.52202/068431-2011}
}

@inproceedings{christiano2017,
  title = {Deep Reinforcement Learning from Human Preferences},
  author = {Christiano, Paul F. and Leike, Jan and Brown, Tom B. and Martic, Miljan and Legg, Shane and Amodei, Dario},
  booktitle = {Advances in Neural Information Processing Systems (NeurIPS)},
  year = {2017},
  eprint = {1706.03741},
  archiveprefix = {arXiv}
}

@inproceedings{dpo,
  title = {Direct Preference Optimization: Your Language Model is Secretly a Reward Model},
  author = {Rafailov, Rafael and Sharma, Archit and Mitchell, Eric and Ermon, Stefano and Manning, Christopher D. and Finn, Chelsea},
  booktitle = {Advances in Neural Information Processing Systems (NeurIPS)},
  year = {2023},
  doi = {10.52202/075280-2338}
}

@article{deepseekr1,
  title = {{DeepSeek-R1}: Incentivizing Reasoning Capability in {LLMs} via Reinforcement Learning},
  author = {Guo, Daya and Yang, Dejian and Zhang, Haowei and Song, Junxiao and others},
  journal = {arXiv preprint arXiv:2501.12948},
  year = {2025},
  doi = {10.1038/s41586-025-09422-z}
}

@inproceedings{ddpo,
  title = {Training Diffusion Models with Reinforcement Learning},
  author = {Black, Kevin and Janner, Michael and Du, Yilun and Kostrikov, Ilya and Levine, Sergey},
  booktitle = {International Conference on Learning Representations (ICLR)},
  year = {2024},
  eprint = {2305.13301},
  archiveprefix = {arXiv}
}

@inproceedings{dpok,
  title = {{DPOK}: Reinforcement Learning for Fine-tuning Text-to-Image Diffusion Models},
  author = {Fan, Ying and Watkins, Olivia and Du, Yuqing and Liu, Hao and Ryu, Moonkyung and Boutilier, Craig and Abbeel, Pieter and Ghavamzadeh, Mohammad and others},
  booktitle = {Advances in Neural Information Processing Systems (NeurIPS)},
  year = {2023},
  eprint = {2305.16381},
  archiveprefix = {arXiv}
}

@inproceedings{diffusiondpo,
  title = {Diffusion Model Alignment Using Direct Preference Optimization},
  author = {Wallace, Bram and Dang, Meihua and Rafailov, Rafael and Zhou, Linqi and Lou, Aaron and Purushwalkam, Senthil and Ermon, Stefano and Xiong, Caiming and others},
  booktitle = {IEEE/CVF Conference on Computer Vision and Pattern Recognition (CVPR)},
  year = {2024},
  doi = {10.1109/CVPR52733.2024.00786}
}

@inproceedings{draft,
  title = {Directly Fine-Tuning Diffusion Models on Differentiable Rewards},
  author = {Clark, Kevin and Vicol, Paul and Swersky, Kevin and Fleet, David J.},
  booktitle = {International Conference on Learning Representations (ICLR)},
  year = {2024},
  eprint = {2309.17400},
  archiveprefix = {arXiv}
}

@article{alignprop,
  title = {Aligning Text-to-Image Diffusion Models with Reward Backpropagation},
  author = {Prabhudesai, Mihir and Goyal, Anirudh and Pathak, Deepak and Fragkiadaki, Katerina},
  journal = {arXiv preprint arXiv:2310.03739},
  year = {2023}
}

@inproceedings{reno,
  title = {{ReNO}: Enhancing One-step Text-to-Image Models through Reward-based Noise Optimization},
  author = {Eyring, Luca and Karthik, Shyamgopal and Roth, Karsten and Dosovitskiy, Alexey and Akata, Zeynep},
  booktitle = {Advances in Neural Information Processing Systems (NeurIPS)},
  year = {2024},
  eprint = {2406.04312},
  archiveprefix = {arXiv}
}

@article{dancegrpo,
  title = {{DanceGRPO}: Unleashing {GRPO} on Visual Generation},
  author = {Xue, Zeyue and Wu, Jie and Gao, Yu and Kong, Fangyuan and Zhu, Lingting and Chen, Mengzhao and Liu, Zhiheng and Liu, Wei and Guo, Qiushan and Huang, Weilin and Luo, Ping},
  journal = {arXiv preprint arXiv:2505.07818},
  year = {2025}
}

@article{prefgrpo,
  title = {{Pref-GRPO}: Pairwise Preference Reward-based {GRPO} for Stable Text-to-Image Reinforcement Learning},
  author = {Wang, Yibin and Li, Zhimin and Zang, Yuhang and Zhou, Yujie and Bu, Jiazi and Wang, Chunyu and Lu, Qinglin and Jin, Cheng and Wang, Jiaqi},
  journal = {arXiv preprint arXiv:2508.20751},
  year = {2025}
}

@inproceedings{pickscore,
  title = {Pick-a-Pic: An Open Dataset of User Preferences for Text-to-Image Generation},
  author = {Kirstain, Yuval and Polyak, Adam and Singer, Uriel and Matiana, Shahbuland and Penna, Joe and Levy, Omer},
  booktitle = {Advances in Neural Information Processing Systems (NeurIPS)},
  year = {2023},
  eprint = {2305.01569},
  archiveprefix = {arXiv}
}

@inproceedings{imagereward,
  title = {{ImageReward}: Learning and Evaluating Human Preferences for Text-to-Image Generation},
  author = {Xu, Jiazheng and Liu, Xiao and Wu, Yuchen and Tong, Yuxuan and Li, Qinkai and Ding, Ming and Tang, Jie and Dong, Yuxiao},
  booktitle = {Advances in Neural Information Processing Systems (NeurIPS)},
  year = {2023},
  eprint = {2304.05977},
  archiveprefix = {arXiv}
}

@article{hpsv2,
  title = {Human Preference Score v2: A Solid Benchmark for Evaluating Human Preferences of Text-to-Image Synthesis},
  author = {Wu, Xiaoshi and Hao, Yiming and Sun, Keqiang and Chen, Yixiong and Zhao, Feng and Zhao, Rui and Li, Hongsheng},
  journal = {arXiv preprint arXiv:2306.09341},
  year = {2023}
}

@inproceedings{clipscore,
  title = {{CLIPScore}: A Reference-free Evaluation Metric for Image Captioning},
  author = {Hessel, Jack and Holtzman, Ari and Forbes, Maxwell and Le Bras, Ronan and Choi, Yejin},
  booktitle = {Conference on Empirical Methods in Natural Language Processing (EMNLP)},
  year = {2021},
  doi = {10.18653/v1/2021.emnlp-main.595}
}

@article{ess,
  title = {Rethinking the Effective Sample Size},
  author = {Elvira, V{\'i}ctor and Martino, Luca and Robert, Christian P.},
  journal = {International Statistical Review},
  volume = {90},
  number = {3},
  pages = {525--550},
  year = {2018},
  note = {arXiv:1809.04129},
  doi = {10.1111/insr.12500}
}

@article{geneval,
  title   = {Geneval: An object-focused framework for evaluating text-to-image alignment},
  author  = {Ghosh, Dhruba and Hajishirzi, Hannaneh and Schmidt, Ludwig},
  journal = {Advances in Neural Information Processing Systems},
  volume  = {36},
  pages   = {52132--52152},
  year    = {2023}
}

\appendix

\section{Overview of the appendices}
\label{sec:appendix-overview}

The appendices are grouped by purpose. The first group covers methodology and
theory: the experimental configuration, the flow-matching preliminaries and drift, the derivations and proofs behind the
finite-grid law, the full derivation of the analytic ratio calibration, and the
diagnostic and alternative estimators of the path-variance scalar. The next section gives the
primal-dual baseline and the diagnostic clipping rule used in the law audit, which
appear in the main text only as comparisons. The last three are experimental
evidence, presenting the qualitative held-out comparisons for the OCR, PickScore,
and GenEval reward regimes.

\section{Experimental configuration}
\label{sec:appendix-config}

The tiny SD3 law audit was deliberately small. It used a tiny public SD3 debug
pipeline with a JPEG-compressibility reward and $K=8$ denoising timesteps, trained
for 50 epochs with 32 samples per epoch, 5 inner epochs, and a batch size of 8.
The two matched configurations were a baseline diagnostic run and a
$\lambda$-$z$-clipping run, and every reported statistic is the average of the
last 50 updates over 3 seeds, where a seed is an independent training run from a
fresh random initialization.

The SD3.5 experiments fine-tuned SD3.5 Medium with low-rank adaptation on a single
GPU, using the same OCR and PickScore datasets and reward definitions as the
public Flow-GRPO and GRPO-Guard configurations, but with smaller groups, fewer
batches per epoch, and shorter training than the original multi-GPU regime. The
two policy choices that define the method were fixed before training in each
reward regime: the PPO step radius $\epsilon$ and the effective-sample-size
retention target $q$, from which $\nu=\epsilon^2$ and
$\tau=-\log(q)/|\mathcal K_{\rm late}|$ follow. The OCR runs used
$\epsilon=0.2$ and $q=0.95$; the denser PickScore reward used the tighter
$\epsilon=0.1$ and $q=0.99$.

\section{Flow-matching preliminaries and drift}
\label{app:prelim}

\Cref{sec:prelim-method} states the marginal-preserving SDE and its Gaussian
transition kernel. This appendix gives the drift, the diffusion schedule, and the
endpoint behavior of the path-variance scalar.

The deterministic flow-matching sampler solves
\begin{equation}
  \dd X_t=v_\theta(X_t,t)\,\dd t.
  \label{eq:flow-ode}
\end{equation}
GRPO needs stochastic trajectories with a Gaussian transition density, so
Flow-GRPO turns \cref{eq:flow-ode} into the marginal-preserving SDE
$\dd X_t=b_\theta\,\dd t+\sigma_t\,\dd W_t$. If $p_t$ is the ODE marginal density,
Fokker--Planck matching gives
\begin{equation}
  b_\theta(x,t)=v_\theta(x,t)+\frac{\sigma_t^2}{2}\nabla\log p_t(x).
  \label{eq:marginal-preserve-general}
\end{equation}
For the linear interpolant \cref{eq:interpolant}, the score identity is
\begin{equation}
  \nabla\log p_t(x)=-\frac{x+(1-t)v_t(x)}{t},
  \label{eq:score-identity}
\end{equation}
so substituting the estimated velocity yields the working drift
\begin{equation}
  b_\theta(x,t)=v_\theta(x,t)-\frac{\sigma_t^2}{2t}\bigl(x+(1-t)v_\theta(x,t)\bigr).
  \label{eq:flowgrpo-drift}
\end{equation}
The minus sign comes from the negative score in \cref{eq:score-identity}.
Flow-GRPO commonly uses the schedule
\begin{equation}
  \sigma_t=a\sqrt{\tfrac{t}{1-t}},
  \qquad
  \sigma_t^2=a^2\tfrac{t}{1-t},
  \label{eq:flowgrpo-sigma}
\end{equation}
which is small near the data endpoint $t=0$ and large near the noise endpoint
$t=1$.

\subsection{Endpoint scaling of the path variance}

Let $\Delta(t_k)=\E[\norm{b_\theta(X_k,t_k)-b_\old(X_k,t_k)}^2]$. For scalar
$\sigma_k$, the mean path variance is
\begin{equation}
  \bar\lambda_k=\frac{\Delta(t_k)}{\sigma_k^2}\Delta t_k,
  \label{eq:bar-lambda}
\end{equation}
and with the schedule \cref{eq:flowgrpo-sigma},
\begin{equation}
  \bar\lambda_k=\frac{\Delta(t_k)(1-t_k)}{a^2 t_k}\Delta t_k.
  \label{eq:endpoint-scaling}
\end{equation}
The low-noise endpoint is therefore problematic when $\Delta(t)/t$ does not
vanish: if $\Delta(t)$ is bounded below near $t=0$ then $\bar\lambda_k$ diverges as
$t_k\to0$, whereas $\Delta(t)=O(t)$ keeps the endpoint contribution finite. The
governing diagnostic is the variance-normalized drift scale $\Delta(t)/\sigma_t^2$.

\section{Derivations and proofs}
\label{app:proofs}

This appendix collects the derivations and proofs for the finite-grid path-ratio
law of \cref{sec:finite-grid-law}.

\begin{proof}[Proof of \cref{prop:one-step-ratio}]
The kernels are Gaussian with the same covariance
$\Sigma_k\Delta t_k=\sigma_k\sigma_k^\top\Delta t_k$ and means
$x_k+b_\theta\Delta t_k$ and $x_k+b_\old\Delta t_k$. The determinant terms in
the Gaussian densities cancel. Expanding the difference of the two quadratic
forms and using $X_{k+1}-X_k-b_\old\Delta t_k=\sigma_k\Delta W_k$ gives
\begin{equation}
  \xi_k
  =
  \inner{\sigma_k^{-1}\delta b_k}{\Delta W_k}
  -
  \frac12\norm{\sigma_k^{-1}\delta b_k}^2\Delta t_k,
\end{equation}
which is \cref{eq:one-step-log-ratio}.
\end{proof}

\subsection{Why the path-variance scalar takes this form}

Start with the scalar case. Let the old transition be
$p_{\old}=\mathcal N(\mu_{\old},s^2)$ and the new transition be
$p_\theta=\mathcal N(\mu_\theta,s^2)$ with the same variance. Write the mean shift
in units of the standard deviation as
\begin{equation}
  a
  =
  \frac{\mu_\theta-\mu_{\old}}{s}.
  \label{eq:scalar-normalized-shift}
\end{equation}
If $X\sim p_{\old}$, then $X=\mu_{\old}+sZ$ with $Z\sim\mathcal N(0,1)$, and
direct substitution into the Gaussian densities gives
\begin{equation}
  \log\frac{p_\theta(X)}{p_{\old}(X)}
  =
  aZ-\frac12a^2.
  \label{eq:scalar-gaussian-ratio}
\end{equation}
Thus the same number $a^2$ controls both statistics of the log-ratio:
\begin{equation}
  \E_{\old}\!\left[
    \log\frac{p_\theta(X)}{p_{\old}(X)}
  \right]
  =
  -\frac12a^2,
  \qquad
  \Var_{\old}\!\left(
    \log\frac{p_\theta(X)}{p_{\old}(X)}
  \right)
  =
  a^2.
  \label{eq:scalar-ratio-mean-var}
\end{equation}
This equality is the reason for defining a path-variance scalar: it is the squared
policy move measured in noise units.

The multivariate case is the same calculation after whitening the noise. Suppose
$p_{\old}=\mathcal N(\mu_{\old},\Sigma)$ and
$p_\theta=\mathcal N(\mu_\theta,\Sigma)$ share covariance $\Sigma$. Let
\begin{equation}
  a
  =
  \Sigma^{-1/2}(\mu_\theta-\mu_{\old}).
  \label{eq:multivariate-normalized-shift}
\end{equation}
Under $p_{\old}$, write $X=\mu_{\old}+\Sigma^{1/2}Z$ with
$Z\sim\mathcal N(0,I)$. Expanding the two quadratic forms gives
\begin{equation}
  \log\frac{p_\theta(X)}{p_{\old}(X)}
  =
  \inner{a}{Z}
  -
  \frac12\norm{a}^2,
  \label{eq:multivariate-gaussian-ratio-simple}
\end{equation}
so that
\begin{equation}
  \E_{\old}\!\left[
    \log\frac{p_\theta(X)}{p_{\old}(X)}
  \right]
  =
  -\frac12\norm{a}^2,
  \qquad
  \Var_{\old}\!\left(
    \log\frac{p_\theta(X)}{p_{\old}(X)}
  \right)
  =
  \norm{a}^2.
  \label{eq:multivariate-ratio-mean-var}
\end{equation}
The scalar that appears from the likelihood ratio is therefore the Mahalanobis
distance between the two means,
\begin{equation}
  \norm{a}^2
  =
  (\mu_\theta-\mu_{\old})^\top
  \Sigma^{-1}
  (\mu_\theta-\mu_{\old}).
  \label{eq:mahalanobis-policy-move}
\end{equation}
For one Euler--Maruyama Flow-GRPO step, the old and new kernels share covariance
$\Sigma_k\Delta t_k=\sigma_k\sigma_k^\top\Delta t_k$ and differ only in mean by
$\mu_{\theta,k}-\mu_{\old,k}=(b_\theta-b_\old)(X_k,t_k)\Delta t_k$. Substituting
these into \cref{eq:mahalanobis-policy-move} gives
\begin{equation}
  (\mu_{\theta,k}-\mu_{\old,k})^\top
  (\Sigma_k\Delta t_k)^{-1}
  (\mu_{\theta,k}-\mu_{\old,k})
  =
  \left\lVert
  \sigma_k^{-1}\bigl(b_\theta-b_\old\bigr)(X_k,t_k)
  \right\rVert^2\Delta t_k
  =
  \lambda_k,
  \label{eq:em-mahalanobis-policy-move}
\end{equation}
so the one-step log-ratio has mean $-\lambda_k/2$ and variance $\lambda_k$, and
$\lambda_k$ is the scalar the Gaussian likelihood ratio itself uses for its drift
and variance.

\begin{proof}[Proof of \cref{prop:lognormal}]
Conditioned on $\mathcal F_{t_k}$, $h_k$ is fixed and
$\Delta W_k\sim\mathcal N(0,\Delta t_k I)$. Hence
$\inner{h_k}{\Delta W_k}\sim\mathcal N(0,\norm{h_k}^2\Delta t_k)
=\mathcal N(0,\lambda_k)$. Subtracting $\lambda_k/2$ gives \cref{eq:xi-normal}.
Since $r_k=\exp(\xi_k)$, $r_k$ is lognormal. If $Z\sim\mathcal N(\mu,s^2)$, then
$\E[e^Z]=e^{\mu+s^2/2}$, $\med(e^Z)=e^\mu$, and
$\Var(e^Z)=e^{2\mu+s^2}(e^{s^2}-1)$. Plugging $\mu=-\lambda_k/2$ and
$s^2=\lambda_k$ gives \cref{eq:ratio-moments}.
\end{proof}

\begin{proof}[Proof of \cref{prop:reduced-logprob}]
Taking the conditional expectation of \cref{eq:mean-reduced-logratio} removes the
noise term and leaves $-(2D)^{-1}\sum_m a_{k,m}^2=-\lambda_k^{\rm center}/2$.
Since the $\epsilon_{k,m}$ are independent with unit variance,
\begin{equation}
  \Var(\bar\xi_k\mid\mathcal F_{t_k})
  =
  \frac{1}{D^2}\sum_{m=1}^{D}a_{k,m}^2
  =
  \lambda_k^{\rm var}.
\end{equation}
\end{proof}

\begin{corollary}[Total variance for reduced log-ratios]
\label{cor:reduced-total-variance}
For mean-reduced latent log-probabilities, the batch variance combines
conditional transition noise with heterogeneity in the conditional centers,
$\Var(\bar\xi_k)=\E[\lambda_k^{\rm var}]+\tfrac14\Var(\lambda_k^{\rm center})$.
\end{corollary}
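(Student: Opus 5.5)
The natural route is the law of total variance, conditioning on $\mathcal F_{t_k}$. The per-step scalars $\lambda_k^{\rm center}$ and $\lambda_k^{\rm var}$ are built from the normalized shifts $a_{k,m}$, which are $\mathcal F_{t_k}$-measurable. Treating them as random across the batch, as the state $X_k$ and prompt vary, I would write
\[
\Var(\bar\xi_k)=\E\bigl[\Var(\bar\xi_k\mid\mathcal F_{t_k})\bigr]+\Var\bigl(\E[\bar\xi_k\mid\mathcal F_{t_k}]\bigr).
\]
Both inner quantities are supplied exactly by \cref{prop:reduced-logprob}. There $\Var(\bar\xi_k\mid\mathcal F_{t_k})=\lambda_k^{\rm var}$ and $\E[\bar\xi_k\mid\mathcal F_{t_k}]=-\tfrac12\lambda_k^{\rm center}$.

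Substituting these gives $\E[\lambda_k^{\rm var}]+\Var(-\tfrac12\lambda_k^{\rm center})=\E[\lambda_k^{\rm var}]+\tfrac14\Var(\lambda_k^{\rm center})$, which is the claim.

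The step needing care is justifying that the decomposition applies.
\begin{itemize}
\item The noise $\epsilon_{k,m}$ must be independent of $\mathcal F_{t_k}$. This holds because the Euler--Maruyama increment $\Delta W_k$ is fresh Gaussian noise.
\item $\bar\xi_k$ must have finite second moment. Given \eqref{eq:mean-reduced-logratio}, this reduces to $\E[(\sum_m a_{k,m}^2)^2]<\infty$, which I would state as a mild integrability condition on the drift difference.
\end{itemize}

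I would also note how to read the result. The "batch variance" in the statement is the population variance over the sampling distribution of states at step $k$. The empirical batch variance estimates it, up to the usual finite-sample correction.
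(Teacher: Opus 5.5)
Your proposal is correct and is the same argument as the paper's proof: apply the law of total variance conditioned on $\mathcal F_{t_k}$, substitute the conditional mean $-\tfrac12\lambda_k^{\rm center}$ and conditional variance $\lambda_k^{\rm var}$ from \cref{prop:reduced-logprob}, and pull out the factor $\tfrac14$. Your integrability condition $\E[(\sum_m a_{k,m}^2)^2]<\infty$ and your remark that the noise is independent of $\mathcal F_{t_k}$ are sound, and they make explicit hypotheses the paper leaves implicit.
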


\begin{proof}[Proof of \cref{cor:reduced-total-variance}]
Apply the law of total variance to \cref{prop:reduced-logprob}:
\begin{align}
  \Var(\bar\xi_k)
  &=
  \E[\Var(\bar\xi_k\mid\mathcal F_{t_k})]
  +
  \Var(\E[\bar\xi_k\mid\mathcal F_{t_k}]) \\
  &=
  \E[\lambda_k^{\rm var}]
  +
  \Var(-\lambda_k^{\rm center}/2).
\end{align}
\end{proof}

The same argument gives the unconditional moments of the full log-ratio, which are
useful when comparing against batch statistics that mix many conditional states.

\begin{corollary}[Unconditional first and second moments of log-ratios]
\label{cor:unconditional}
Let $\bar\lambda_k=\E[\lambda_k]$. Then
\begin{equation}
  \E[\xi_k]= -\frac12\bar\lambda_k,
  \qquad
  \Var(\xi_k)=\bar\lambda_k+\frac14\Var(\lambda_k)\geq \bar\lambda_k.
  \label{eq:unconditional-logratio}
\end{equation}
\end{corollary}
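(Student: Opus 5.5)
The plan is to repeat the law-of-total-variance argument from \cref{cor:reduced-total-variance}, now for the full log-ratio $\xi_k$ and using \cref{prop:lognormal} as the conditional input.

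Conditioned on $\mathcal F_{t_k}$, \cref{prop:lognormal} gives $\xi_k\mid\mathcal F_{t_k}\sim\mathcal N(-\lambda_k/2,\lambda_k)$, where $\lambda_k$ is $\mathcal F_{t_k}$-measurable. In particular the conditional mean is $-\lambda_k/2$ and the conditional variance is $\lambda_k$.

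For the first moment, apply the tower property:
\[
\E[\xi_k]=\E\bigl[\E[\xi_k\mid\mathcal F_{t_k}]\bigr]=-\tfrac12\E[\lambda_k]=-\tfrac12\bar\lambda_k .
\]

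For the second moment, decompose by total variance:
\[
\Var(\xi_k)=\E\bigl[\Var(\xi_k\mid\mathcal F_{t_k})\bigr]+\Var\bigl(\E[\xi_k\mid\mathcal F_{t_k}]\bigr)=\E[\lambda_k]+\Var(-\lambda_k/2).
\]
Since $\Var(-\lambda_k/2)=\tfrac14\Var(\lambda_k)$, this equals $\bar\lambda_k+\tfrac14\Var(\lambda_k)$. The inequality $\Var(\xi_k)\ge\bar\lambda_k$ then follows because $\Var(\lambda_k)\ge0$.

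The only delicate step is integrability. Both the tower property and the total-variance decomposition require $\E[\lambda_k]<\infty$ and $\E[\lambda_k^2]<\infty$, so that $\xi_k\in L^2$. I would state this as a standing finiteness assumption on the drift difference $b_\theta-b_\old$, for example bounded second and fourth moments under the old-policy state distribution.

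With that assumption, $\E[\xi_k^2\mid\mathcal F_{t_k}]=\lambda_k+\lambda_k^2/4$ is integrable, and every step above is justified.
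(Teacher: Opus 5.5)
Your argument is correct and is the paper's proof: the tower property gives the mean, and the law of total variance with the conditional moments $-\lambda_k/2$ and $\lambda_k$ from \cref{prop:lognormal} gives $\E[\lambda_k]+\Var(-\lambda_k/2)$. The paper leaves your integrability condition ($\E[\lambda_k^2]<\infty$, so that $\xi_k$ has finite second moment) implicit; stating it clarifies the hypotheses without changing the route.
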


\begin{proof}
The expectation follows by the tower property. For the variance, use the law of
total variance,
\begin{equation}
  \Var(\xi_k)
  =
  \E[\Var(\xi_k\mid\mathcal F_{t_k})]
  +
  \Var(\E[\xi_k\mid\mathcal F_{t_k}])
  =
  \E[\lambda_k]+\Var(-\lambda_k/2).
\end{equation}
\end{proof}

\section{Analytic ratio calibration}
\label{sec:lambdanorm-t}

This section gives the longer derivation of LambdaNorm-T, the analytic ratio
calibration used by the main method in \cref{sec:lambda-gradnorm-main}. The
question is whether the reduced-log-ratio law can replace empirical RatioNorm
itself. We call the construction \emph{LambdaNorm-T}, where ``T'' denotes a
target log-ratio variance.

Centering alone is not enough. If
\begin{equation}
  y_{i,k}
  \coloneqq
  \overline{\log r}_{i,k},
\end{equation}
then the reduced law gives
\begin{equation}
  \E[y_{i,k}\mid \mathcal F_{t_k}]
  =
  -\frac{1}{2}\lambda^{\rm center}_{i,k},
  \qquad
  \Var(y_{i,k}\mid \mathcal F_{t_k})
  =
  \lambda^{\rm var}_{i,k}.
  \label{eq:lambdanorm-reduced-law}
\end{equation}
The reduced ratio $\exp(y_{i,k})$ is therefore a surrogate ratio rather than the
full path Radon--Nikodym derivative, and it generally does not preserve the
mean-one likelihood-ratio convention:
\begin{equation}
  \E[\exp(y_{i,k})\mid\mathcal F_{t_k}]
  =
  \exp\!\left(
    -\frac12\lambda_{i,k}^{\rm center}
    +
    \frac12\lambda_{i,k}^{\rm var}
  \right).
  \label{eq:lambdanorm-raw-reduced-mean}
\end{equation}
The analytically standardized coordinate is therefore
\begin{equation}
  z_{i,k}
  =
  \frac{
    y_{i,k}+\frac{1}{2}\lambda^{\rm center}_{i,k}
  }{
    \sqrt{\lambda^{\rm var}_{i,k}+\varepsilon}
  }.
  \label{eq:lambdanorm-z}
\end{equation}
Under the Gaussian transition approximation, $z_{i,k}$ is approximately standard
normal. Using $\exp(z_{i,k})$ directly would impose an arbitrary
and often too-large surrogate ratio scale. LambdaNorm-T instead chooses a target
log-ratio variance $\nu_k>0$ and reconstructs a calibrated surrogate log-ratio:
\begin{equation}
  \widetilde\ell_{i,k}
  =
  \sqrt{\nu_k}\,z_{i,k}
  -
  \frac{1}{2}\nu_k,
  \qquad
  \widetilde r_{i,k}
  =
  \exp(\widetilde\ell_{i,k}).
  \label{eq:lambdanorm-t-ratio}
\end{equation}
If $z_{i,k}\sim \mathcal N(0,1)$, then
$\widetilde\ell_{i,k}\sim \mathcal N(-\nu_k/2,\nu_k)$ and
$\E[\widetilde r_{i,k}]\approx 1$. The subtraction $-\nu_k/2$ is important: it
preserves the mean-one ratio convention after choosing the PPO-scale variance.

The target variance $\nu_k$ is tied to the PPO clipping scale rather than tuned
as an unconstrained normalization constant. If the clipped objective uses a
ratio window $[1-\epsilon,1+\epsilon]$, the corresponding positive log-ratio
boundary is $\log(1+\epsilon)$. We therefore use
        \begin{equation}
  \nu_k
  =
  \log(1+\epsilon)^2
  \approx
  \epsilon^2
  \label{eq:lambdanorm-target-var-from-clip}
\end{equation}
as the default scale, with small sensitivity sweeps around it. For the standard
PPO-scale choice $\epsilon=0.2$, this gives $\nu\approx0.04$, i.e. a target
surrogate log-ratio standard deviation of about $0.2$.

Equivalently, define the centered reduced log-ratio
\begin{equation}
  \ell^c_{i,k}
  =
  y_{i,k}
  +
  \frac{1}{2}\lambda^{\rm center}_{i,k}.
\end{equation}
Then LambdaNorm-T can be written as
\begin{equation}
  \widetilde\ell_{i,k}
  =
  T_{i,k}\ell^c_{i,k}
  -
  \frac{1}{2}T_{i,k}^2\lambda^{\rm var}_{i,k},
          \qquad
  T_{i,k}
  =
  \sqrt{
    \frac{\nu_k}{\lambda^{\rm var}_{i,k}+\varepsilon}
  }.
  \label{eq:lambdanorm-temperature-form}
\end{equation}
This form makes clear that empirical RatioNorm is being replaced by analytic
centering plus analytic variance calibration. In implementation we stop-gradient
through $\lambda^{\rm center}$ and $\lambda^{\rm var}$ inside the normalizer, so
the model cannot win by gaming the coordinate system. The budget penalty still
uses differentiable $\lambda^{\rm center}$, because that term must shape the
update.

The PPO objective uses $\widetilde r_{i,k}$ in place of the empirical RatioNorm
ratio:
\begin{equation}
  \mathcal L_{\rm PPO}^{\rm LambdaNormT}
  =
  -\E_{i,k}
  \left[
    \min\left(
      \widetilde r_{i,k}A_i,
      \clip(\widetilde r_{i,k},1-\epsilon,1+\epsilon)A_i
    \right)
  \right].
  \label{eq:lambdanorm-t-ppo}
\end{equation}
The main method combines this calibrated surrogate ratio with the damp-only
$\lambda$-gradient weight from \cref{eq:damp-only-lambda-weight}. The earlier
primal-dual ablation instead adds the exponential-moving-average (EMA) smoothed
budget penalty from \cref{sec:dual-lambda-budget}:
\begin{equation}
  \mathcal L
  =
  \mathcal L_{\rm PPO}^{\rm LambdaNormT}
  +
  \beta_{\lambda,t}
  \phi_\alpha\!\left(
    \Lambda_{\rm late}-\tau
  \right),
          \qquad
  \Lambda_{\rm late}
  =
  \frac{1}{|\mathcal K_{\rm late}|}
  \sum_{k\in\mathcal K_{\rm late}}
  \E_i[\lambda^{\rm center}_{i,k}].
  \label{eq:lambdanorm-t-dual-objective}
        \end{equation}

There is one further scale issue. For a Gaussian transition
$X_{k+1}\mid X_k\sim\mathcal N(\mu_{\theta,k},s_k^2I)$, the score with respect
to the mean scales like
        \begin{equation}
  \nabla_\mu \log p(X_{k+1}\mid X_k)
  =
  \frac{X_{k+1}-\mu_{\theta,k}}{s_k^2},
  \qquad
  \| \nabla_\mu \log p\| \text{ scales like } s_k^{-1}.
        \end{equation}
Thus low-noise timesteps can receive disproportionately large gradients. A
simple analytic compensation is an optional per-step loss weight
\begin{equation}
  w_k^{\rm score}
  =
  \clip\left(
    \frac{s_k}{s_{\rm ref}},
    w_{\min},
    w_{\max}
  \right),
  \qquad
  s_k
  =
  \E[\sigma_k\sqrt{\Delta t_k}],
  \label{eq:lambdanorm-score-weight}
\end{equation}
which is applied to the per-step PPO loss. This is an ablation, not part of the
minimal LambdaNorm-T definition.

The initial LambdaNorm-T ablation grid varies around the clipping-derived
central value:
\begin{equation}
  \nu
  \in
  \{0.01,\;0.025,\;0.04,\;0.09\},
  \qquad
  \sqrt{\nu}
  \in
  \{0.10,\;0.158,\;0.20,\;0.30\}.
  \label{eq:lambdanorm-target-var-sweep}
\end{equation}
The central setting is $\nu=0.04$, with tighter and looser alternatives included
to test sensitivity. We also include a score-scale-weighted variant at the same
target variance.

LambdaNorm-T answers the sharper analytic question: can the finite-grid
likelihood law replace most or all of empirical RatioNorm? The primary comparison
reports reward together with path-variance spend:
\begin{equation}
  R_{\rm LambdaNormT+dual}
  \gtrsim
  R_{\rm RatioNorm}
  \qquad\text{and}\qquad
  \Lambda_{\rm late,LambdaNormT+dual}
  <
  \Lambda_{\rm late,RatioNorm}.
\end{equation}
This comparison tests whether analytic centering and variance calibration recover
the stabilizing effect of empirical RatioNorm while preserving the same
$\lambda$-budgeted control variable.

\section{Diagnostic and alternative estimators of the path-variance scalar}
\label{sec:appendix-lambda-estimators}

The main method uses the transition mean-shift estimator in
\cref{sec:estimate-lambda}. The estimators below are useful for diagnostics,
sanity checks, and implementation variants, but they are not used to train the
main SD3.5 hard-OCR result.

\subsection{Velocity-only simplification of the drift}
Often the model naturally exposes $v_\theta$, not $b_\theta$. If both policies
use the same scalar schedule $\sigma_t$ and the marginal-preserving drift
\cref{eq:flowgrpo-drift}, then
\begin{align}
  b_\theta(x,t)-b_\old(x,t)
  &=
  \left[1-\frac{\sigma_t^2(1-t)}{2t}\right]
  \left(v_\theta(x,t)-v_\old(x,t)\right).
  \label{eq:drift-diff-velocity}
\end{align}
Therefore
\begin{equation}
  \widehat\lambda_{i,k}^{\rm vel}
  =
  \left[1-\frac{\sigma_k^2(1-t_k)}{2t_k}\right]^2
  \frac{\norm{v_\theta(x_k^i,t_k,c_i)-v_\old(x_k^i,t_k,c_i)}^2}{\sigma_k^2}
  \Delta t_k.
  \label{eq:lambda-velocity}
\end{equation}
For the schedule $\sigma_t^2=a^2t/(1-t)$, the bracket simplifies to
$1-a^2/2$, so
\begin{equation}
  \widehat\lambda_{i,k}^{\rm vel}
  =
  \left(1-\frac{a^2}{2}\right)^2
  \frac{\norm{v_\theta(x_k^i,t_k,c_i)-v_\old(x_k^i,t_k,c_i)}^2}{a^2t_k/(1-t_k)}
  \Delta t_k.
  \label{eq:lambda-flowgrpo-schedule}
\end{equation}
This identity is useful only when applied with the same time convention and same
SDE drift used by the sampler.

\subsection{Ratio-based validation}
The theorem also predicts
\begin{equation}
  \E[\log r_k]\approx-\widehat\lambda_k/2,
  \qquad
  \Var(\log r_k)\approx\widehat\lambda_k.
  \label{eq:ratio-validation}
\end{equation}
Thus one can estimate $\lambda_k$ from empirical log-ratios:
\begin{equation}
  \widehat\lambda_k^{\rm var}=\widehat\Var_i(\log r_{i,k}),
  \qquad
  \widehat\lambda_k^{\rm mean}=-2\widehat\E_i[\log r_{i,k}].
  \label{eq:lambda-ratio-estimators}
\end{equation}
For mean-reduced latent log-probabilities, use the two-component validation from
\cref{prop:reduced-logprob} instead:
\begin{equation}
  -2\widehat\E_i[\bar{\log r}_{i,k}]
  \approx
  \widehat\lambda_k^{\rm center},
  \qquad
  \widehat\Var_i(\bar{\log r}_{i,k})
  \approx
  \widehat\lambda_k^{\rm var}.
  \label{eq:reduced-ratio-validation}
\end{equation}
These ratio-based estimators are the convention used in the law-audit plots in
\cref{sec:experimental-findings}. They are noisier than the
transition mean-shift estimator, but they are a stringent test of the theory: if
the finite-grid law is explanatory, then the mean-shift estimate and the observed
log-ratio statistics should align up to sampling noise.

\section{Primal-dual baseline and diagnostic clipping}
\label{sec:candidate-extensions}

This appendix formalizes the two constructions used outside the main method:
the primal-dual ablation reported as a baseline in
\cref{tab:sd35-hard-ocr-val-selected,tab:sd35-hard-ocr-lambda-controlled-val},
and the diagnostic $z$-space normalization used in the Tiny-SD3 law-audit
intervention in \cref{sec:experimental-findings}. Both are referenced from the
main text; neither is part of the $\lambda$-Controlled GRPO algorithm.

\subsection{Exponentially smoothed primal-dual budget}
\label{sec:dual-lambda-budget}
The primal-dual ablation constrains the late-step path-variance mean
\begin{equation}
  \Lambda_{\rm late}
  =
  \frac{1}{|\mathcal K_{\rm late}|}
  \sum_{k\in\mathcal K_{\rm late}}
  \widehat\lambda_k^{\rm center},
  \label{eq:lambda-late-mean}
\end{equation}
through
\begin{equation}
  \max_\theta\ J(\theta)
  \qquad
  \text{subject to}
  \qquad
  \Lambda_{\rm late}\leq\tau.
  \label{eq:lambda-budget-constraint}
\end{equation}
With empirical RatioNorm as $J(\theta)$, the ablation uses the smooth hinge
\begin{equation}
  \phi_\alpha(u)
  =
  \left[
    \frac{\operatorname{softplus}(\alpha u)}{\alpha}
  \right]^2
  \label{eq:smooth-lambda-hinge}
\end{equation}
and primal loss
\begin{equation}
  \mathcal L_t(\theta)
  =
  \mathcal L_{\rm GRPO/RatioNorm}
  +
  \beta_{\lambda,t}
  \phi_\alpha\!\left(
    \Lambda_{\rm late}(\theta)-\tau
  \right).
  \label{eq:dual-budget-loss}
\end{equation}
The multiplier is updated outside autograd from a detached EMA:
\begin{equation}
  \bar\lambda_{{\rm late},t}
  =
  (1-\gamma)\bar\lambda_{{\rm late},t-1}
  +
  \gamma\,\Lambda_{\rm late}(\theta_t)^{\rm sg},
  \label{eq:dual-lambda-ema}
\end{equation}
\begin{equation}
  \beta_{\lambda,t+1}
  =
  \Pi_{[0,\beta_{\max}]}
  \left[
    \beta_{\lambda,t}
    +
    \eta_\beta
    \left(
      \bar\lambda_{{\rm late},t}
      -
      \tau
      -
      \delta
    \right)
  \right].
  \label{eq:dual-beta-update}
\end{equation}
This ablation established that $\lambda$ can actively control optimization, but
the main method replaces the multiplier dynamics with direct damp-only
$\lambda$-gradient weighting.

\subsection{Diagnostic normalization and clipping}
The reduced law gives the analytic diagnostic
\begin{equation}
  z_{i,k}
  =
  \frac{\overline{\log r}_{i,k}
        +\widehat\lambda_{i,k}^{\rm center}/2}
       {\sqrt{\widehat\lambda_{i,k}^{\rm var}+\varepsilon}}.
  \label{eq:analytic-rationorm}
\end{equation}
One can either log $z_{i,k}$ as a law audit or clip it,
\begin{equation}
  \tilde z_{i,k}=\clip(z_{i,k},-c,c),
  \label{eq:zclip}
\end{equation}
and reconstruct
\begin{equation}
  \widetilde{\overline{\log r}}_{i,k}
  =
  -\frac12\widehat\lambda_{i,k}^{\rm center}
  +\tilde z_{i,k}\sqrt{\widehat\lambda_{i,k}^{\rm var}+\varepsilon}.
  \label{eq:zclip-reconstruct}
\end{equation}
This implies the adaptive interval
\begin{equation}
  \overline{\log r}_{i,k}
  \in
  \left[
    -\frac12\widehat\lambda_{i,k}^{\rm center}
    -c\sqrt{\widehat\lambda_{i,k}^{\rm var}},
    -\frac12\widehat\lambda_{i,k}^{\rm center}
    +c\sqrt{\widehat\lambda_{i,k}^{\rm var}}
  \right].
  \label{eq:adaptive-logclip}
\end{equation}

\clearpage
\section{Qualitative held-out OCR comparisons}
\label{sec:qualitative-gallery}

The aggregate held-out metrics in \cref{tab:sd35-hard-ocr-val-selected} are
supported by a qualitative sweep on the same 762-prompt test complement. The
examples below use matched prompt indices and matched random seeds across
methods: empirical RatioNorm at step 40, the primal-dual ablation at step 80, and
LambdaNorm-T with damp-only $\lambda$ weights at step 80. All examples are drawn
from the disjoint 762-prompt
held-out complement and were not used for checkpoint selection. The first block
shows clear transcript-level wins for $\lambda$-Controlled GRPO. The second
block deliberately includes mixed and failure cases, because the method improves
the distribution of outcomes but does not solve text rendering uniformly.

\begin{figure}[h]
  \centering
  \begin{minipage}[t]{0.32\textwidth}
    \centering
    \textbf{Ours}\\[0.25em]
    \includegraphics[width=\linewidth]{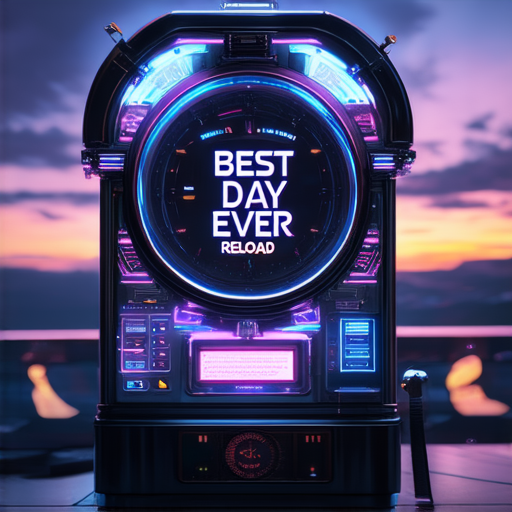}
  \end{minipage}\hfill
  \begin{minipage}[t]{0.32\textwidth}
    \centering
    \textbf{RatioNorm}\\[0.25em]
    \includegraphics[width=\linewidth]{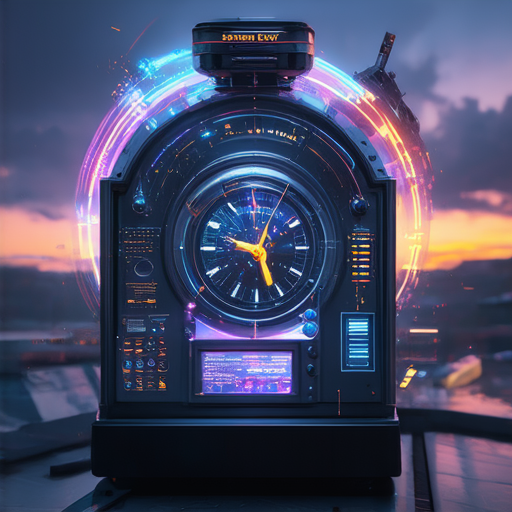}
  \end{minipage}\hfill
  \begin{minipage}[t]{0.32\textwidth}
    \centering
    \textbf{Dual}\\[0.25em]
    \includegraphics[width=\linewidth]{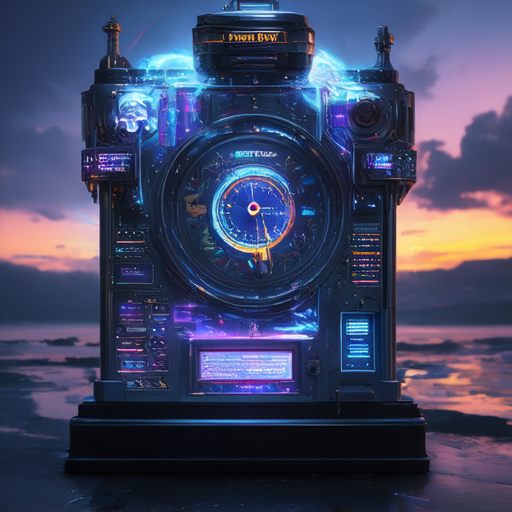}
  \end{minipage}
  \par\smallskip
  {\footnotesize\emph{Prompt.} A futuristic time machine with a sleek, metallic
  finish, its dial set to ``Best Day Ever Reload''. The machine is surrounded by
  a glowing aura, with digital readouts and holographic interfaces displaying
  vibrant colors, set against a backdrop of a twilight sky.}
  \caption{Qualitative held-out test comparison for prompt 319. Target text:
  \texttt{Best Day Ever Reload}. $\lambda$-Controlled GRPO renders the complete target in the
  central display, while RatioNorm and the primal-dual ablation produce the
  object and lighting but fail to place readable target text.}
  \label{fig:qual-p319}
\end{figure}

\begin{figure}[h]
  \centering
  \begin{minipage}[t]{0.32\textwidth}
    \centering
    \textbf{Ours}\\[0.25em]
    \includegraphics[width=\linewidth]{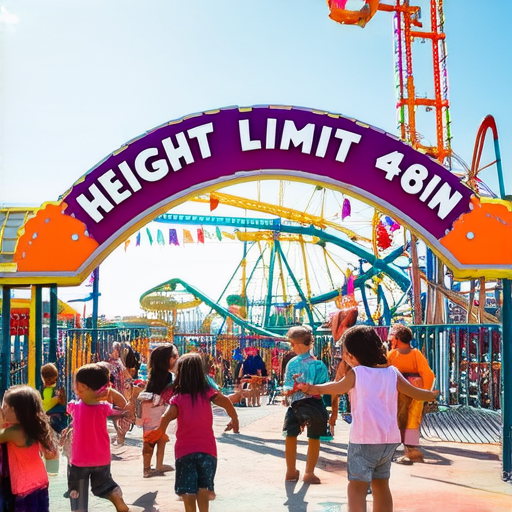}
  \end{minipage}\hfill
  \begin{minipage}[t]{0.32\textwidth}
    \centering
    \textbf{RatioNorm}\\[0.25em]
    \includegraphics[width=\linewidth]{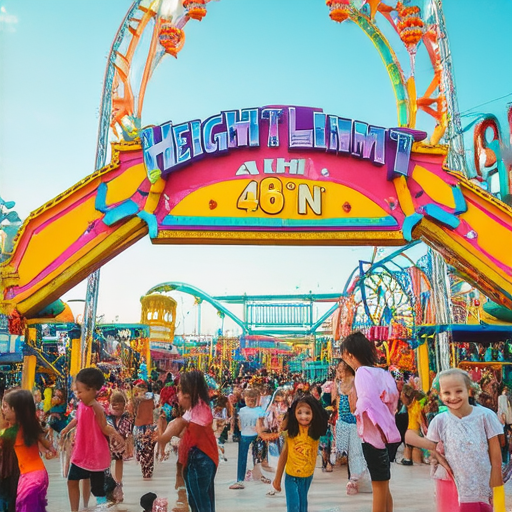}
  \end{minipage}\hfill
  \begin{minipage}[t]{0.32\textwidth}
    \centering
    \textbf{Dual}\\[0.25em]
    \includegraphics[width=\linewidth]{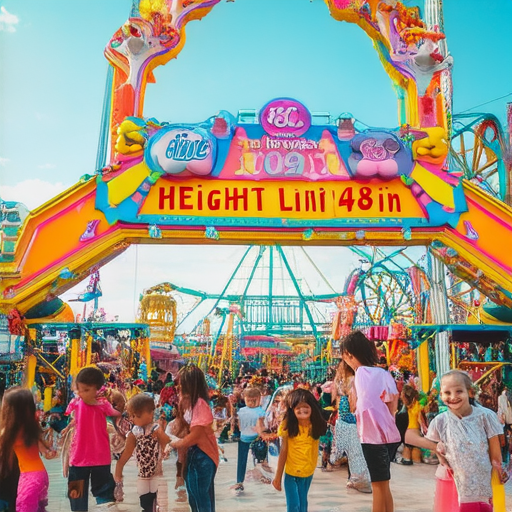}
  \end{minipage}
  \par\smallskip
  {\footnotesize\emph{Prompt.} A bustling amusement park with a vibrant entrance
  arch prominently displaying ``Height Limit 48in'', surrounded by excited
  children and their parents, with colorful banners and playful music in the
  background.}
  \caption{Qualitative held-out test comparison for prompt 107. Target text:
  \texttt{Height Limit 48in}. The analytic method renders the full target on
  the amusement-park arch. RatioNorm and the primal-dual ablation retain the
  colorful park scene, but their sign text corrupts the numerical suffix or
  middle word.}
  \label{fig:qual-p107}
\end{figure}

\begin{figure}[h]
  \centering
  \begin{minipage}[t]{0.32\textwidth}
    \centering
    \textbf{Ours}\\[0.25em]
    \includegraphics[width=\linewidth]{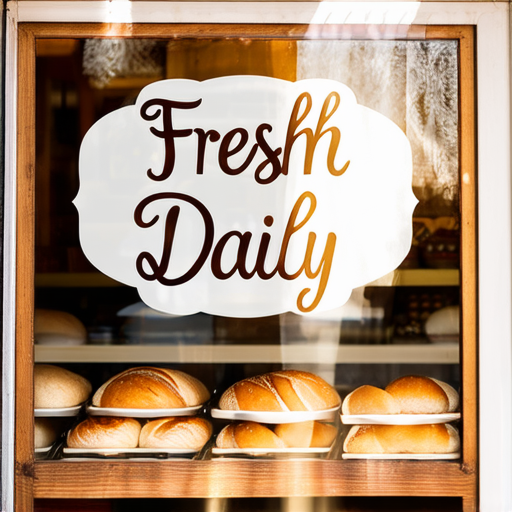}
  \end{minipage}\hfill
  \begin{minipage}[t]{0.32\textwidth}
    \centering
    \textbf{RatioNorm}\\[0.25em]
    \includegraphics[width=\linewidth]{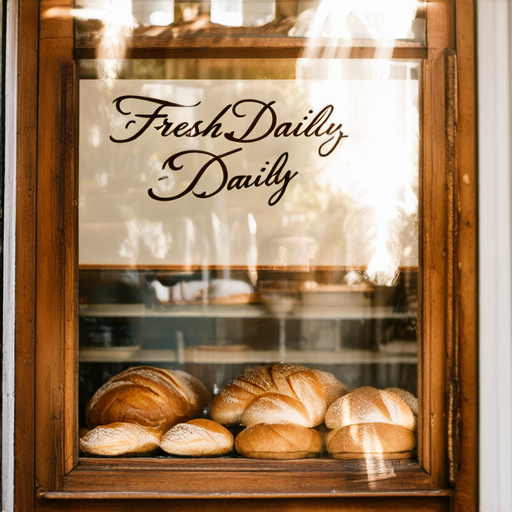}
  \end{minipage}\hfill
  \begin{minipage}[t]{0.32\textwidth}
    \centering
    \textbf{Dual}\\[0.25em]
    \includegraphics[width=\linewidth]{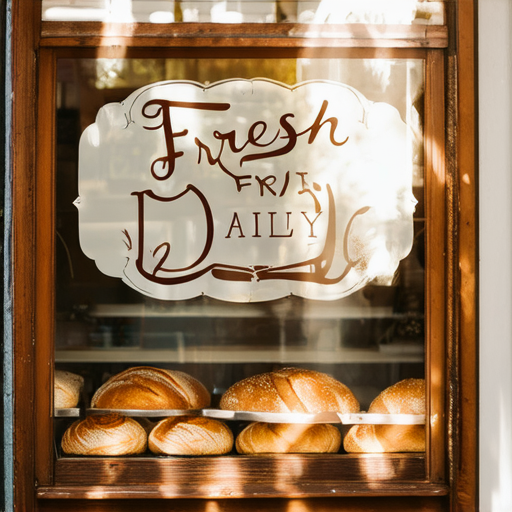}
  \end{minipage}
  \par\smallskip
  {\footnotesize\emph{Prompt.} A charming bakery window with a vintage wooden
  frame, adorned with a decal that reads ``Fresh Daily'' in elegant cursive.
  Sunlight streams through, casting a warm glow on the display of freshly baked
  bread and pastries.}
  \caption{Qualitative held-out test comparison for prompt 145. Target text:
  \texttt{Fresh Daily}. The analytic method gives the cleanest bakery window
  composition and a readable decal, with only a small character-level artifact
  in ``Fresh.'' RatioNorm repeats the text, and the primal-dual output is more
  typographically fragmented.}
  \label{fig:qual-p145}
\end{figure}

\begin{figure}[h]
  \centering
  \begin{minipage}[t]{0.32\textwidth}
    \centering
    \textbf{Ours}\\[0.25em]
    \includegraphics[width=\linewidth]{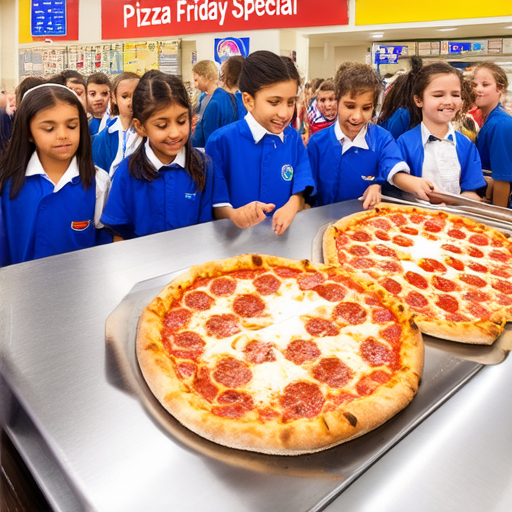}
  \end{minipage}\hfill
  \begin{minipage}[t]{0.32\textwidth}
    \centering
    \textbf{RatioNorm}\\[0.25em]
    \includegraphics[width=\linewidth]{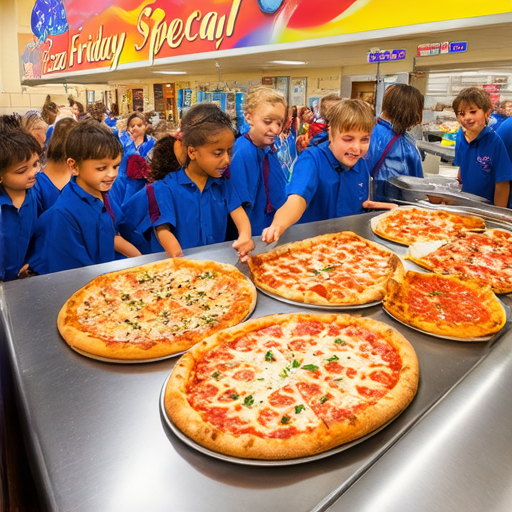}
  \end{minipage}\hfill
  \begin{minipage}[t]{0.32\textwidth}
    \centering
    \textbf{Dual}\\[0.25em]
    \includegraphics[width=\linewidth]{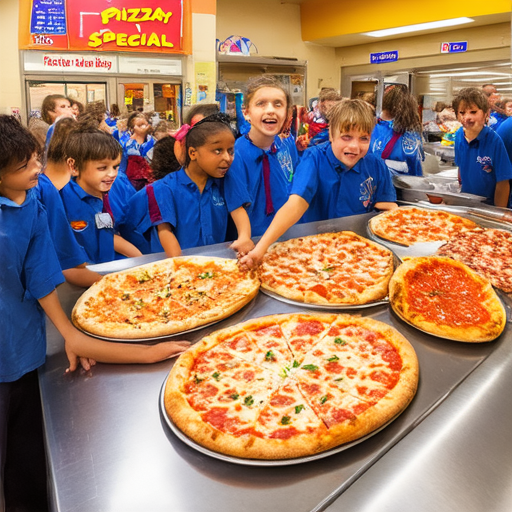}
  \end{minipage}
  \par\smallskip
  {\footnotesize\emph{Prompt.} A bustling school cafeteria on a Friday, with a
  large, colorful sign displaying the ``Pizza Friday Special'' menu. Students in
  vibrant uniforms gather excitedly, pointing at the mouth-watering pizzas
  arranged on the serving counter.}
  \caption{Qualitative held-out test comparison for prompt 366. Target text:
  \texttt{Pizza Friday Special}. $\lambda$-Controlled GRPO writes the full cafeteria banner
  cleanly. RatioNorm and the primal-dual ablation preserve the school-lunch
  scene, but their banner text is partial or corrupted.}
  \label{fig:qual-p366}
\end{figure}

\begin{figure}[h]
  \centering
  \begin{minipage}[t]{0.32\textwidth}
    \centering
    \textbf{Ours}\\[0.25em]
    \includegraphics[width=\linewidth]{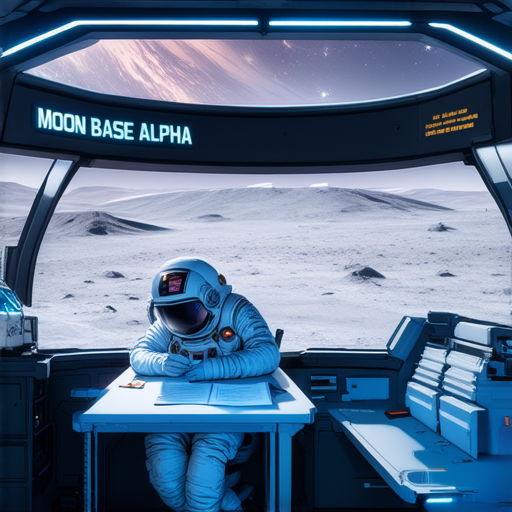}
  \end{minipage}\hfill
  \begin{minipage}[t]{0.32\textwidth}
    \centering
    \textbf{RatioNorm}\\[0.25em]
    \includegraphics[width=\linewidth]{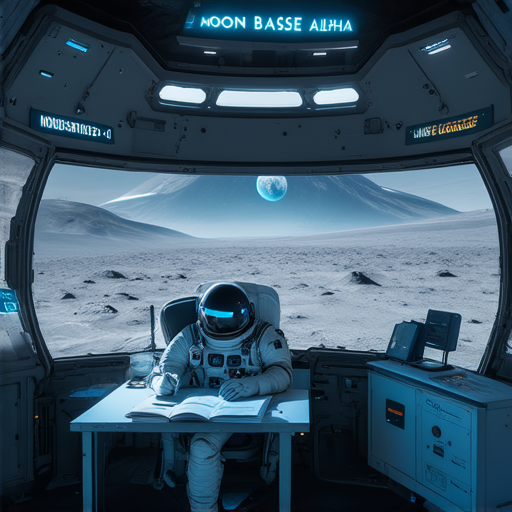}
  \end{minipage}\hfill
  \begin{minipage}[t]{0.32\textwidth}
    \centering
    \textbf{Dual}\\[0.25em]
    \includegraphics[width=\linewidth]{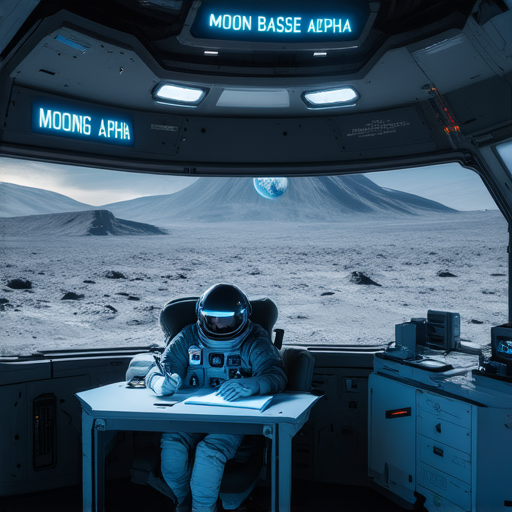}
  \end{minipage}
  \par\smallskip
  {\footnotesize\emph{Prompt.} An astronaut sits at a desk inside Moon Base
  Alpha, writing in a journal. The base's futuristic interior is illuminated by
  soft blue lights, and a large window behind the astronaut showcases the barren
  lunar landscape and Earth rising above the horizon. ``Moon Base Alpha'' is
  prominently displayed on a plaque nearby.}
  \caption{Qualitative held-out test comparison for prompt 747. Target text:
  \texttt{Moon Base Alpha}. $\lambda$-Controlled GRPO keeps the full phrase readable on the base
  sign. The baselines generate coherent lunar-base scenes, but their signage
  repeats or corrupts the target phrase.}
  \label{fig:qual-p747}
\end{figure}

\begin{figure}[h]
  \centering
  \begin{minipage}[t]{0.32\textwidth}
    \centering
    \textbf{Ours}\\[0.25em]
    \includegraphics[width=\linewidth]{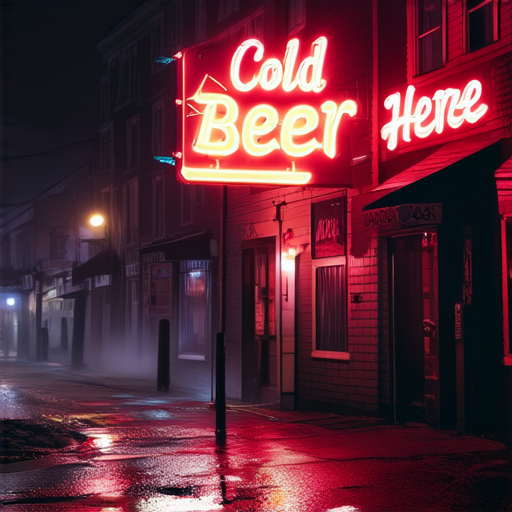}
  \end{minipage}\hfill
  \begin{minipage}[t]{0.32\textwidth}
    \centering
    \textbf{RatioNorm}\\[0.25em]
    \includegraphics[width=\linewidth]{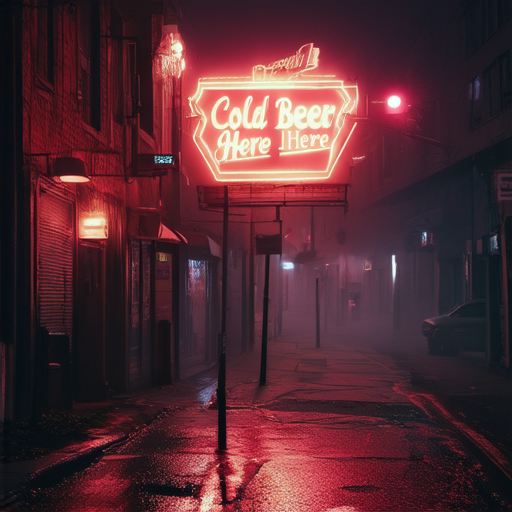}
  \end{minipage}\hfill
  \begin{minipage}[t]{0.32\textwidth}
    \centering
    \textbf{Dual}\\[0.25em]
    \includegraphics[width=\linewidth]{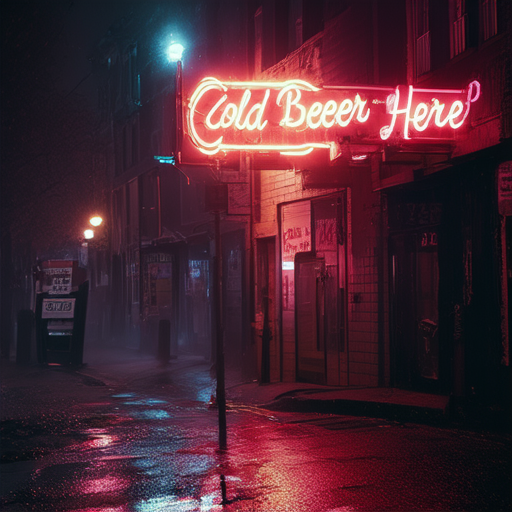}
  \end{minipage}
  \par\smallskip
  {\footnotesize\emph{Prompt.} A gritty urban street at night, with a dive bar
  neon sign glowing brightly, reading ``Cold Beer Here'', casting a warm,
  inviting glow through the misty air, reflecting off the wet pavement.}
  \caption{Qualitative held-out test comparison for prompt 168. Target text:
  \texttt{Cold Beer Here}. The analytic method produces the most legible target
  phrase and strongest neon-street atmosphere. The remaining error is spatial:
  ``Here'' is readable but spills onto the adjacent facade rather than remaining
  entirely within the main sign.}
  \label{fig:qual-p168}
\end{figure}

\begin{figure}[h]
  \centering
  \begin{minipage}[t]{0.32\textwidth}
    \centering
    \textbf{Ours}\\[0.25em]
    \includegraphics[width=\linewidth]{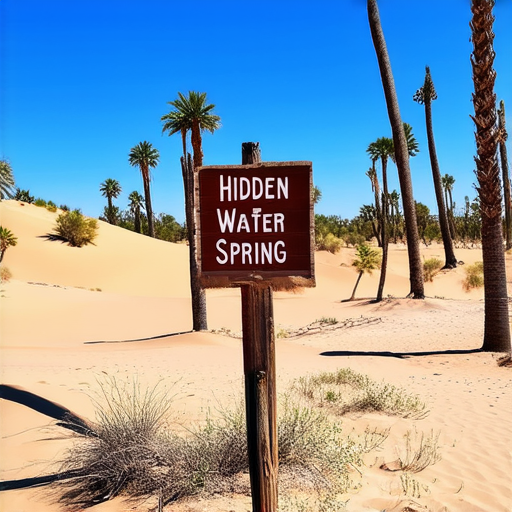}
  \end{minipage}\hfill
  \begin{minipage}[t]{0.32\textwidth}
    \centering
    \textbf{RatioNorm}\\[0.25em]
    \includegraphics[width=\linewidth]{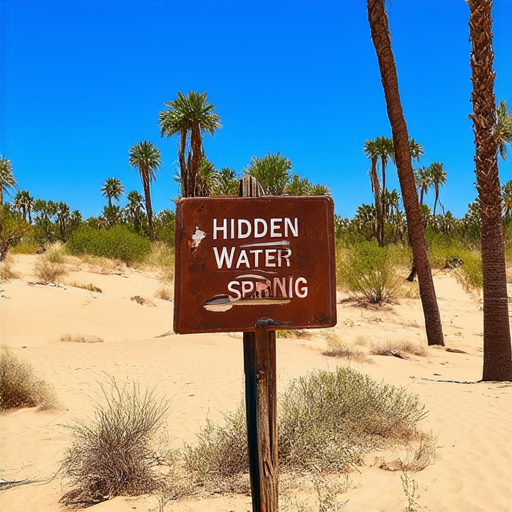}
  \end{minipage}\hfill
  \begin{minipage}[t]{0.32\textwidth}
    \centering
    \textbf{Dual}\\[0.25em]
    \includegraphics[width=\linewidth]{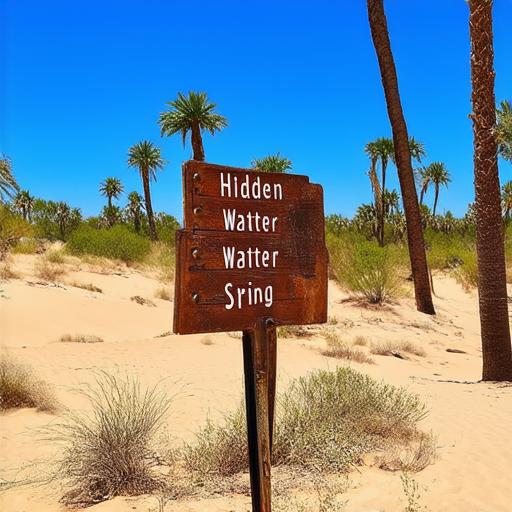}
  \end{minipage}
  \par\smallskip
  {\footnotesize\emph{Prompt.} A serene desert scene with a weathered signpost
  pointing towards ``Hidden Water Spring'', surrounded by tall palm trees and
  golden sand dunes, under a clear blue sky.}
  \caption{Qualitative held-out test comparison for prompt 176. Target text:
  \texttt{Hidden Water Spring}. The analytic method gives an exact, centered,
  readable sign in the intended desert scene. The comparison highlights the kind
  of sample counted by the stricter exact and substring metrics rather than only
  by mean OCR reward.}
  \label{fig:qual-p176}
\end{figure}

\begin{figure}[h]
  \centering
  \begin{minipage}[t]{0.32\textwidth}
    \centering
    \textbf{Ours}\\[0.25em]
    \includegraphics[width=\linewidth]{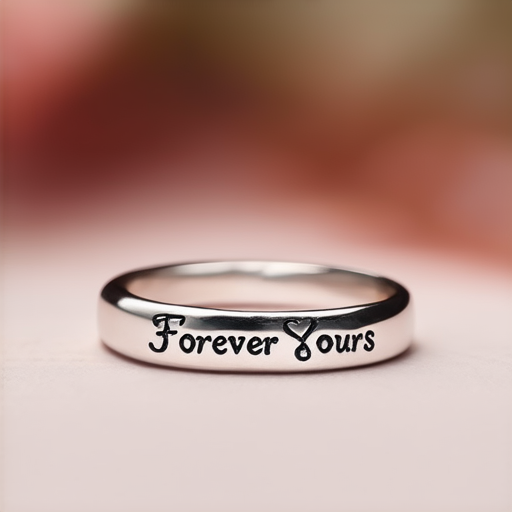}
  \end{minipage}\hfill
  \begin{minipage}[t]{0.32\textwidth}
    \centering
    \textbf{RatioNorm}\\[0.25em]
    \includegraphics[width=\linewidth]{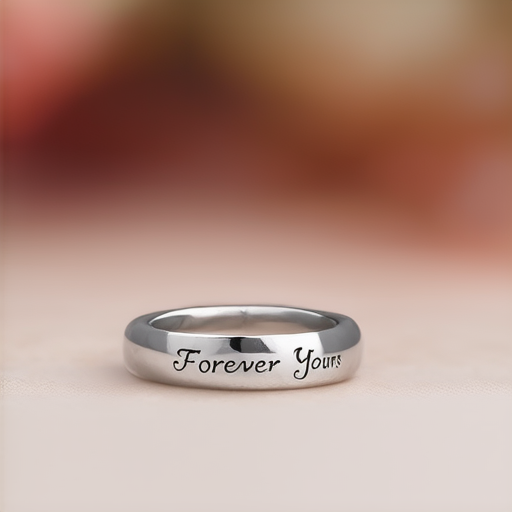}
  \end{minipage}\hfill
  \begin{minipage}[t]{0.32\textwidth}
    \centering
    \textbf{Dual}\\[0.25em]
    \includegraphics[width=\linewidth]{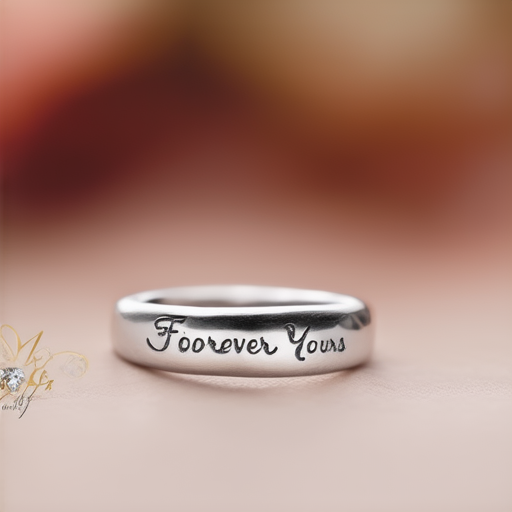}
  \end{minipage}
  \par\smallskip
  {\footnotesize\emph{Prompt.} A close-up photograph of an engraved silver ring
  with the inscription ``Forever Yours'' delicately etched into its surface, set
  against a soft, blurred background of romantic, warm tones.}
  \caption{Qualitative held-out test comparison for prompt 002. Target text:
  \texttt{Forever Yours}. Both $\lambda$-Controlled GRPO and RatioNorm render this
  short phrase cleanly; the primal-dual ablation introduces a character-level
  error.}
  \label{fig:qual-p002}
\end{figure}

\begin{figure}[h]
  \centering
  \begin{minipage}[t]{0.32\textwidth}
    \centering
    \textbf{Ours}\\[0.25em]
    \includegraphics[width=\linewidth]{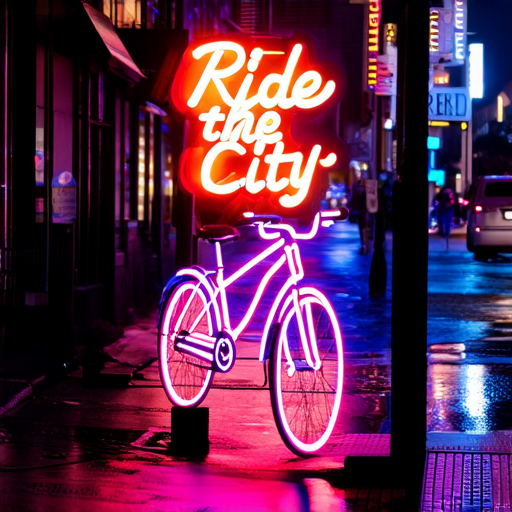}
  \end{minipage}\hfill
  \begin{minipage}[t]{0.32\textwidth}
    \centering
    \textbf{RatioNorm}\\[0.25em]
    \includegraphics[width=\linewidth]{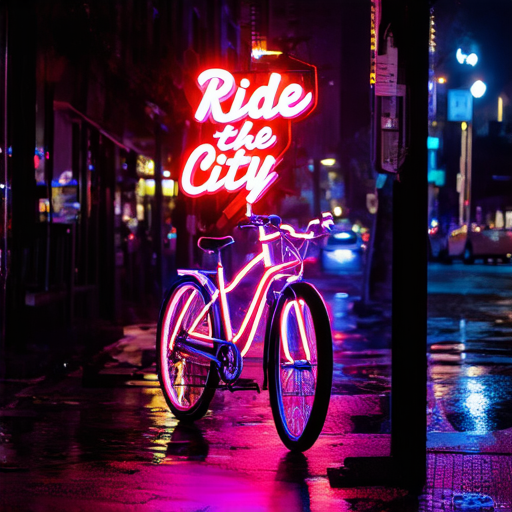}
  \end{minipage}\hfill
  \begin{minipage}[t]{0.32\textwidth}
    \centering
    \textbf{Dual}\\[0.25em]
    \includegraphics[width=\linewidth]{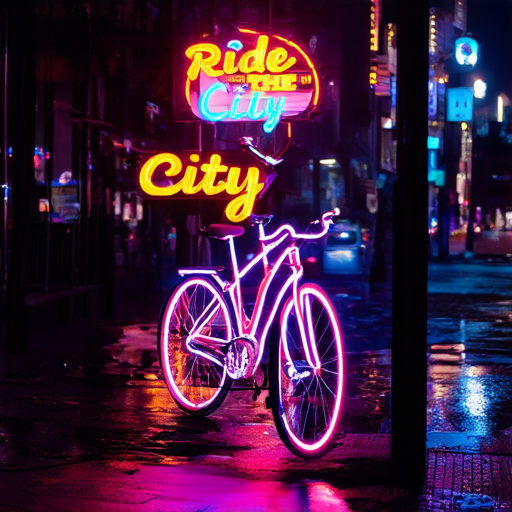}
  \end{minipage}
  \par\smallskip
  {\footnotesize\emph{Prompt.} A neon bike rental sign glowing ``Ride the City''
  stands out against a dark urban backdrop, its vibrant colors reflecting off wet
  pavements in a bustling night scene.}
  \caption{Qualitative held-out test comparison for prompt 047. Target text:
  \texttt{Ride the City}. $\lambda$-Controlled GRPO preserves the complete phrase,
  while the baselines tend to omit or repeat words.}
  \label{fig:qual-p047}
\end{figure}

\begin{figure}[h]
  \centering
  \begin{minipage}[t]{0.32\textwidth}
    \centering
    \textbf{Ours}\\[0.25em]
    \includegraphics[width=\linewidth]{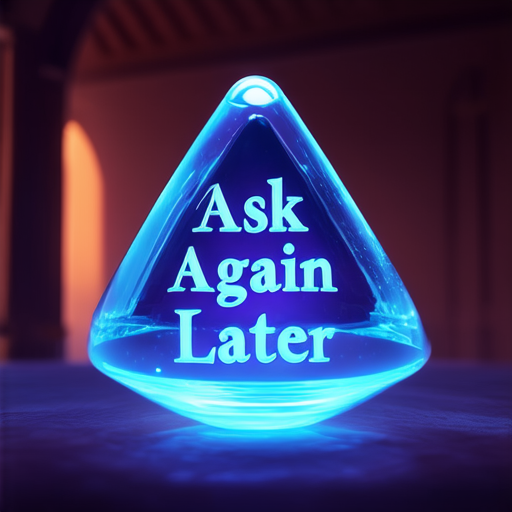}
  \end{minipage}\hfill
  \begin{minipage}[t]{0.32\textwidth}
    \centering
    \textbf{RatioNorm}\\[0.25em]
    \includegraphics[width=\linewidth]{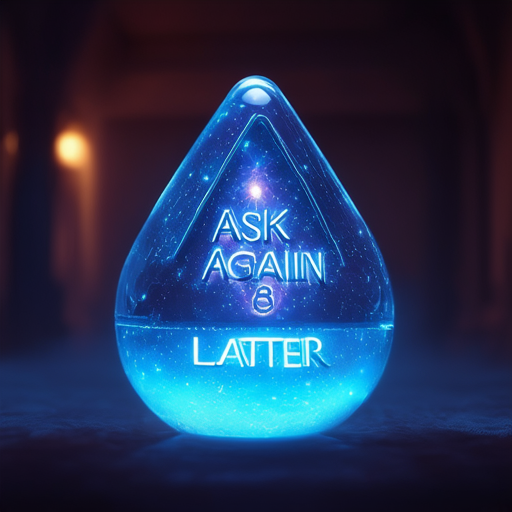}
  \end{minipage}\hfill
  \begin{minipage}[t]{0.32\textwidth}
    \centering
    \textbf{Dual}\\[0.25em]
    \includegraphics[width=\linewidth]{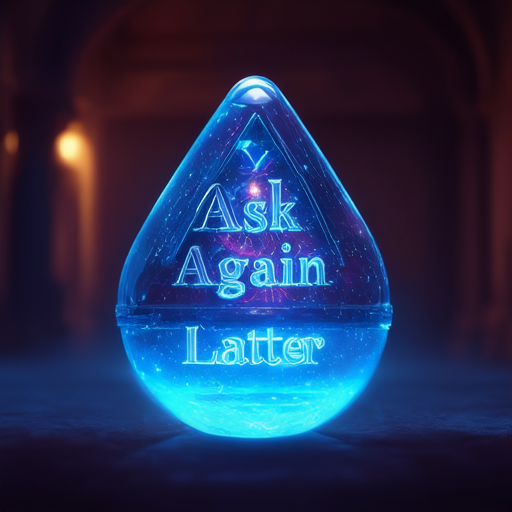}
  \end{minipage}
  \par\smallskip
  {\footnotesize\emph{Prompt.} A glowing Magic 8 Ball floats in a dimly lit
  room, its triangular window displaying the answer ``Ask Again Later'' in
  shimmering, ethereal blue text, surrounded by a soft, mystical aura.}
  \caption{Qualitative held-out test comparison for prompt 130. Target text:
  \texttt{Ask Again Later}. The proposed method keeps the three-word answer
  legible inside the triangular display; the baselines introduce spelling noise.}
  \label{fig:qual-p130}
\end{figure}

\begin{figure}[h]
  \centering
  \begin{minipage}[t]{0.32\textwidth}
    \centering
    \textbf{Ours}\\[0.25em]
    \includegraphics[width=\linewidth]{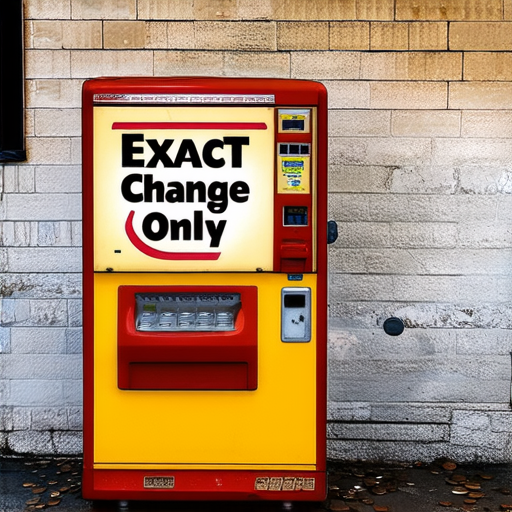}
  \end{minipage}\hfill
  \begin{minipage}[t]{0.32\textwidth}
    \centering
    \textbf{RatioNorm}\\[0.25em]
    \includegraphics[width=\linewidth]{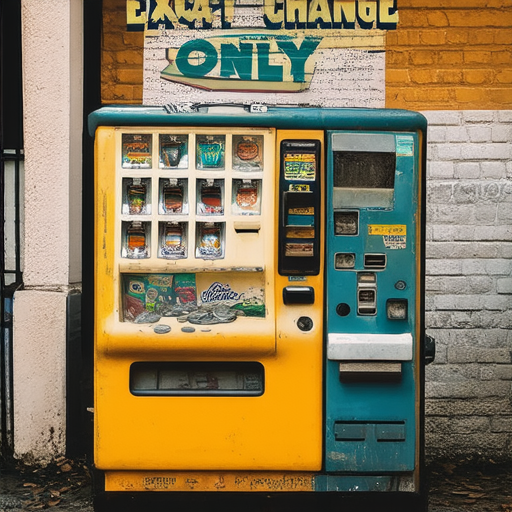}
  \end{minipage}\hfill
  \begin{minipage}[t]{0.32\textwidth}
    \centering
    \textbf{Dual}\\[0.25em]
    \includegraphics[width=\linewidth]{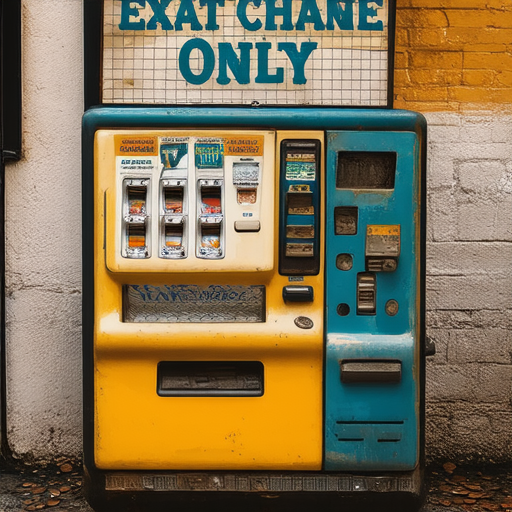}
  \end{minipage}
  \par\smallskip
  {\footnotesize\emph{Prompt.} A realistic photograph of a vintage vending
  machine with a prominent sign that reads ``Exact Change Only'', set against a
  slightly worn brick wall, with a few coins scattered at its base.}
  \caption{Qualitative held-out test comparison for prompt 321. Target text:
  \texttt{Exact Change Only}. $\lambda$-Controlled GRPO places the phrase on the
  vending machine sign, while the baselines produce less recognizable signage.}
  \label{fig:qual-p321}
\end{figure}

\begin{figure}[h]
  \centering
  \begin{minipage}[t]{0.32\textwidth}
    \centering
    \textbf{Ours}\\[0.25em]
    \includegraphics[width=\linewidth]{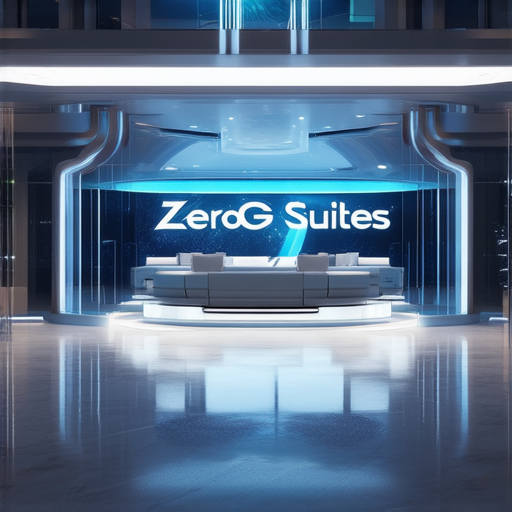}
  \end{minipage}\hfill
  \begin{minipage}[t]{0.32\textwidth}
    \centering
    \textbf{RatioNorm}\\[0.25em]
    \includegraphics[width=\linewidth]{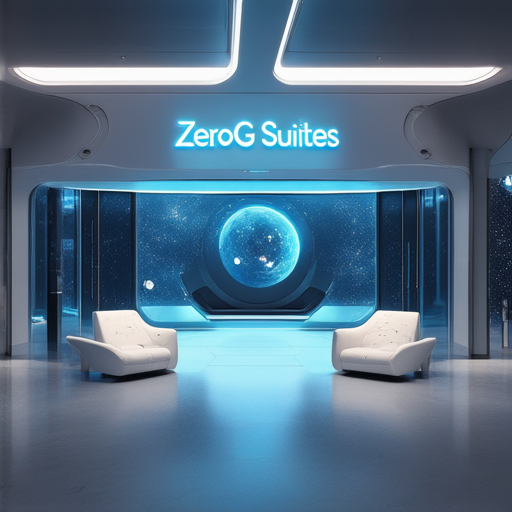}
  \end{minipage}\hfill
  \begin{minipage}[t]{0.32\textwidth}
    \centering
    \textbf{Dual}\\[0.25em]
    \includegraphics[width=\linewidth]{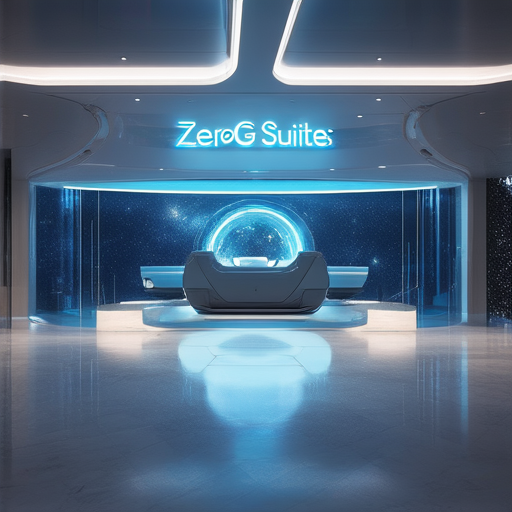}
  \end{minipage}
  \par\smallskip
  {\footnotesize\emph{Prompt.} A futuristic space hotel lobby with a sleek,
  glowing sign that reads ``ZeroG Suites'', surrounded by minimalist decor and
  large windows showcasing the vastness of space outside.}
  \caption{Qualitative held-out test comparison for prompt 456. Target text:
  \texttt{ZeroG Suites}. The proposed method renders the hotel sign cleanly; the
  baselines are close but introduce extra or incorrect letters.}
  \label{fig:qual-p456}
\end{figure}

\begin{figure}[h]
  \centering
  \begin{minipage}[t]{0.32\textwidth}
    \centering
    \textbf{Ours}\\[0.25em]
    \includegraphics[width=\linewidth]{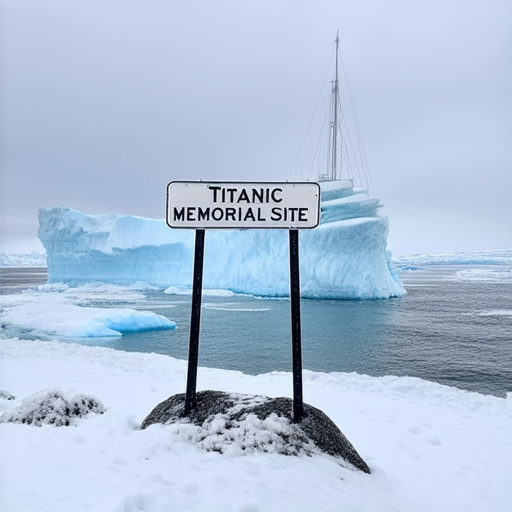}
  \end{minipage}\hfill
  \begin{minipage}[t]{0.32\textwidth}
    \centering
    \textbf{RatioNorm}\\[0.25em]
    \includegraphics[width=\linewidth]{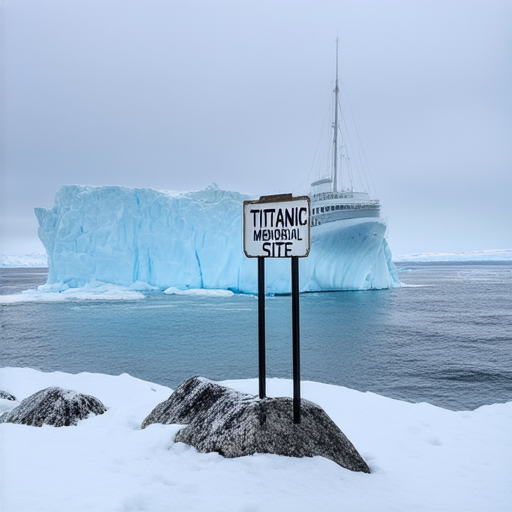}
  \end{minipage}\hfill
  \begin{minipage}[t]{0.32\textwidth}
    \centering
    \textbf{Dual}\\[0.25em]
    \includegraphics[width=\linewidth]{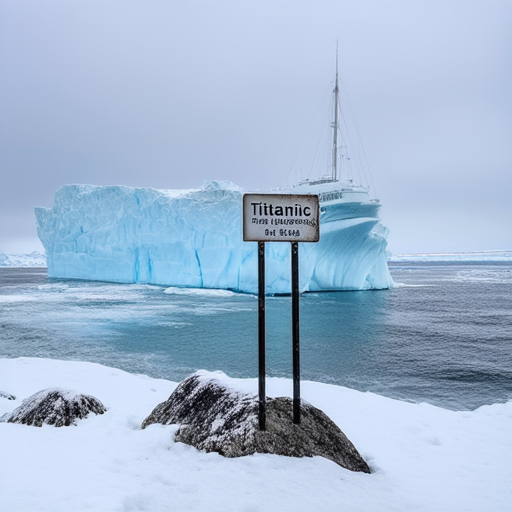}
  \end{minipage}
  \par\smallskip
  {\footnotesize\emph{Prompt.} A vast, icy landscape with a stark, metal sign
  reading ``Titanic Memorial Site'' standing firmly on a snow-covered rock near a
  towering iceberg, the cold waters of the North Atlantic stretching endlessly
  into the horizon.}
  \caption{Qualitative held-out test comparison for prompt 654. Target text:
  \texttt{Titanic Memorial Site}. $\lambda$-Controlled GRPO preserves the full
  memorial sign text in the icy scene, while the baselines retain only fragments.}
  \label{fig:qual-p654}
\end{figure}

\begin{figure}[h]
  \centering
  \begin{minipage}[t]{0.32\textwidth}
    \centering
    \textbf{Ours}\\[0.25em]
    \includegraphics[width=\linewidth]{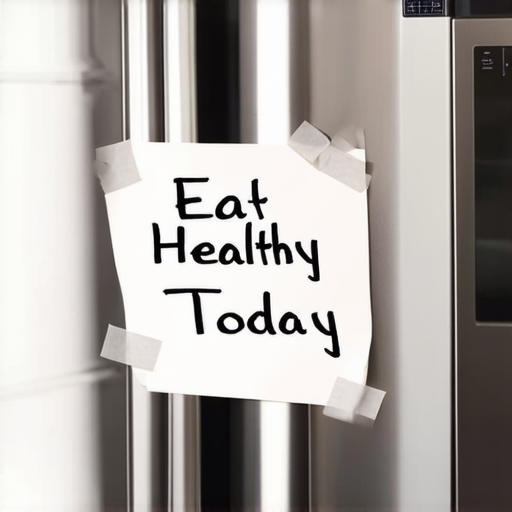}
  \end{minipage}\hfill
  \begin{minipage}[t]{0.32\textwidth}
    \centering
    \textbf{RatioNorm}\\[0.25em]
    \includegraphics[width=\linewidth]{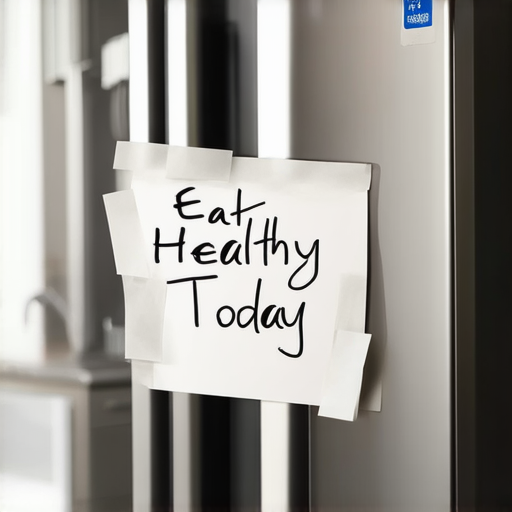}
  \end{minipage}\hfill
  \begin{minipage}[t]{0.32\textwidth}
    \centering
    \textbf{Dual}\\[0.25em]
    \includegraphics[width=\linewidth]{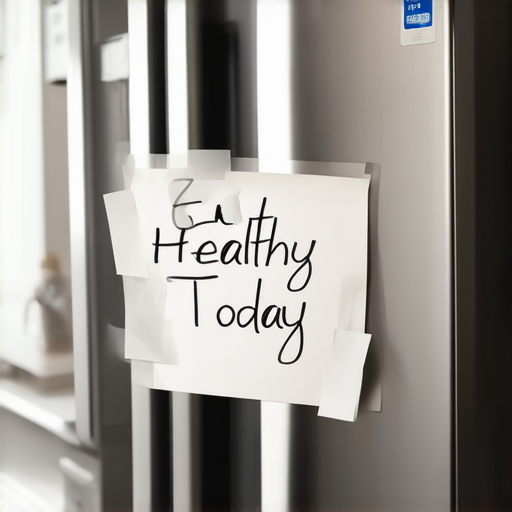}
  \end{minipage}
  \par\smallskip
  {\footnotesize\emph{Prompt.} A realistic photograph of a handwritten note,
  with the words ``Eat Healthy Today'' clearly visible, taped to the door of a
  modern refrigerator in a well-lit kitchen.}
  \caption{Qualitative held-out test comparison for prompt 706. Target text:
  \texttt{Eat Healthy Today}. The proposed method keeps the note text readable;
  the baselines drop or corrupt parts of the phrase.}
  \label{fig:qual-p706}
\end{figure}

\clearpage
\subsection{Mixed and failure cases}
\label{sec:qualitative-mixed-failures}

The examples above show where the analytic intervention is most visually clear.
For calibration, we also scan the disjoint held-out test complement for prompts
where the proposed method is comparable to RatioNorm, worse than RatioNorm, or
where all methods remain far from the requested text. These cases are useful
scientifically: they separate an average improvement in transcript success from
a claim of uniform dominance.

\begin{figure}[h]
  \centering
  \begin{minipage}[t]{0.32\textwidth}
    \centering
    \textbf{Ours}\\[0.25em]
    \includegraphics[width=\linewidth]{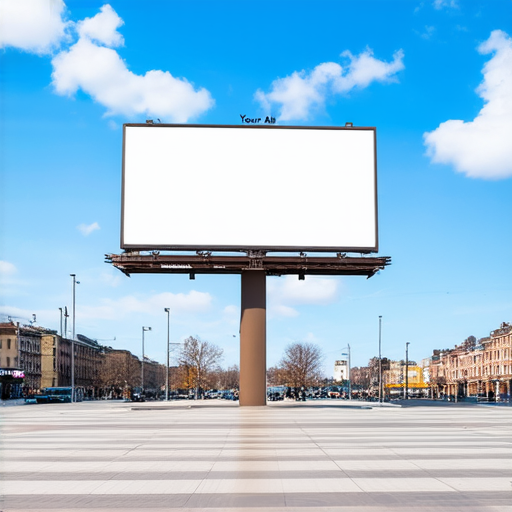}
  \end{minipage}\hfill
  \begin{minipage}[t]{0.32\textwidth}
    \centering
    \textbf{RatioNorm}\\[0.25em]
    \includegraphics[width=\linewidth]{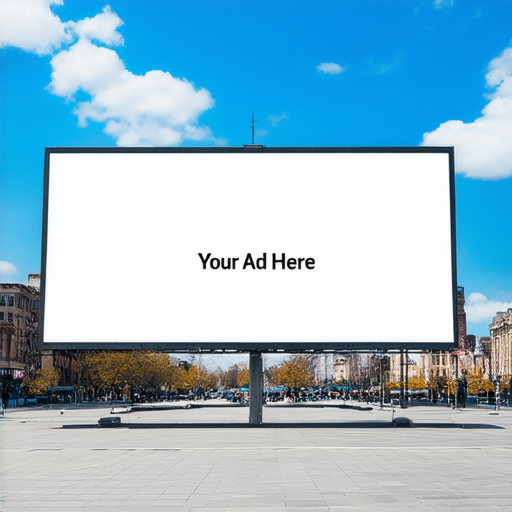}
  \end{minipage}\hfill
  \begin{minipage}[t]{0.32\textwidth}
    \centering
    \textbf{Dual}\\[0.25em]
    \includegraphics[width=\linewidth]{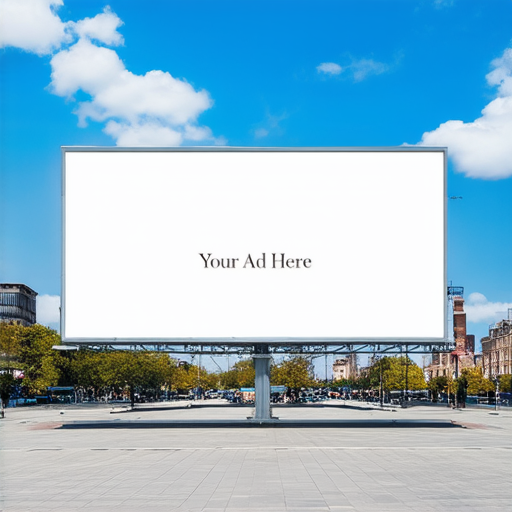}
  \end{minipage}
  \par\smallskip
  {\footnotesize\emph{Prompt.} A vast, empty city square with a large, blank
  billboard standing prominently, awaiting the words ``Your Ad Here'' to be
  filled in, under a clear blue sky with a few fluffy clouds.}
  \caption{Mixed held-out test comparison for prompt 718. Target text:
  \texttt{Your Ad Here}. This is a clear negative case for the proposed method:
  $\lambda$-Controlled GRPO leaves the billboard effectively blank, while both
  empirical RatioNorm and the primal-dual ablation place the intended phrase
  cleanly in the center.}
  \label{fig:qual-mixed-p718}
\end{figure}

\begin{figure}[h]
  \centering
  \begin{minipage}[t]{0.32\textwidth}
    \centering
    \textbf{Ours}\\[0.25em]
    \includegraphics[width=\linewidth]{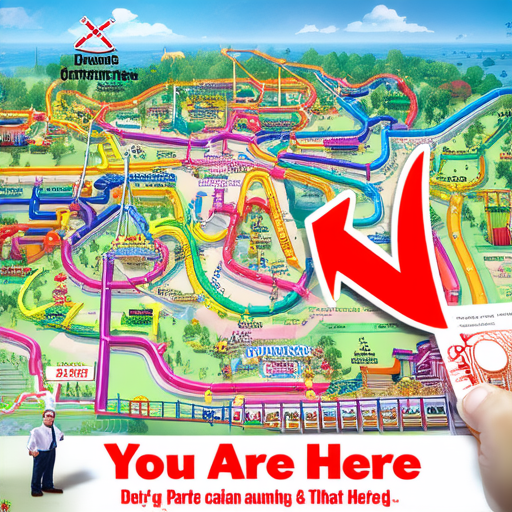}
  \end{minipage}\hfill
  \begin{minipage}[t]{0.32\textwidth}
    \centering
    \textbf{RatioNorm}\\[0.25em]
    \includegraphics[width=\linewidth]{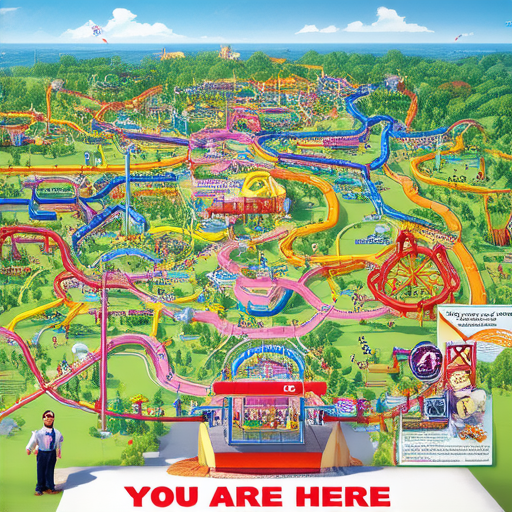}
  \end{minipage}\hfill
  \begin{minipage}[t]{0.32\textwidth}
    \centering
    \textbf{Dual}\\[0.25em]
    \includegraphics[width=\linewidth]{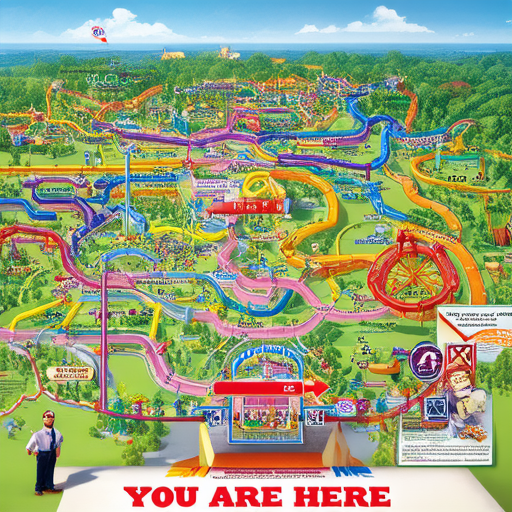}
  \end{minipage}
  \par\smallskip
  {\footnotesize\emph{Prompt.} A detailed amusement park map with colorful
  pathways and attractions, prominently marking ``You Are Here'' with a large,
  red arrow. The map is held by a friendly park employee, standing in front of a
  vibrant, bustling entrance.}
  \caption{Mixed held-out test comparison for prompt 617. Target text:
  \texttt{You Are Here}. The proposed method renders the main phrase but adds
  extra corrupted text, which hurts the transcript metric. RatioNorm and the
  primal-dual ablation give cleaner target text in this particular sample.}
  \label{fig:qual-mixed-p617}
\end{figure}

\begin{figure}[h]
  \centering
  \begin{minipage}[t]{0.32\textwidth}
    \centering
    \textbf{Ours}\\[0.25em]
    \includegraphics[width=\linewidth]{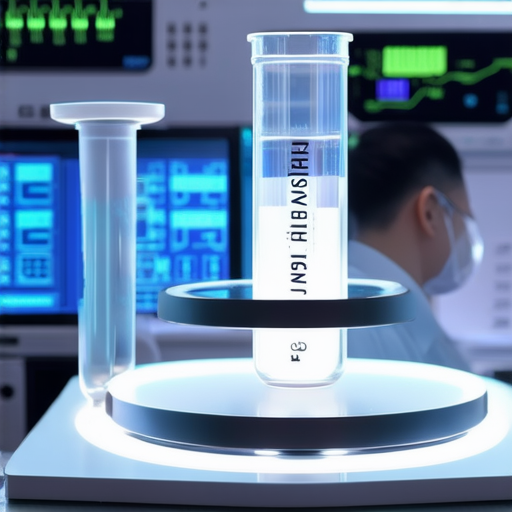}
  \end{minipage}\hfill
  \begin{minipage}[t]{0.32\textwidth}
    \centering
    \textbf{RatioNorm}\\[0.25em]
    \includegraphics[width=\linewidth]{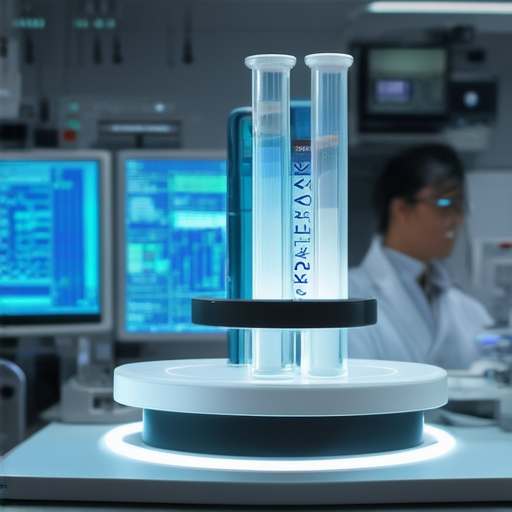}
  \end{minipage}\hfill
  \begin{minipage}[t]{0.32\textwidth}
    \centering
    \textbf{Dual}\\[0.25em]
    \includegraphics[width=\linewidth]{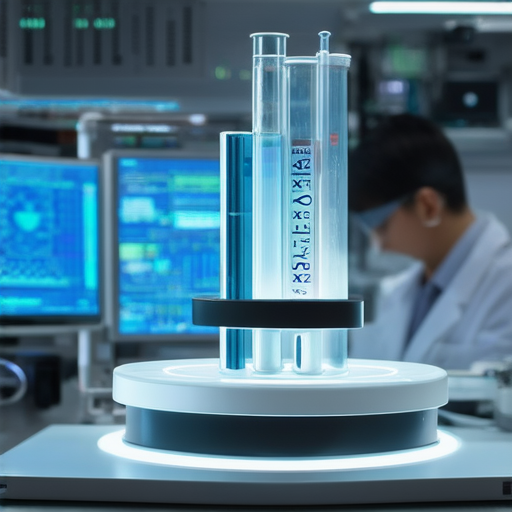}
  \end{minipage}
  \par\smallskip
  {\footnotesize\emph{Prompt.} A high-tech laboratory setting with a test tube
  labeled ``Sample XZ42'' on a sleek, illuminated stand, surrounded by advanced
  scientific equipment and glowing monitors displaying complex data. The scene
  is modern and sterile, with a scientist in the background observing through a
  protective visor.}
  \caption{Failure held-out test comparison for prompt 026. Target text:
  \texttt{Sample XZ42}. All three methods produce plausible laboratory imagery,
  but none renders the alphanumeric label correctly. This illustrates a residual
  failure mode for short technical strings and small curved surfaces.}
  \label{fig:qual-failure-p026}
\end{figure}

\begin{figure}[h]
  \centering
  \begin{minipage}[t]{0.32\textwidth}
    \centering
    \textbf{Ours}\\[0.25em]
    \includegraphics[width=\linewidth]{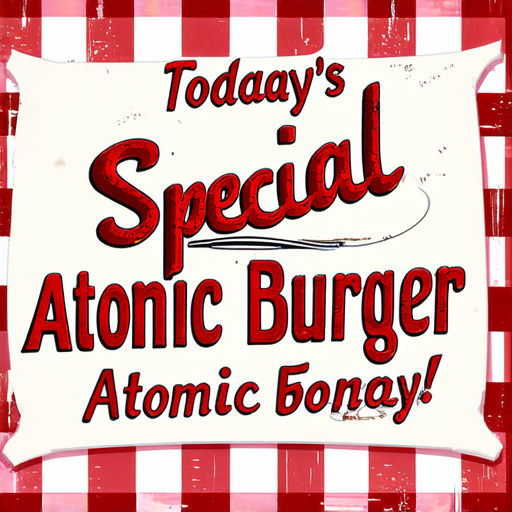}
  \end{minipage}\hfill
  \begin{minipage}[t]{0.32\textwidth}
    \centering
    \textbf{RatioNorm}\\[0.25em]
    \includegraphics[width=\linewidth]{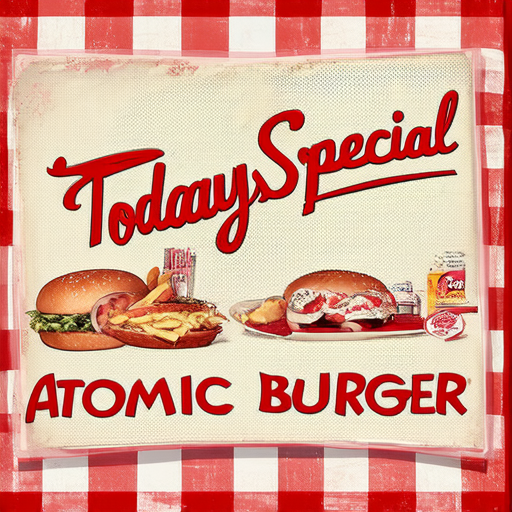}
  \end{minipage}\hfill
  \begin{minipage}[t]{0.32\textwidth}
    \centering
    \textbf{Dual}\\[0.25em]
    \includegraphics[width=\linewidth]{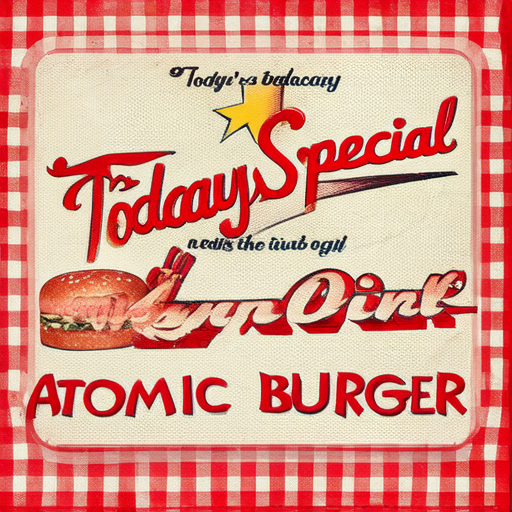}
  \end{minipage}
  \par\smallskip
  {\footnotesize\emph{Prompt.} Retro diner scene with a red and white checkered
  placemat featuring the text ``Todays Special Atomic Burger'' in bold, vintage
  font. The placemat is slightly worn, with a classic 1950s diner background.}
  \caption{Mixed held-out test comparison for prompt 159. Target text:
  \texttt{Todays Special Atomic Burger}. This is a case where empirical RatioNorm
  is better: it keeps the main phrase nearly intact, while $\lambda$-Controlled
  GRPO produces a plausible diner placemat but scrambles the target into extra
  and reordered words.}
  \label{fig:qual-mixed-p159}
\end{figure}

\begin{figure}[h]
  \centering
  \begin{minipage}[t]{0.32\textwidth}
    \centering
    \textbf{Ours}\\[0.25em]
    \includegraphics[width=\linewidth]{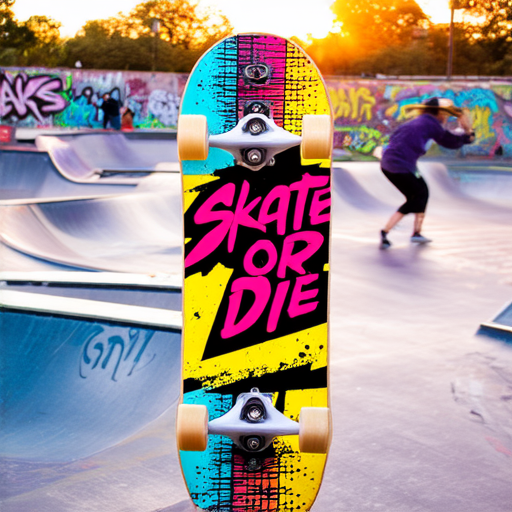}
  \end{minipage}\hfill
  \begin{minipage}[t]{0.32\textwidth}
    \centering
    \textbf{RatioNorm}\\[0.25em]
    \includegraphics[width=\linewidth]{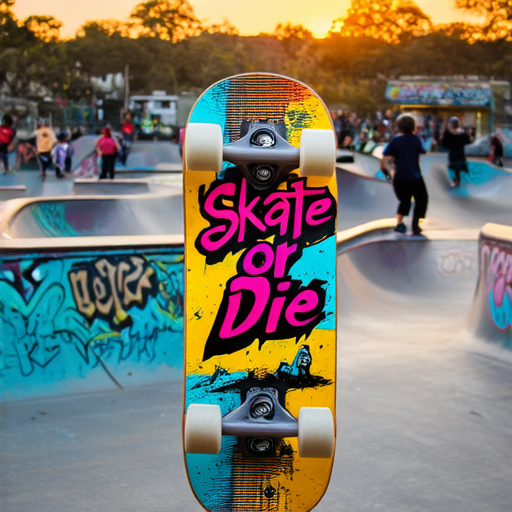}
  \end{minipage}\hfill
  \begin{minipage}[t]{0.32\textwidth}
    \centering
    \textbf{Dual}\\[0.25em]
    \includegraphics[width=\linewidth]{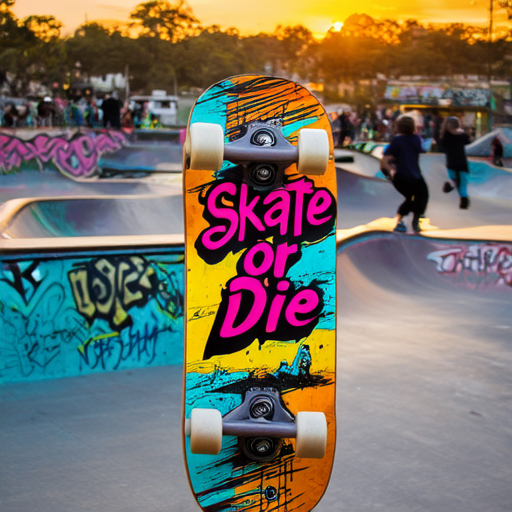}
  \end{minipage}
  \par\smallskip
  {\footnotesize\emph{Prompt.} ``Skate or Die'' slogan prominently displayed on a
  vibrant, colorful skateboard deck, set against a backdrop of a bustling urban
  skate park at sunset, with skaters in motion and graffiti-covered walls,
  capturing the rebellious spirit and dynamic energy of skate culture.}
  \caption{Mixed held-out test comparison for prompt 054. Target text:
  \texttt{Skate or Die}. All three samples are visually plausible skateboard-deck
  renderings, but this is not a clean win for the proposed method: the OCR system
  fails on the stylized version, while the RatioNorm and primal-dual samples are
  closer to the normalized transcript.}
  \label{fig:qual-mixed-p054}
\end{figure}

\begin{figure}[h]
  \centering
  \begin{minipage}[t]{0.32\textwidth}
    \centering
    \textbf{Ours}\\[0.25em]
    \includegraphics[width=\linewidth]{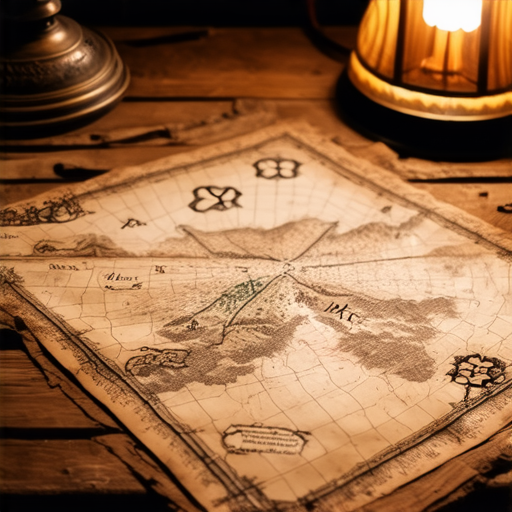}
  \end{minipage}\hfill
  \begin{minipage}[t]{0.32\textwidth}
    \centering
    \textbf{RatioNorm}\\[0.25em]
    \includegraphics[width=\linewidth]{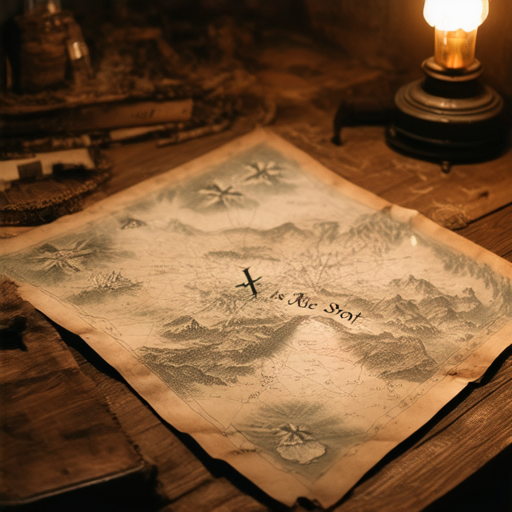}
  \end{minipage}\hfill
  \begin{minipage}[t]{0.32\textwidth}
    \centering
    \textbf{Dual}\\[0.25em]
    \includegraphics[width=\linewidth]{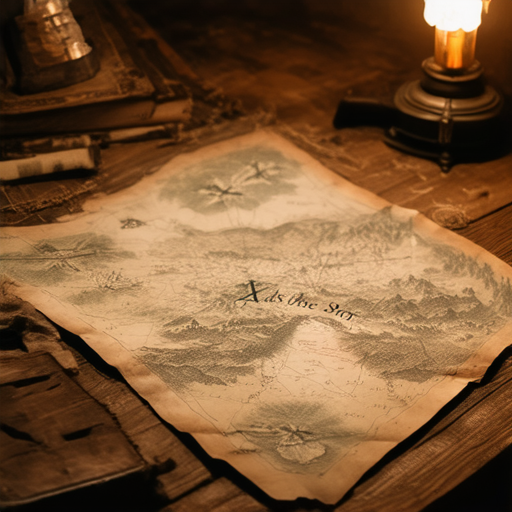}
  \end{minipage}
  \par\smallskip
  {\footnotesize\emph{Prompt.} A weathered treasure map laid out on an old wooden
  table, with ``X Marks the Spot'' clearly visible in the center, surrounded by
  intricate illustrations of mountains, forests, and a distant coastline, all
  under the warm glow of a vintage lamp.}
  \caption{Failure held-out test comparison for prompt 040. Target text:
  \texttt{X Marks the Spot}. All methods capture the map-and-lamp composition,
  but none renders the complete target phrase. This suggests that some failures
  are driven by scene/text placement difficulty rather than only by ratio
  instability.}
  \label{fig:qual-failure-p040}
\end{figure}

\clearpage
\Cref{tab:sd35-pickscore-val} reports the per-checkpoint PickScore validation
trajectory that underlies the winner selection in \cref{tab:sd35-pickscore-test}.

\begin{table}[htbp]
  \centering
  \small
  \begin{tabular}{@{}lrrrr@{}}
    \toprule
    Method
      & Ckpt.
      & Mean PickScore $\uparrow$
      & Std.
      & $\Lambda_{\rm late}$ $\downarrow$ \\
    \midrule
    RatioNorm
      & $40$ & $0.8325$ & $0.057$ & $3.8\!\times\!10^{-5}$ \\
    RatioNorm
      & $60$ & $0.8266$ & $0.057$ & $6.4\!\times\!10^{-5}$ \\
    RatioNorm
      & $80$ & $0.8126$ & $0.059$ & $6.80\!\times\!10^{-3}$ \\
    $\lambda$-Controlled GRPO
      & $40$ & $0.8365$ & $0.057$ & $4.65\!\times\!10^{-3}$ \\
    $\lambda$-Controlled GRPO
      & $60$ & $0.8373$ & $0.055$ & $1.9\!\times\!10^{-4}$ \\
    $\lambda$-Controlled GRPO
      & $80$ & $\mathbf{0.8380}$ & $0.057$ & $3.34\!\times\!10^{-3}$ \\
    \bottomrule
  \end{tabular}
  \caption{Random 256-prompt SD3.5 PickScore validation comparison. Every
  $\lambda$-Controlled GRPO checkpoint beats every RatioNorm checkpoint.
  RatioNorm peaks at step 40 and then regresses as its
  late-step $\lambda$ spend jumps past $\tau\approx 2.0\!\times\!10^{-3}$ by
  step 80, an overspend of about $3.4\tau$. The analytic method improves
  PickScore monotonically across $40\!\to\!60\!\to\!80$ and keeps
  $\Lambda_{\rm late}$ at most comparable to $\tau$.}
  \label{tab:sd35-pickscore-val}
\end{table}

\section{Qualitative held-out PickScore comparisons}
\label{sec:qualitative-gallery-pickscore}

The PickScore aggregate result in \cref{tab:sd35-pickscore-test} is illustrated
by a qualitative sweep on the same $762$-prompt held-out test complement.
Comparisons are matched by prompt index and random seed across methods:
empirical RatioNorm at step 40 (the validation-favored early stop) and
$\lambda$-Controlled GRPO at step 80 (the validation-favored late stop). Every
example below is from the disjoint
$762$-prompt held-out complement and was not used for checkpoint selection.
Captions report the prompt and the raw PickScore values verbatim, so the reader
can judge each pair. The first block shows PickScore wins for the analytic
method, the second block shows near-ties or PickScore losses, and the third
block shows cases where both methods score low on the reward model.

\subsection{PickScore hits (analytic method wins)}

\begin{figure}[h]
  \centering
  \begin{minipage}[t]{0.48\textwidth}
    \centering
    \textbf{$\lambda$-Controlled GRPO}\\[0.25em]
    \includegraphics[width=\linewidth]{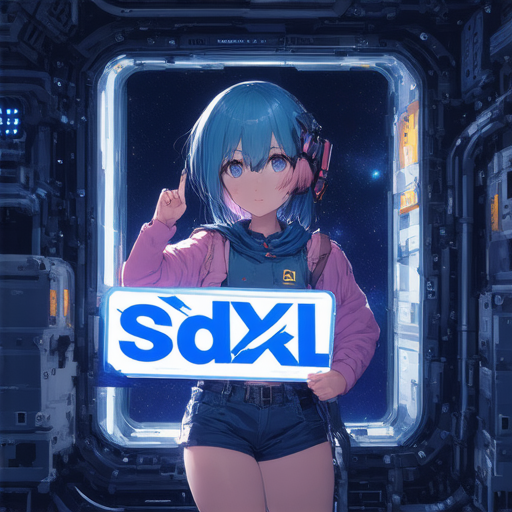}
  \end{minipage}\hfill
  \begin{minipage}[t]{0.48\textwidth}
    \centering
    \textbf{RatioNorm}\\[0.25em]
    \includegraphics[width=\linewidth]{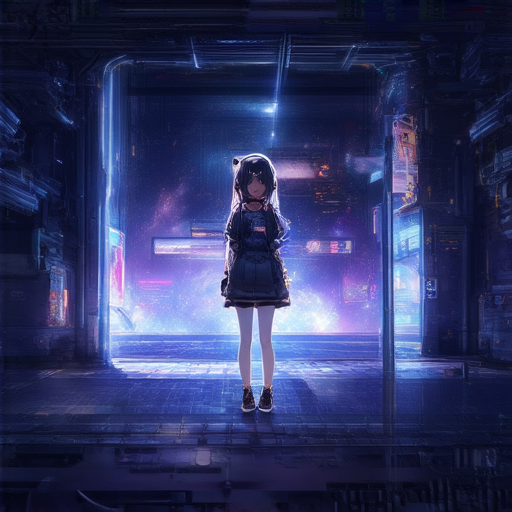}
  \end{minipage}
  \par\smallskip
  {\footnotesize\emph{Prompt.} anime girl in the space with a sign saying
  `sdxl' as text,4k}
  \caption{Held-out test prompt $274$. PickScore: Ours $=0.9137$, RatioNorm
  $=0.8235$, $\Delta=+0.0902$.}
  \label{fig:qual-pickscore-p0274}
\end{figure}

\begin{figure}[h]
  \centering
  \begin{minipage}[t]{0.48\textwidth}
    \centering
    \textbf{$\lambda$-Controlled GRPO}\\[0.25em]
    \includegraphics[width=\linewidth]{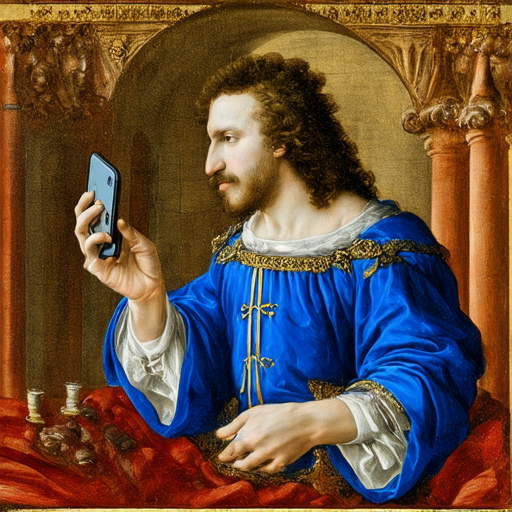}
  \end{minipage}\hfill
  \begin{minipage}[t]{0.48\textwidth}
    \centering
    \textbf{RatioNorm}\\[0.25em]
    \includegraphics[width=\linewidth]{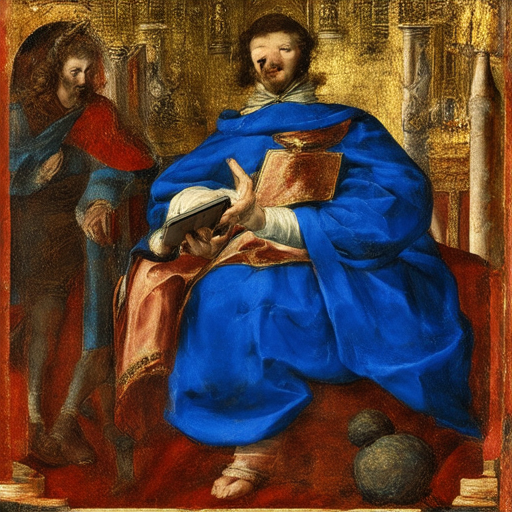}
  \end{minipage}
  \par\smallskip
  {\footnotesize\emph{Prompt.} medieval painting of a man in a blue gown using
  a cellphone}
  \caption{Held-out test prompt $521$. PickScore: Ours $=0.9014$, RatioNorm
  $=0.8180$, $\Delta=+0.0834$.}
  \label{fig:qual-pickscore-p0521}
\end{figure}

\begin{figure}[h]
  \centering
  \begin{minipage}[t]{0.48\textwidth}
    \centering
    \textbf{$\lambda$-Controlled GRPO}\\[0.25em]
    \includegraphics[width=\linewidth]{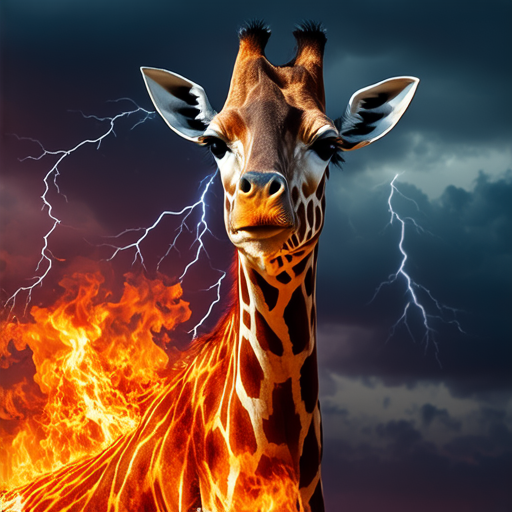}
  \end{minipage}\hfill
  \begin{minipage}[t]{0.48\textwidth}
    \centering
    \textbf{RatioNorm}\\[0.25em]
    \includegraphics[width=\linewidth]{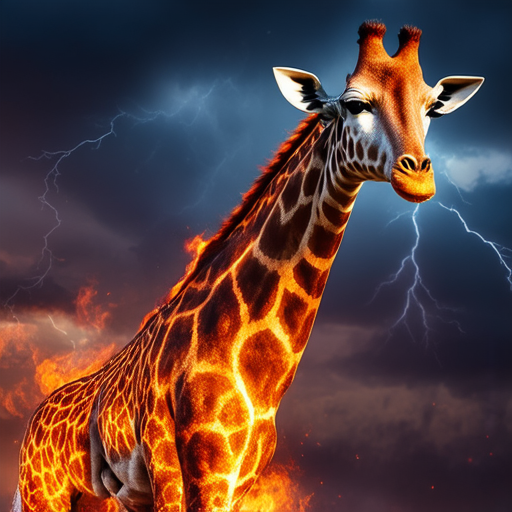}
  \end{minipage}
  \par\smallskip
  {\footnotesize\emph{Prompt.} portrait of a giraffe in a fiery thunderstorm,
  digital art, hyper detailed}
  \caption{Held-out test prompt $853$. PickScore: Ours $=0.9170$, RatioNorm
  $=0.8512$, $\Delta=+0.0658$.}
  \label{fig:qual-pickscore-p0853}
\end{figure}

\begin{figure}[h]
  \centering
  \begin{minipage}[t]{0.48\textwidth}
    \centering
    \textbf{$\lambda$-Controlled GRPO}\\[0.25em]
    \includegraphics[width=\linewidth]{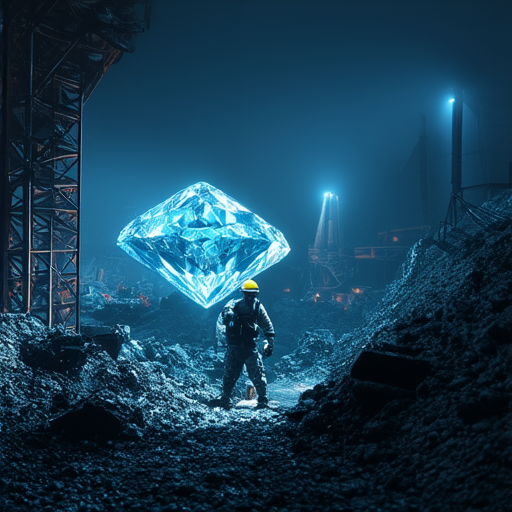}
  \end{minipage}\hfill
  \begin{minipage}[t]{0.48\textwidth}
    \centering
    \textbf{RatioNorm}\\[0.25em]
    \includegraphics[width=\linewidth]{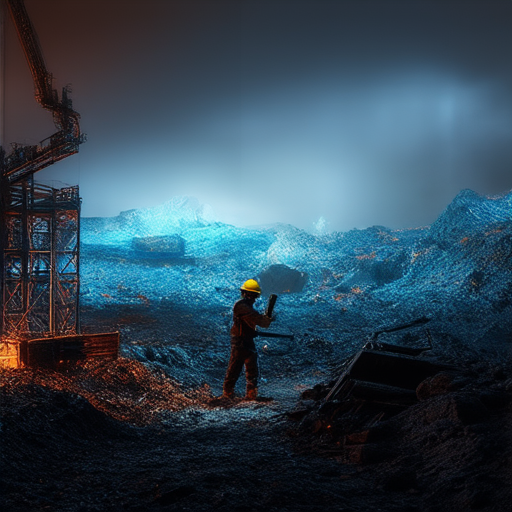}
  \end{minipage}
  \par\smallskip
  {\footnotesize\emph{Prompt.} Working in a diamond mine, Midjourney v5 style,
  insanely detailed, photorealistic, 8k, volumetric lighting}
  \caption{Held-out test prompt $223$. PickScore: Ours $=0.8623$, RatioNorm
  $=0.7737$, $\Delta=+0.0886$.}
  \label{fig:qual-pickscore-p0223}
\end{figure}

\begin{figure}[h]
  \centering
  \begin{minipage}[t]{0.48\textwidth}
    \centering
    \textbf{$\lambda$-Controlled GRPO}\\[0.25em]
    \includegraphics[width=\linewidth]{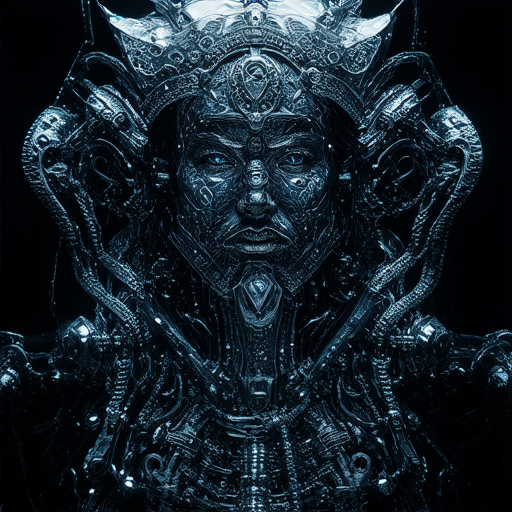}
  \end{minipage}\hfill
  \begin{minipage}[t]{0.48\textwidth}
    \centering
    \textbf{RatioNorm}\\[0.25em]
    \includegraphics[width=\linewidth]{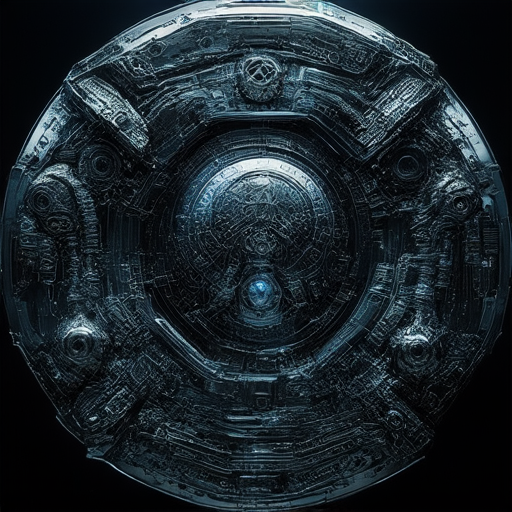}
  \end{minipage}
  \par\smallskip
  {\footnotesize\emph{Prompt.} Cyberpunk, Ancient India style, sci fi, silver
  on a black background, bas-relief, cyborgs, neon lighting, contrasting
  shadows, three-dimensional sculpture, high resolution, 8k detail, baroque,
  clear edges, technology, mechanisms}
  \caption{Held-out test prompt $682$. PickScore: Ours $=0.8182$, RatioNorm
  $=0.7329$, $\Delta=+0.0853$.}
  \label{fig:qual-pickscore-p0682}
\end{figure}

\begin{figure}[h]
  \centering
  \begin{minipage}[t]{0.48\textwidth}
    \centering
    \textbf{$\lambda$-Controlled GRPO}\\[0.25em]
    \includegraphics[width=\linewidth]{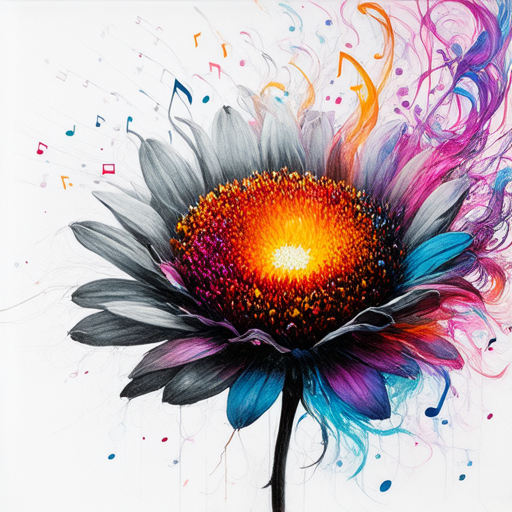}
  \end{minipage}\hfill
  \begin{minipage}[t]{0.48\textwidth}
    \centering
    \textbf{RatioNorm}\\[0.25em]
    \includegraphics[width=\linewidth]{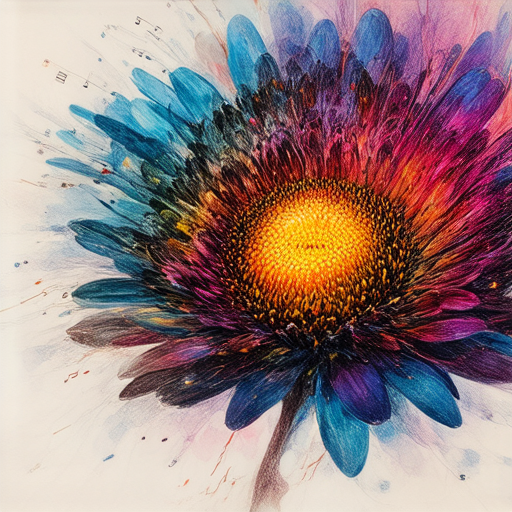}
  \end{minipage}
  \par\smallskip
  {\footnotesize\emph{Prompt.} Synesthesia, musical notes flying away from a
  music flower, charcoal hyperrealistic colourful art print by robert longo and
  alicexz}
  \caption{Held-out test prompt $626$. PickScore: Ours $=0.8725$, RatioNorm
  $=0.7951$, $\Delta=+0.0774$.}
  \label{fig:qual-pickscore-p0626}
\end{figure}

\begin{figure}[h]
  \centering
  \begin{minipage}[t]{0.48\textwidth}
    \centering
    \textbf{$\lambda$-Controlled GRPO}\\[0.25em]
    \includegraphics[width=\linewidth]{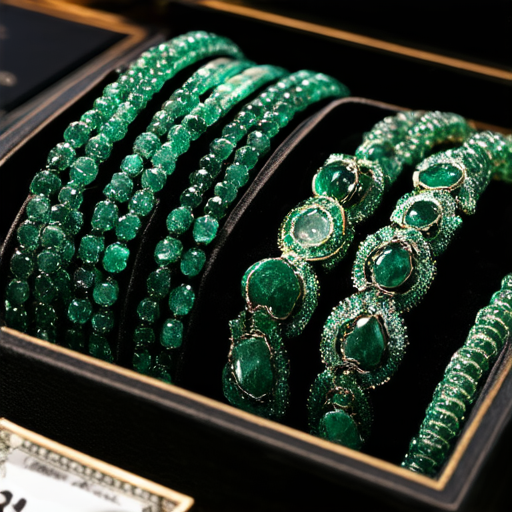}
  \end{minipage}\hfill
  \begin{minipage}[t]{0.48\textwidth}
    \centering
    \textbf{RatioNorm}\\[0.25em]
    \includegraphics[width=\linewidth]{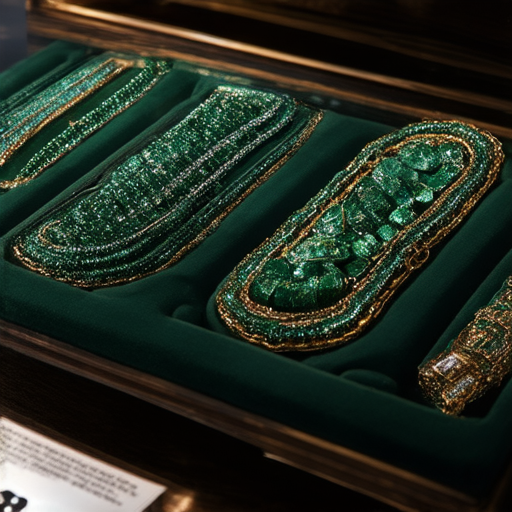}
  \end{minipage}
  \par\smallskip
  {\footnotesize\emph{Prompt.} A set of museum-quality emerald bracelets and
  beads in green in a display box at the auction, 32k, highest resolution,
  hyper realistic}
  \caption{Held-out test prompt $27$. PickScore: Ours $=0.8773$, RatioNorm
  $=0.8050$, $\Delta=+0.0723$.}
  \label{fig:qual-pickscore-p0027}
\end{figure}

\begin{figure}[h]
  \centering
  \begin{minipage}[t]{0.48\textwidth}
    \centering
    \textbf{$\lambda$-Controlled GRPO}\\[0.25em]
    \includegraphics[width=\linewidth]{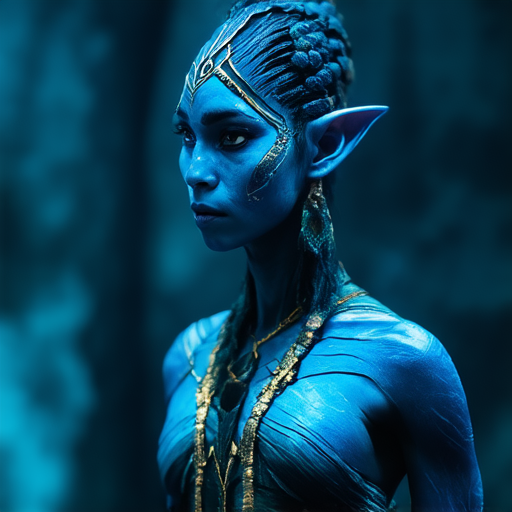}
  \end{minipage}\hfill
  \begin{minipage}[t]{0.48\textwidth}
    \centering
    \textbf{RatioNorm}\\[0.25em]
    \includegraphics[width=\linewidth]{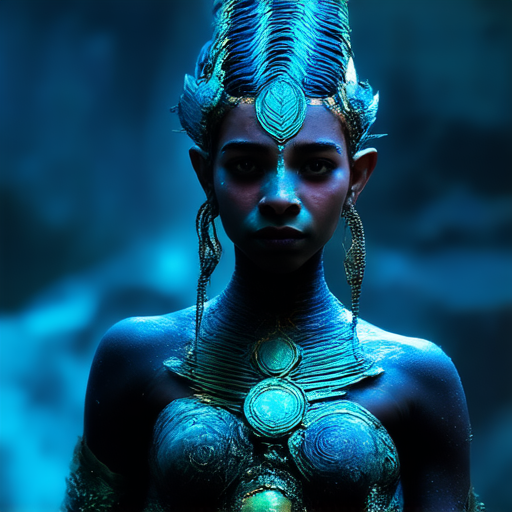}
  \end{minipage}
  \par\smallskip
  {\footnotesize\emph{Prompt.} film still of Neytiri from Avatar}
  \caption{Held-out test prompt $118$. PickScore: Ours $=0.8441$, RatioNorm
  $=0.7749$, $\Delta=+0.0692$.}
  \label{fig:qual-pickscore-p0118}
\end{figure}

\begin{figure}[h]
  \centering
  \begin{minipage}[t]{0.48\textwidth}
    \centering
    \textbf{$\lambda$-Controlled GRPO}\\[0.25em]
    \includegraphics[width=\linewidth]{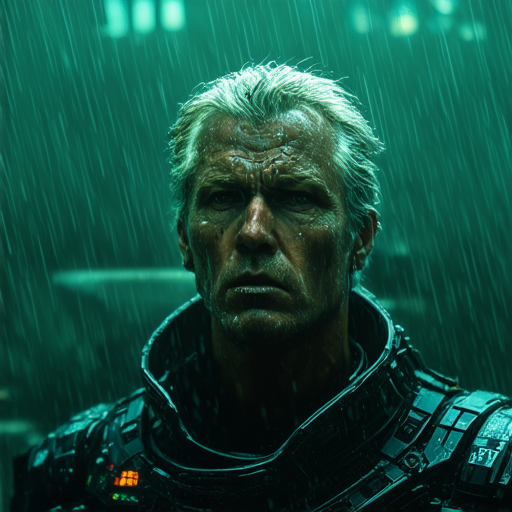}
  \end{minipage}\hfill
  \begin{minipage}[t]{0.48\textwidth}
    \centering
    \textbf{RatioNorm}\\[0.25em]
    \includegraphics[width=\linewidth]{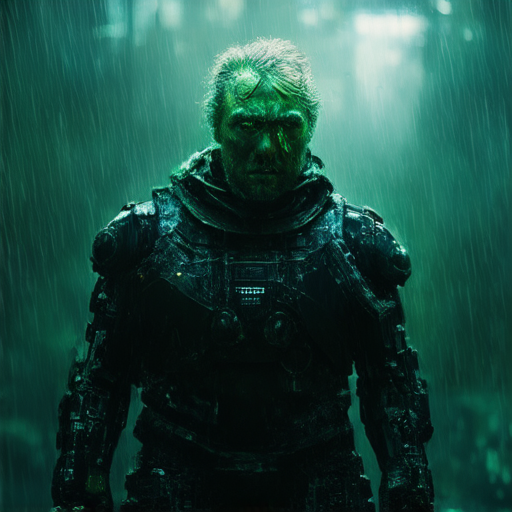}
  \end{minipage}
  \par\smallskip
  {\footnotesize\emph{Prompt.} rutger hauer from blade runner standing in the
  rain, green light, vhs quality, film grain, wet, very sad and reluctant
  expression, wearing a biomechanical suit, scifi, digital painting, artstation,
  concept art}
  \caption{Held-out test prompt $835$. PickScore: Ours $=0.8748$, RatioNorm
  $=0.8071$, $\Delta=+0.0677$.}
  \label{fig:qual-pickscore-p0835}
\end{figure}

\begin{figure}[h]
  \centering
  \begin{minipage}[t]{0.48\textwidth}
    \centering
    \textbf{$\lambda$-Controlled GRPO}\\[0.25em]
    \includegraphics[width=\linewidth]{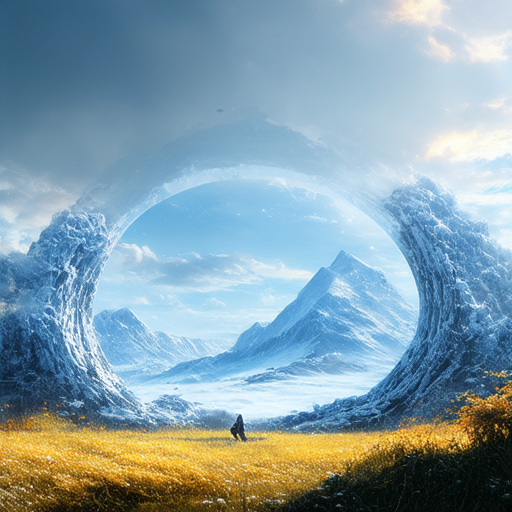}
  \end{minipage}\hfill
  \begin{minipage}[t]{0.48\textwidth}
    \centering
    \textbf{RatioNorm}\\[0.25em]
    \includegraphics[width=\linewidth]{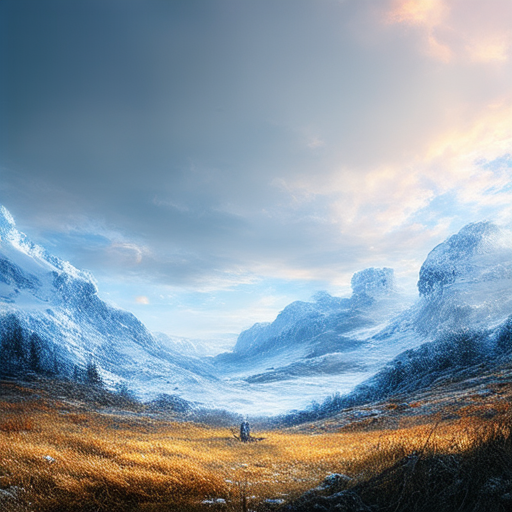}
  \end{minipage}
  \par\smallskip
  {\footnotesize\emph{Prompt.} a portal to snowy mountain, standing in a warm
  summer field}
  \caption{Held-out test prompt $1003$. PickScore: Ours $=0.9609$, RatioNorm
  $=0.8936$, $\Delta=+0.0673$.}
  \label{fig:qual-pickscore-p1003}
\end{figure}

\begin{figure}[h]
  \centering
  \begin{minipage}[t]{0.48\textwidth}
    \centering
    \textbf{$\lambda$-Controlled GRPO}\\[0.25em]
    \includegraphics[width=\linewidth]{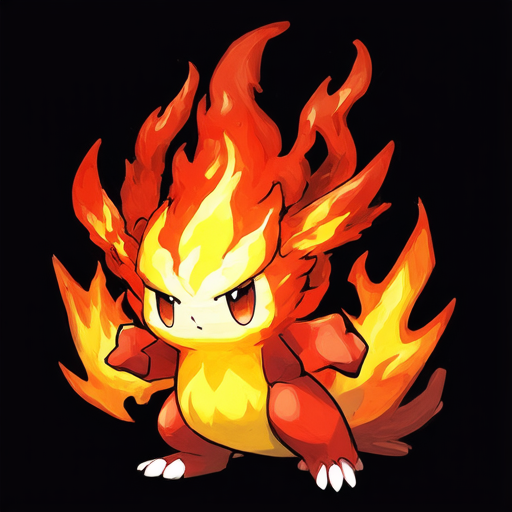}
  \end{minipage}\hfill
  \begin{minipage}[t]{0.48\textwidth}
    \centering
    \textbf{RatioNorm}\\[0.25em]
    \includegraphics[width=\linewidth]{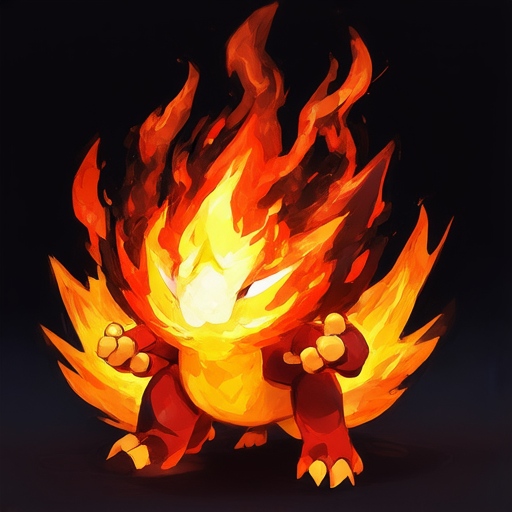}
  \end{minipage}
  \par\smallskip
  {\footnotesize\emph{Prompt.} art print of a cute fire elemental pokemon by
  league of legends. finally evolutionary stage}
  \caption{Held-out test prompt $514$. PickScore: Ours $=0.9264$, RatioNorm
  $=0.8594$, $\Delta=+0.0670$.}
  \label{fig:qual-pickscore-p0514}
\end{figure}

\begin{figure}[h]
  \centering
  \begin{minipage}[t]{0.48\textwidth}
    \centering
    \textbf{$\lambda$-Controlled GRPO}\\[0.25em]
    \includegraphics[width=\linewidth]{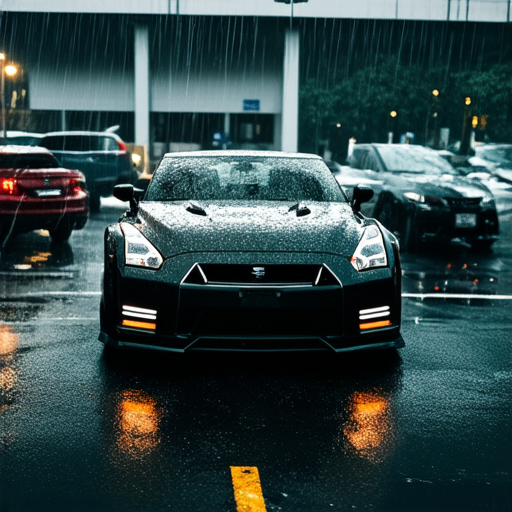}
  \end{minipage}\hfill
  \begin{minipage}[t]{0.48\textwidth}
    \centering
    \textbf{RatioNorm}\\[0.25em]
    \includegraphics[width=\linewidth]{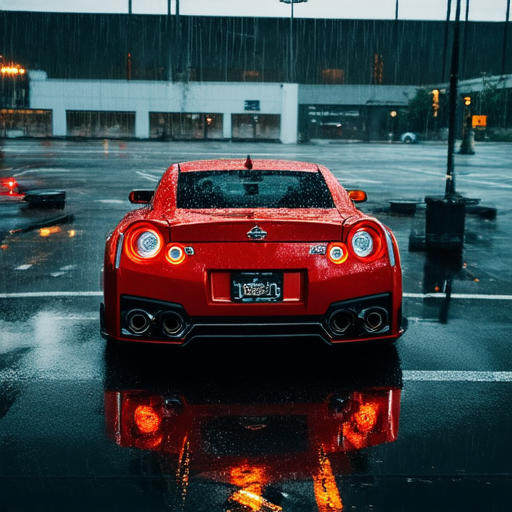}
  \end{minipage}
  \par\smallskip
  {\footnotesize\emph{Prompt.} Nissan GT-R in a parking lot, raining, film
  grain, moody}
  \caption{Held-out test prompt $149$. PickScore: Ours $=0.8943$, RatioNorm
  $=0.8288$, $\Delta=+0.0655$.}
  \label{fig:qual-pickscore-p0149}
\end{figure}

\begin{figure}[h]
  \centering
  \begin{minipage}[t]{0.48\textwidth}
    \centering
    \textbf{$\lambda$-Controlled GRPO}\\[0.25em]
    \includegraphics[width=\linewidth]{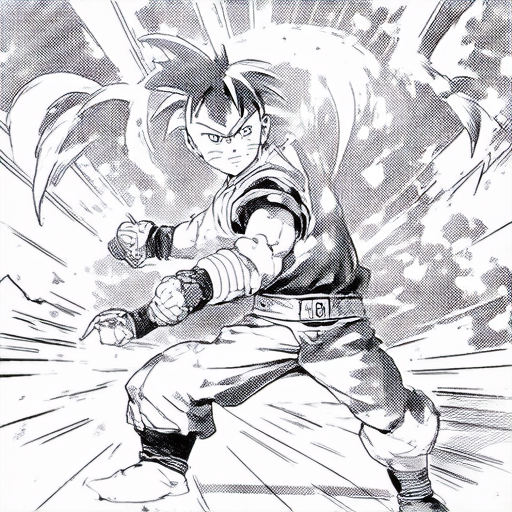}
  \end{minipage}\hfill
  \begin{minipage}[t]{0.48\textwidth}
    \centering
    \textbf{RatioNorm}\\[0.25em]
    \includegraphics[width=\linewidth]{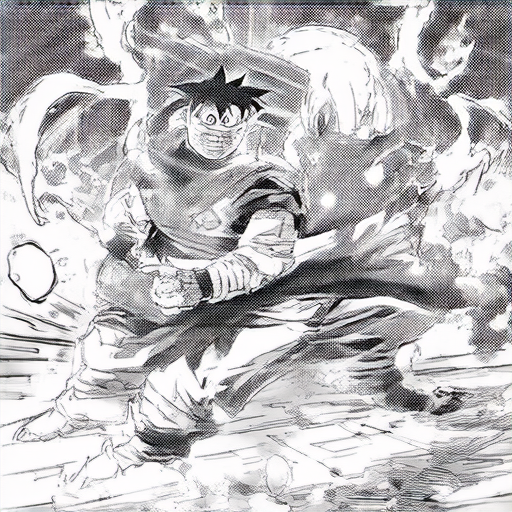}
  \end{minipage}
  \par\smallskip
  {\footnotesize\emph{Prompt.} a manga drawing of naruto fighting goku}
  \caption{Held-out test prompt $130$. PickScore: Ours $=0.8538$, RatioNorm
  $=0.7916$, $\Delta=+0.0622$.}
  \label{fig:qual-pickscore-p0130}
\end{figure}

\begin{figure}[h]
  \centering
  \begin{minipage}[t]{0.48\textwidth}
    \centering
    \textbf{$\lambda$-Controlled GRPO}\\[0.25em]
    \includegraphics[width=\linewidth]{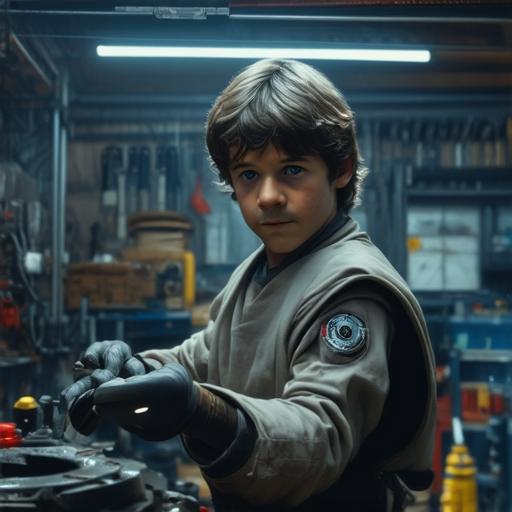}
  \end{minipage}\hfill
  \begin{minipage}[t]{0.48\textwidth}
    \centering
    \textbf{RatioNorm}\\[0.25em]
    \includegraphics[width=\linewidth]{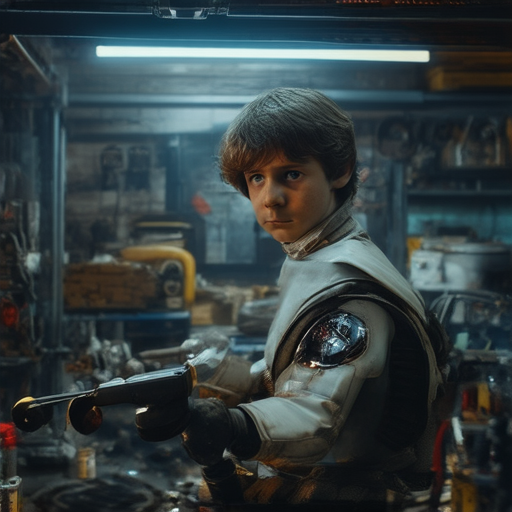}
  \end{minipage}
  \par\smallskip
  {\footnotesize\emph{Prompt.} Movie still of star wars young luke skywalker
  working as mechanic in a garage, extremely detailed, intricate, high
  resolution, hdr, trending on artstation}
  \caption{Held-out test prompt $852$. PickScore: Ours $=0.8465$, RatioNorm
  $=0.7915$, $\Delta=+0.0550$.}
  \label{fig:qual-pickscore-p0852}
\end{figure}

\begin{figure}[h]
  \centering
  \begin{minipage}[t]{0.48\textwidth}
    \centering
    \textbf{$\lambda$-Controlled GRPO}\\[0.25em]
    \includegraphics[width=\linewidth]{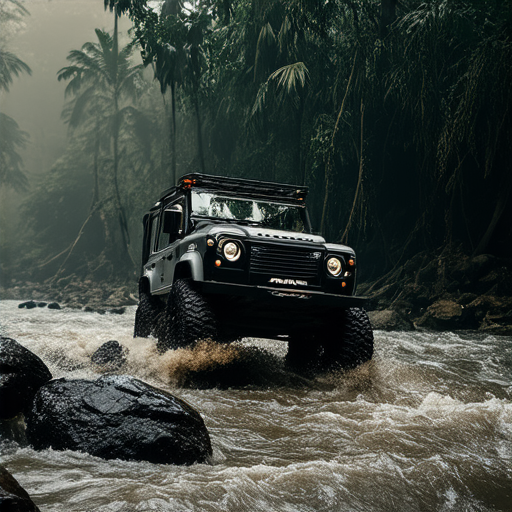}
  \end{minipage}\hfill
  \begin{minipage}[t]{0.48\textwidth}
    \centering
    \textbf{RatioNorm}\\[0.25em]
    \includegraphics[width=\linewidth]{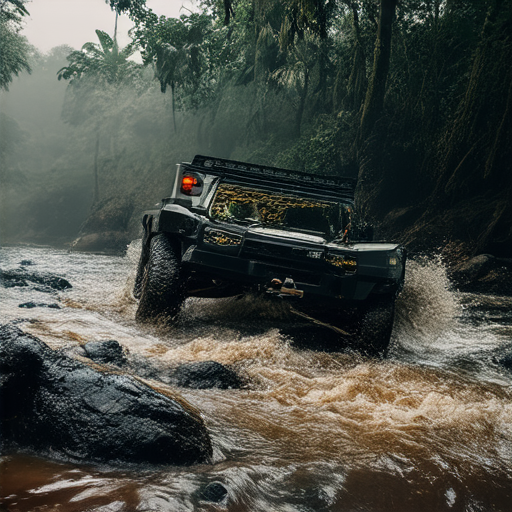}
  \end{minipage}
  \par\smallskip
  {\footnotesize\emph{Prompt.} photo of king kong lifting a landrover defender
  in the jungle river, misty mud rocks, headlights chrome detailing}
  \caption{Held-out test prompt $307$. PickScore: Ours $=0.8618$, RatioNorm
  $=0.8076$, $\Delta=+0.0542$.}
  \label{fig:qual-pickscore-p0307}
\end{figure}

\clearpage
\subsection{PickScore near-ties and RatioNorm wins}

\begin{figure}[h]
  \centering
  \begin{minipage}[t]{0.48\textwidth}
    \centering
    \textbf{$\lambda$-Controlled GRPO}\\[0.25em]
    \includegraphics[width=\linewidth]{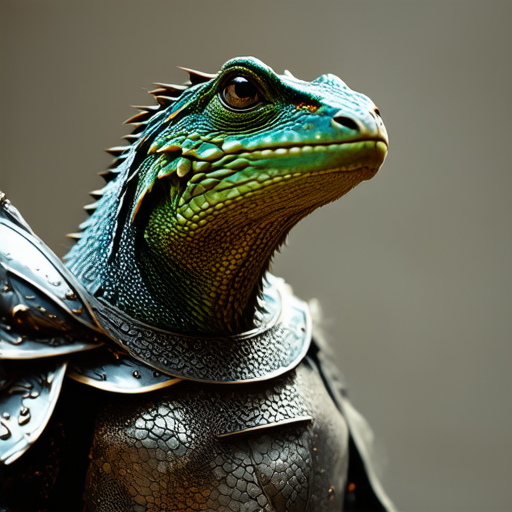}
  \end{minipage}\hfill
  \begin{minipage}[t]{0.48\textwidth}
    \centering
    \textbf{RatioNorm}\\[0.25em]
    \includegraphics[width=\linewidth]{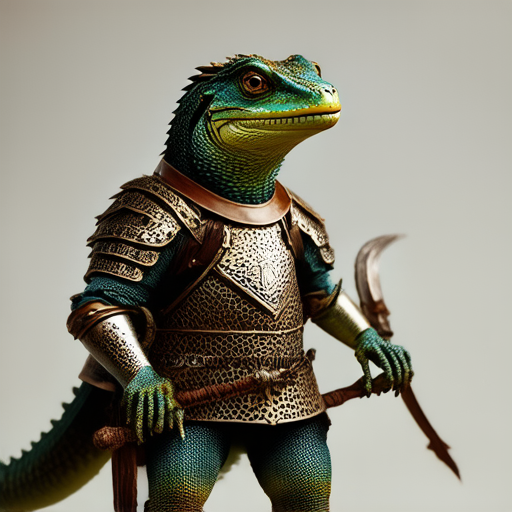}
  \end{minipage}
  \par\smallskip
  {\footnotesize\emph{Prompt.} analog style picture of a lizard dressed as a
  knight in armour}
  \caption{Held-out test prompt $942$. PickScore: Ours $=0.8598$, RatioNorm
  $=0.9018$, $\Delta=-0.0420$.}
  \label{fig:qual-pickscore-p0942}
\end{figure}

\begin{figure}[h]
  \centering
  \begin{minipage}[t]{0.48\textwidth}
    \centering
    \textbf{$\lambda$-Controlled GRPO}\\[0.25em]
    \includegraphics[width=\linewidth]{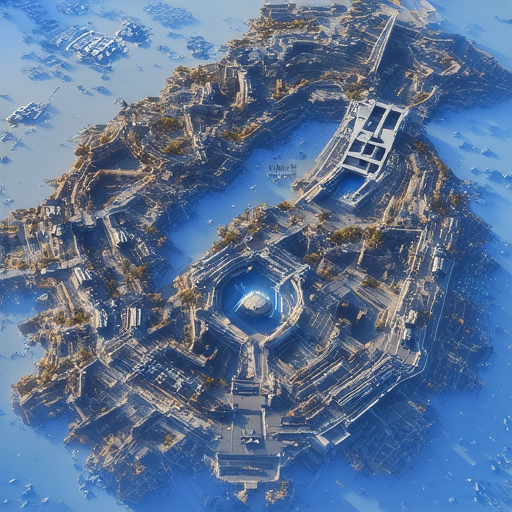}
  \end{minipage}\hfill
  \begin{minipage}[t]{0.48\textwidth}
    \centering
    \textbf{RatioNorm}\\[0.25em]
    \includegraphics[width=\linewidth]{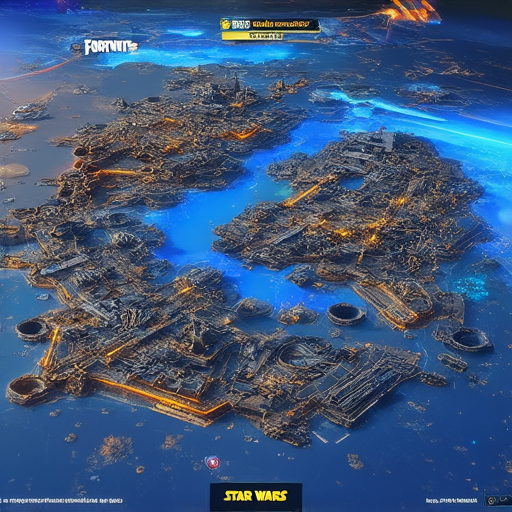}
  \end{minipage}
  \par\smallskip
  {\footnotesize\emph{Prompt.} A fortnite map inspired by Star Wars}
  \caption{Held-out test prompt $597$. PickScore: Ours $=0.8291$, RatioNorm
  $=0.8680$, $\Delta=-0.0389$.}
  \label{fig:qual-pickscore-p0597}
\end{figure}

\clearpage
\subsection{PickScore joint failures (both methods score low)}

\begin{figure}[h]
  \centering
  \begin{minipage}[t]{0.48\textwidth}
    \centering
    \textbf{$\lambda$-Controlled GRPO}\\[0.25em]
    \includegraphics[width=\linewidth]{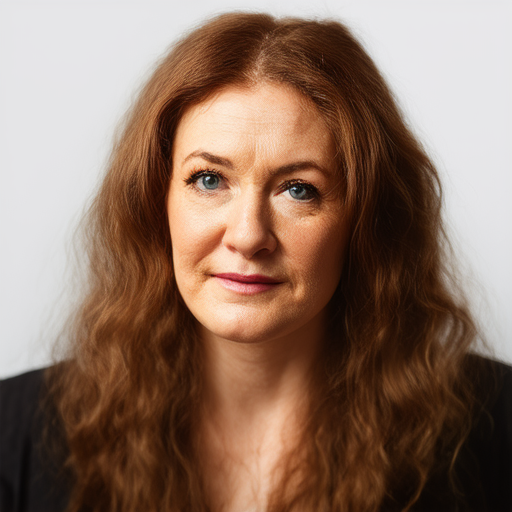}
  \end{minipage}\hfill
  \begin{minipage}[t]{0.48\textwidth}
    \centering
    \textbf{RatioNorm}\\[0.25em]
    \includegraphics[width=\linewidth]{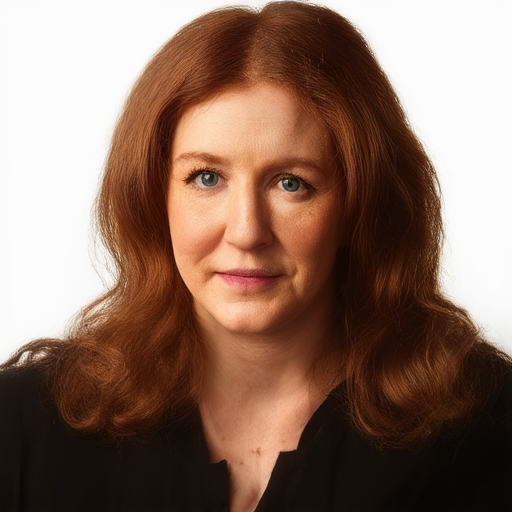}
  \end{minipage}
  \par\smallskip
  {\footnotesize\emph{Prompt.} Young Victoria coren-mitchell, insanely detailed,
  photorealistic, 8k, ultra high resolution, volumetric lighting, taken with
  canon eos,}
  \caption{Held-out test prompt $391$. PickScore: Ours $=0.6642$, RatioNorm
  $=0.6773$, $\Delta=-0.0131$. Both methods land well below the aggregate mean;
  this prompt class (photorealistic named-individual likeness) is a known hard
  case for SD3.5-LoRA regardless of the ratio controller.}
  \label{fig:qual-pickscore-p0391}
\end{figure}

\begin{figure}[h]
  \centering
  \begin{minipage}[t]{0.48\textwidth}
    \centering
    \textbf{$\lambda$-Controlled GRPO}\\[0.25em]
    \includegraphics[width=\linewidth]{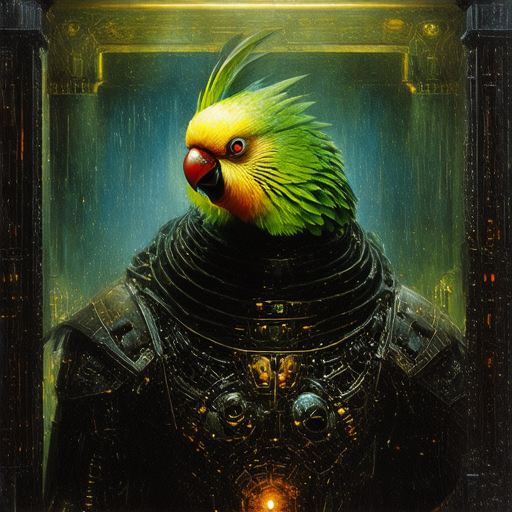}
  \end{minipage}\hfill
  \begin{minipage}[t]{0.48\textwidth}
    \centering
    \textbf{RatioNorm}\\[0.25em]
    \includegraphics[width=\linewidth]{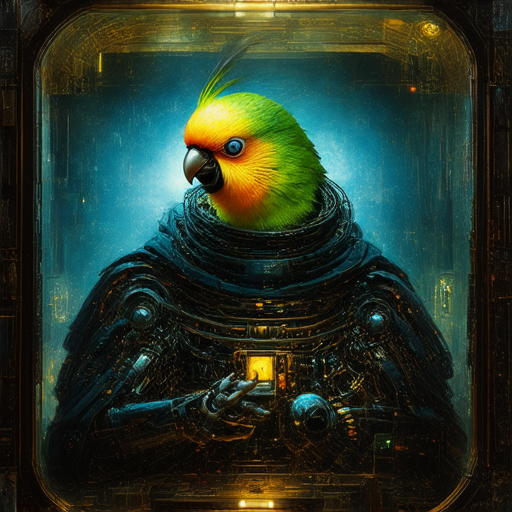}
  \end{minipage}
  \par\smallskip
  {\footnotesize\emph{Prompt.} an epic view of a demonic Rose-ringed parakeet
  cyborg inside an ironmaiden robot, wearing a noble robe, large view, a
  surrealist painting, aralan bean and Philippe Druillet, hiromu arakawa,
  volumetric lighting, detailed shadows}
  \caption{Held-out test prompt $10$. PickScore: Ours $=0.6661$, RatioNorm
  $=0.6675$, $\Delta=-0.0015$. Heavy compositional nesting with named artist
  styles; both methods score near the bottom of the distribution.}
  \label{fig:qual-pickscore-p0010}
\end{figure}

\clearpage

\clearpage
\section{Qualitative held-out GenEval comparisons}
\label{sec:qualitative-gallery-geneval}

This appendix illustrates the compositional GenEval regime with matched held-out
image comparisons between the analytic $\lambda$-Controlled GRPO method and
empirical RatioNorm on SD3.5. Each pair uses the same prompt and the same random
seed at each method's validation-selected checkpoint, and every example is drawn
from the disjoint held-out test complement that was not used for checkpoint
selection. Captions report the prompt, its GenEval category tag, and the raw
per-prompt GenEval and strict-accuracy scores, so the reader can judge each pair.
The first block shows GenEval wins for the analytic method, the second shows joint
successes where both methods pass, and the third shows cases where RatioNorm wins.

\subsection{GenEval hits (analytic method wins)}

\begin{figure}[h]
  \centering
  \begin{minipage}[t]{0.48\textwidth}
    \centering
    \textbf{$\lambda$-Controlled GRPO}\\[0.25em]
    \includegraphics[width=\linewidth]{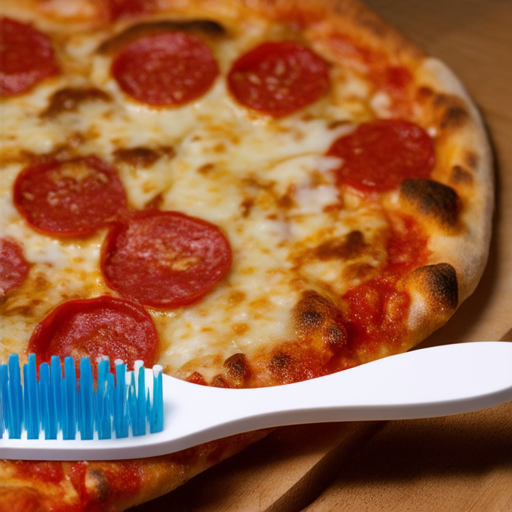}
  \end{minipage}\hfill
  \begin{minipage}[t]{0.48\textwidth}
    \centering
    \textbf{RatioNorm}\\[0.25em]
    \includegraphics[width=\linewidth]{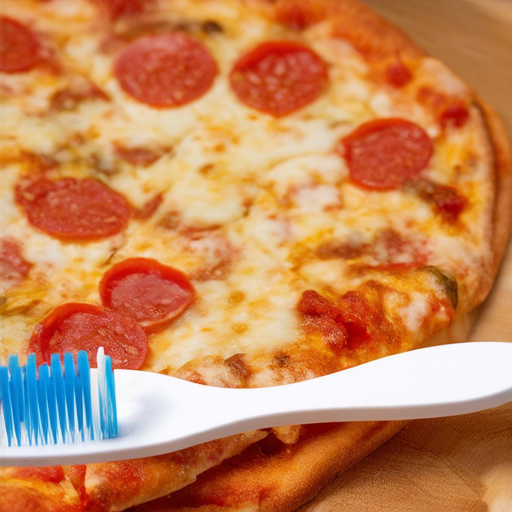}
  \end{minipage}
  \par\smallskip
  {\footnotesize\emph{Prompt.} a photo of a toothbrush below a pizza
  \quad(\texttt{position})}
  \caption{Held-out test prompt $117$. GenEval: Ours $=1.00$, RatioNorm
  $=0.00$. Strict accuracy: Ours $=1$, RatioNorm $=0$.}
  \label{fig:qual-geneval-p117}
\end{figure}

\begin{figure}[h]
  \centering
  \begin{minipage}[t]{0.48\textwidth}
    \centering
    \textbf{$\lambda$-Controlled GRPO}\\[0.25em]
    \includegraphics[width=\linewidth]{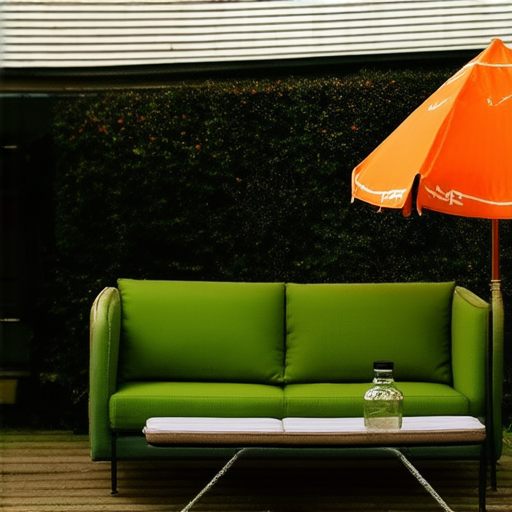}
  \end{minipage}\hfill
  \begin{minipage}[t]{0.48\textwidth}
    \centering
    \textbf{RatioNorm}\\[0.25em]
    \includegraphics[width=\linewidth]{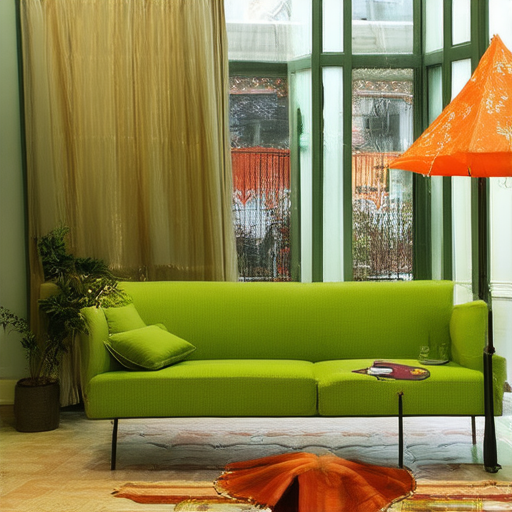}
  \end{minipage}
  \par\smallskip
  {\footnotesize\emph{Prompt.} a photo of a green couch and an orange
  umbrella \quad(\texttt{color\_attr})}
  \caption{Held-out test prompt $124$. GenEval: Ours $=1.00$, RatioNorm
  $=0.00$. Strict accuracy: Ours $=1$, RatioNorm $=0$.}
  \label{fig:qual-geneval-p124}
\end{figure}

\begin{figure}[h]
  \centering
  \begin{minipage}[t]{0.48\textwidth}
    \centering
    \textbf{$\lambda$-Controlled GRPO}\\[0.25em]
    \includegraphics[width=\linewidth]{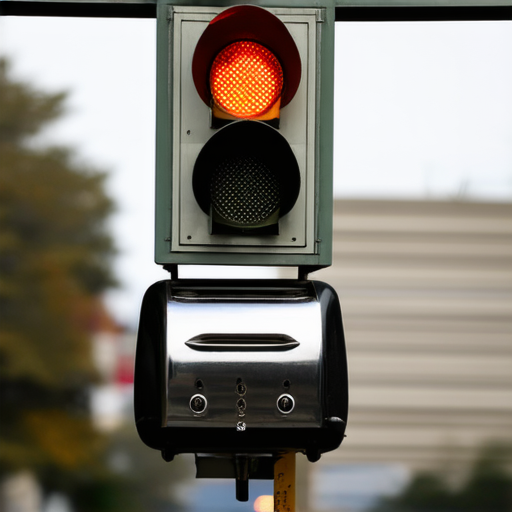}
  \end{minipage}\hfill
  \begin{minipage}[t]{0.48\textwidth}
    \centering
    \textbf{RatioNorm}\\[0.25em]
    \includegraphics[width=\linewidth]{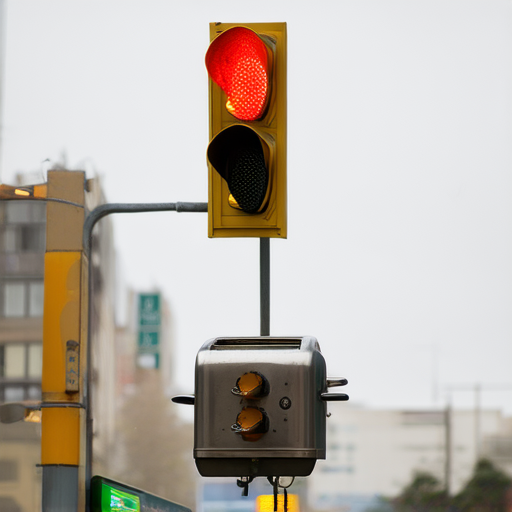}
  \end{minipage}
  \par\smallskip
  {\footnotesize\emph{Prompt.} a photo of a toaster below a traffic light
  \quad(\texttt{position})}
  \caption{Held-out test prompt $234$. GenEval: Ours $=1.00$, RatioNorm
  $=-0.33$. Strict accuracy: Ours $=1$, RatioNorm $=0$.}
  \label{fig:qual-geneval-p234}
\end{figure}

\begin{figure}[h]
  \centering
  \begin{minipage}[t]{0.48\textwidth}
    \centering
    \textbf{$\lambda$-Controlled GRPO}\\[0.25em]
    \includegraphics[width=\linewidth]{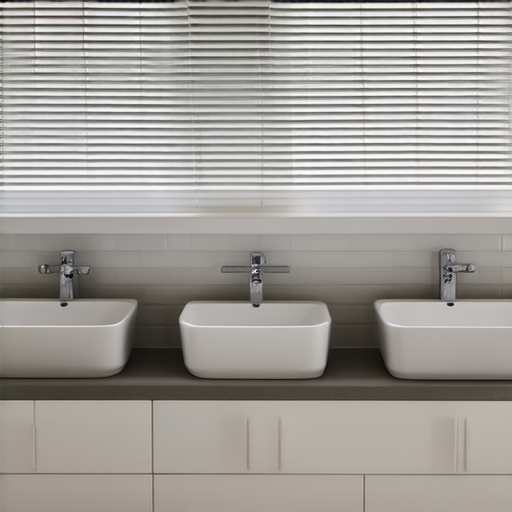}
  \end{minipage}\hfill
  \begin{minipage}[t]{0.48\textwidth}
    \centering
    \textbf{RatioNorm}\\[0.25em]
    \includegraphics[width=\linewidth]{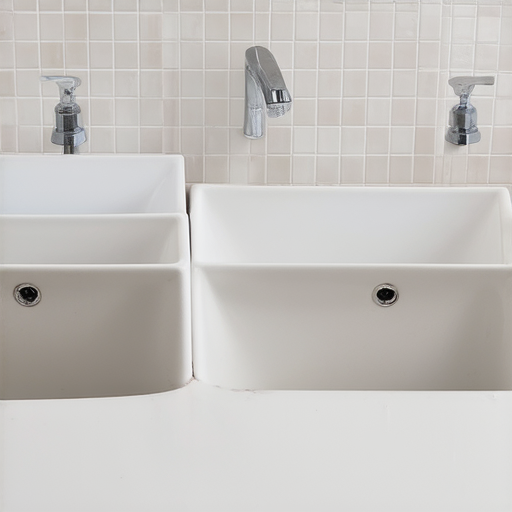}
  \end{minipage}
  \par\smallskip
  {\footnotesize\emph{Prompt.} a photo of three sinks
  \quad(\texttt{counting})}
  \caption{Held-out test prompt $423$. GenEval: Ours $=1.00$, RatioNorm
  $=0.33$. Strict accuracy: Ours $=1$, RatioNorm $=0$.}
  \label{fig:qual-geneval-p423}
\end{figure}

\clearpage
\subsection{GenEval joint successes (both methods pass)}

\begin{figure}[h]
  \centering
  \begin{minipage}[t]{0.48\textwidth}
    \centering
    \textbf{$\lambda$-Controlled GRPO}\\[0.25em]
    \includegraphics[width=\linewidth]{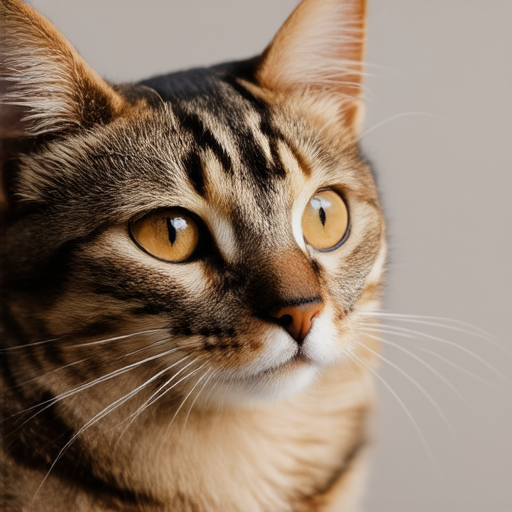}
  \end{minipage}\hfill
  \begin{minipage}[t]{0.48\textwidth}
    \centering
    \textbf{RatioNorm}\\[0.25em]
    \includegraphics[width=\linewidth]{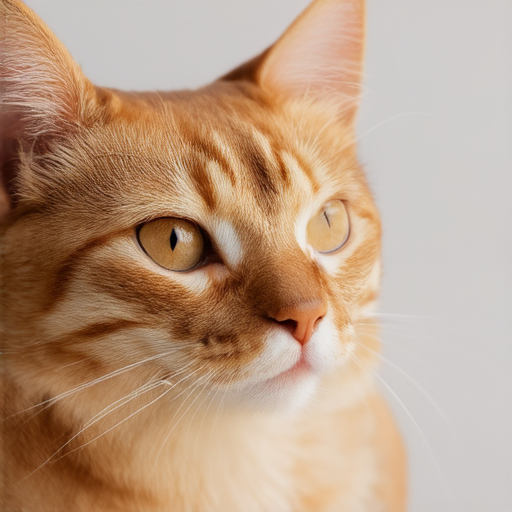}
  \end{minipage}
  \par\smallskip
  {\footnotesize\emph{Prompt.} a photo of a cat
  \quad(\texttt{single\_object})}
  \caption{Held-out test prompt $5$. GenEval: Ours $=1.00$, RatioNorm
  $=1.00$. Both methods produce a strict pass; included to show that the
  matched-seed protocol is not exploiting failure modes of one side.}
  \label{fig:qual-geneval-p5}
\end{figure}

\begin{figure}[h]
  \centering
  \begin{minipage}[t]{0.48\textwidth}
    \centering
    \textbf{$\lambda$-Controlled GRPO}\\[0.25em]
    \includegraphics[width=\linewidth]{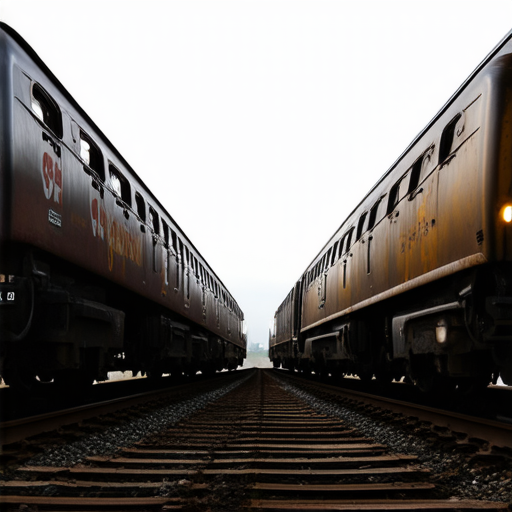}
  \end{minipage}\hfill
  \begin{minipage}[t]{0.48\textwidth}
    \centering
    \textbf{RatioNorm}\\[0.25em]
    \includegraphics[width=\linewidth]{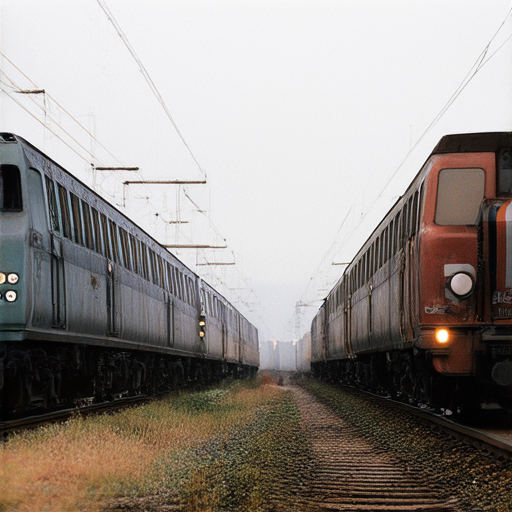}
  \end{minipage}
  \par\smallskip
  {\footnotesize\emph{Prompt.} a photo of two trains
  \quad(\texttt{counting})}
  \caption{Held-out test prompt $8$. GenEval: Ours $=1.00$, RatioNorm
  $=1.00$.}
  \label{fig:qual-geneval-p8}
\end{figure}

\begin{figure}[h]
  \centering
  \begin{minipage}[t]{0.48\textwidth}
    \centering
    \textbf{$\lambda$-Controlled GRPO}\\[0.25em]
    \includegraphics[width=\linewidth]{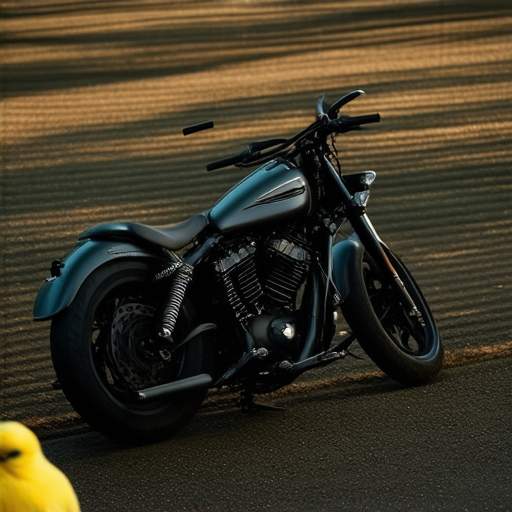}
  \end{minipage}\hfill
  \begin{minipage}[t]{0.48\textwidth}
    \centering
    \textbf{RatioNorm}\\[0.25em]
    \includegraphics[width=\linewidth]{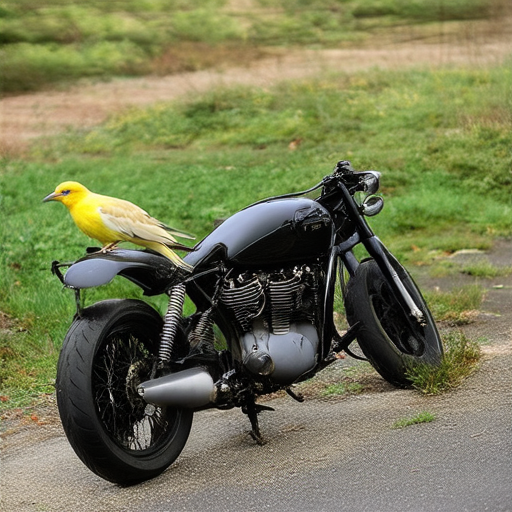}
  \end{minipage}
  \par\smallskip
  {\footnotesize\emph{Prompt.} a photo of a yellow bird and a black
  motorcycle \quad(\texttt{color\_attr})}
  \caption{Held-out test prompt $9$. GenEval: Ours $=1.00$, RatioNorm
  $=1.00$.}
  \label{fig:qual-geneval-p9}
\end{figure}

\begin{figure}[h]
  \centering
  \begin{minipage}[t]{0.48\textwidth}
    \centering
    \textbf{$\lambda$-Controlled GRPO}\\[0.25em]
    \includegraphics[width=\linewidth]{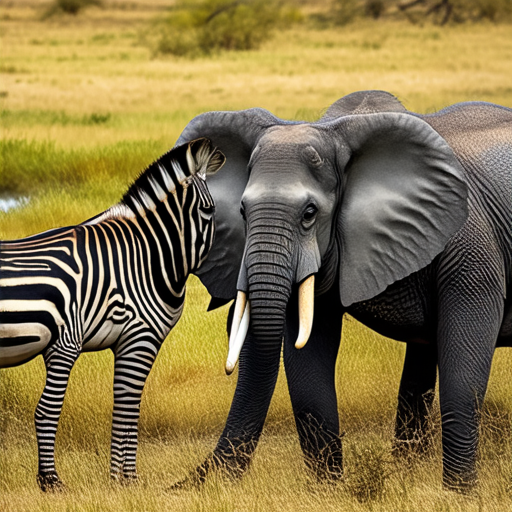}
  \end{minipage}\hfill
  \begin{minipage}[t]{0.48\textwidth}
    \centering
    \textbf{RatioNorm}\\[0.25em]
    \includegraphics[width=\linewidth]{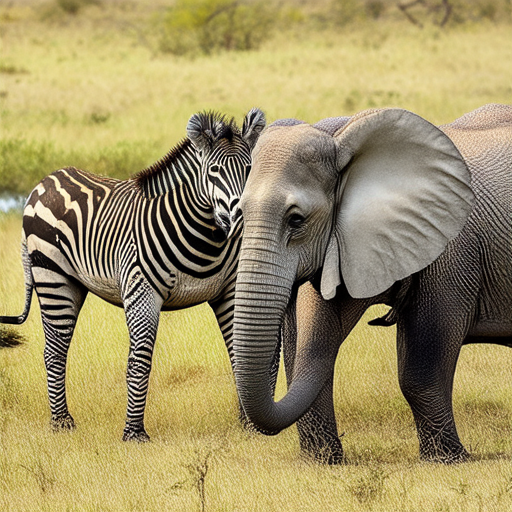}
  \end{minipage}
  \par\smallskip
  {\footnotesize\emph{Prompt.} a photo of a zebra left of an elephant
  \quad(\texttt{position})}
  \caption{Held-out test prompt $25$. GenEval: Ours $=1.00$, RatioNorm
  $=1.00$. A rare \texttt{position} joint success on a category that is
  otherwise near-floor for both methods.}
  \label{fig:qual-geneval-p25}
\end{figure}

\clearpage
\subsection{GenEval losses (RatioNorm wins)}

\begin{figure}[h]
  \centering
  \begin{minipage}[t]{0.48\textwidth}
    \centering
    \textbf{$\lambda$-Controlled GRPO}\\[0.25em]
    \includegraphics[width=\linewidth]{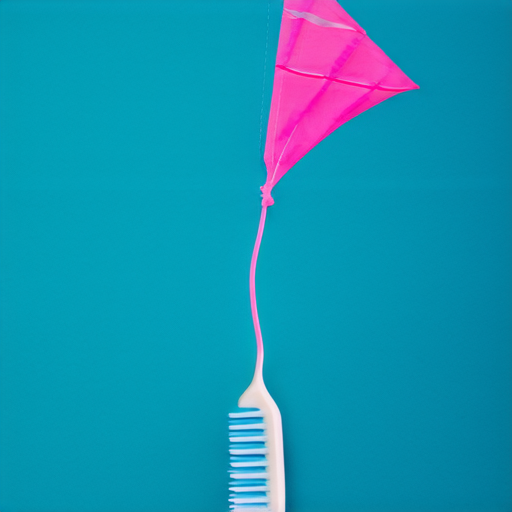}
  \end{minipage}\hfill
  \begin{minipage}[t]{0.48\textwidth}
    \centering
    \textbf{RatioNorm}\\[0.25em]
    \includegraphics[width=\linewidth]{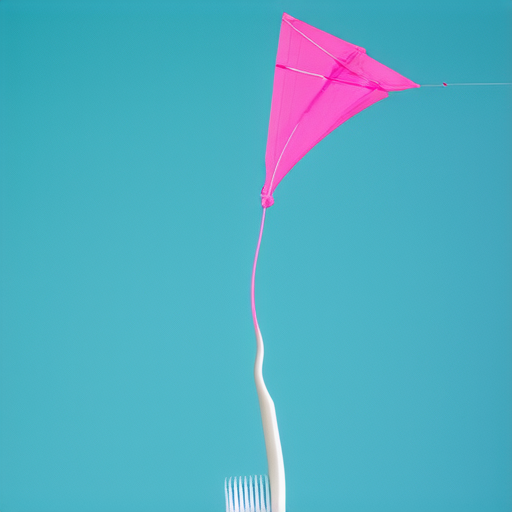}
  \end{minipage}
  \par\smallskip
  {\footnotesize\emph{Prompt.} a photo of a kite above a toothbrush
  \quad(\texttt{position})}
  \caption{Held-out test prompt $39$. GenEval: Ours $=0.33$, RatioNorm
  $=1.00$. Strict accuracy: Ours $=0$, RatioNorm $=1$.}
  \label{fig:qual-geneval-p39}
\end{figure}

\begin{figure}[h]
  \centering
  \begin{minipage}[t]{0.48\textwidth}
    \centering
    \textbf{$\lambda$-Controlled GRPO}\\[0.25em]
    \includegraphics[width=\linewidth]{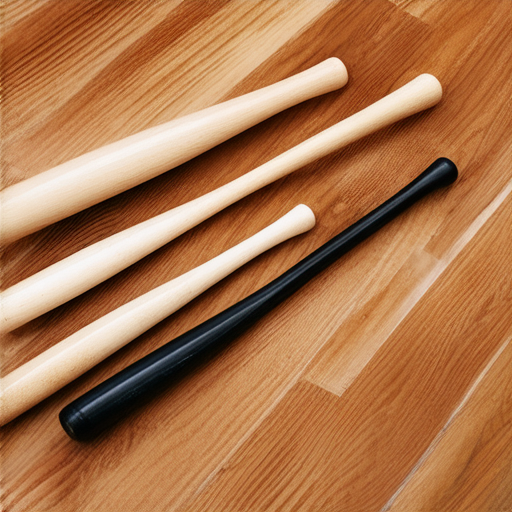}
  \end{minipage}\hfill
  \begin{minipage}[t]{0.48\textwidth}
    \centering
    \textbf{RatioNorm}\\[0.25em]
    \includegraphics[width=\linewidth]{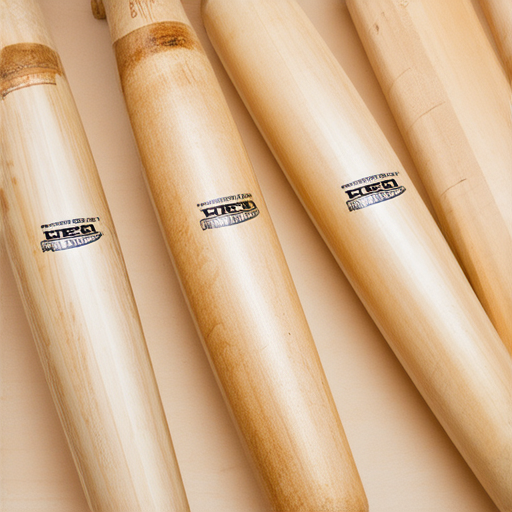}
  \end{minipage}
  \par\smallskip
  {\footnotesize\emph{Prompt.} a photo of three baseball bats
  \quad(\texttt{counting})}
  \caption{Held-out test prompt $62$. GenEval: Ours $=0.00$, RatioNorm
  $=1.00$. Strict accuracy: Ours $=0$, RatioNorm $=1$.}
  \label{fig:qual-geneval-p62}
\end{figure}

\begin{figure}[h]
  \centering
  \begin{minipage}[t]{0.48\textwidth}
    \centering
    \textbf{$\lambda$-Controlled GRPO}\\[0.25em]
    \includegraphics[width=\linewidth]{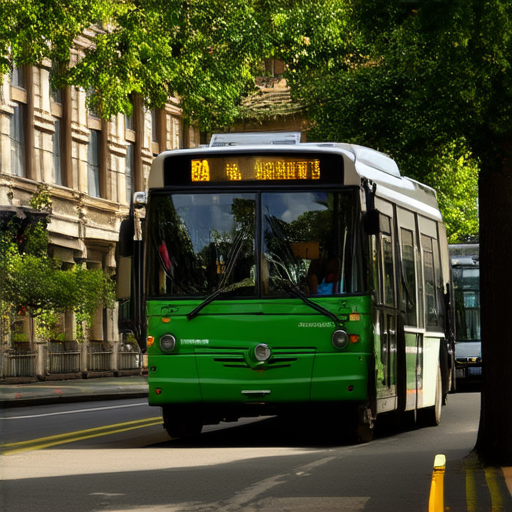}
  \end{minipage}\hfill
  \begin{minipage}[t]{0.48\textwidth}
    \centering
    \textbf{RatioNorm}\\[0.25em]
    \includegraphics[width=\linewidth]{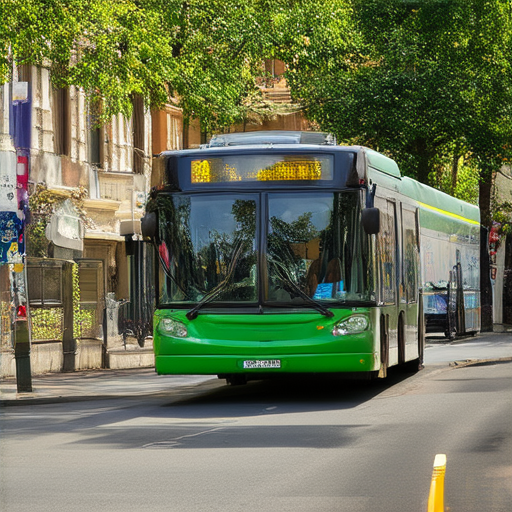}
  \end{minipage}
  \par\smallskip
  {\footnotesize\emph{Prompt.} a photo of a green bus
  \quad(\texttt{colors})}
  \caption{Held-out test prompt $105$. GenEval: Ours $=0.00$, RatioNorm
  $=1.00$. Strict accuracy: Ours $=0$, RatioNorm $=1$. The bus is
  recognized in both images, but only the RatioNorm rendering is judged
  strictly green by the GenEval color classifier.}
  \label{fig:qual-geneval-p105}
\end{figure}

\begin{figure}[h]
  \centering
  \begin{minipage}[t]{0.48\textwidth}
    \centering
    \textbf{$\lambda$-Controlled GRPO}\\[0.25em]
    \includegraphics[width=\linewidth]{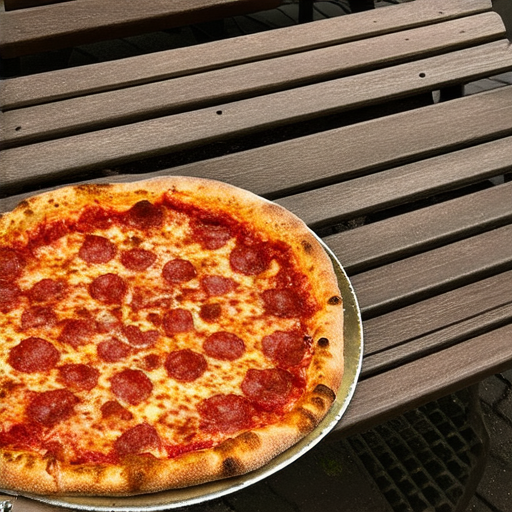}
  \end{minipage}\hfill
  \begin{minipage}[t]{0.48\textwidth}
    \centering
    \textbf{RatioNorm}\\[0.25em]
    \includegraphics[width=\linewidth]{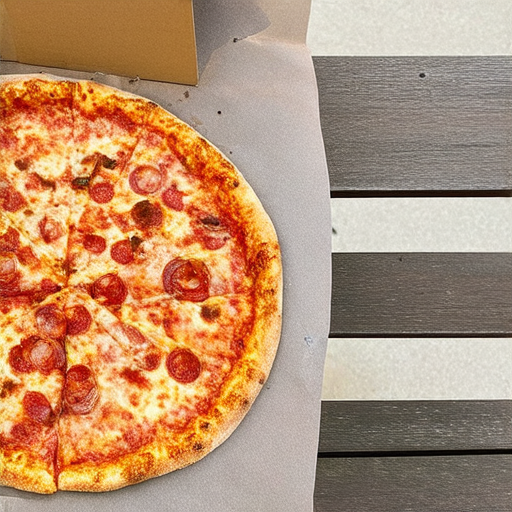}
  \end{minipage}
  \par\smallskip
  {\footnotesize\emph{Prompt.} a photo of a pizza and a bench
  \quad(\texttt{two\_object})}
  \caption{Held-out test prompt $136$. GenEval: Ours $=0.00$, RatioNorm
  $=1.00$. Strict accuracy: Ours $=0$, RatioNorm $=1$.}
  \label{fig:qual-geneval-p136}
\end{figure}

\end{document}